\def\eqref#1{equation~\ref{#1}}
\def\1{\bm{1}}
\def\rvw{{\mathbf{w}}}
\def\rvx{{\mathbf{x}}}
\DeclareMathAlphabet{\mathsfit}{\encodingdefault}{\sfdefault}{m}{sl}
\SetMathAlphabet{\mathsfit}{bold}{\encodingdefault}{\sfdefault}{bx}{n}
\def\gA{{\mathcal{A}}}
\def\gH{{\mathcal{H}}}
\def\gL{{\mathcal{L}}}
\def\gN{{\mathcal{N}}}
\def\gS{{\mathcal{S}}}
\def\gT{{\mathcal{T}}}
\def\gX{{\mathcal{X}}}
\def\gY{{\mathcal{Y}}}
\def\sR{{\mathbb{R}}}
\newcommand{\E}{\mathbb{E}}
\DeclareMathOperator*{\argmin}{arg\,min}
\renewcommand{\mid}{~\vert~}
\newcommand{\cL}{\mathcal{L}}
\newcommand{\cN}{\mathcal{N}}
\newcommand{\bbR}{\mathbb{R}}
\newtheorem{definition}{Definition}
\newtheorem*{definition*}{Definition}
\newtheorem*{lemma*}{Lemma}
\newtheorem*{theorem*}{Theorem}
\newtheorem{lemma}{Lemma}
\newtheorem{theorem}{Theorem}
\newtheorem{proposition}{Proposition}
\newtheorem{assumption}{Assumption}
\newcommand{\x}{{\mathbf x}}
\newcommand{\w}{{\mathbf w}}
\newcommand{\reals}{\mathbb{R}}
\newcommand{\N}{\mathcal{N}}
\DeclarePairedDelimiterX{\inp}[2]{\langle}{\rangle}{#1, #2}
\DeclarePairedDelimiterX{\abs}[1]{|}{|}{#1}
\renewcommand{\eqref}[1]{Equation~(\ref{#1})}
\newlength\myindent 
\newcommand*{\@indep}[2]{%
  \sbox0{$#1\perp\m@th$}
  \sbox2{$#1=$}
  \sbox4{$#1\vcenter{}$}
  \rlap{\copy0}
  \dimen@=\dimexpr\ht2-\ht4-.2pt\relax
  \kern\dimen@
  {#2}%
  \kern\dimen@
  \copy0 
} 
\newcommand{\auroc}{AUROC}
\newcommand{\auprc}{AUPRC}
\newcommand{\oscr}{OSCR}
\newcommand{\source}{Source-only}
\newcommand{\conoc}{ConOp}
\newcommand{\ours}{CoLOR}
\newcommand{\pulse}{PULSE}
\newcommand{\shot}{SHOT}
\newcommand{\zoc}{ZOC}
\newcommand{\arpl}{ARPL}
\newcommand{\cac}{CAC}
\newcommand{\osda}{Open Set Domain Adaptation}
\newcommand{\osr}{OSDA}
\newcommand{\uPUnew}{uPU}
\newcommand{\nnPUnew}{nnPU}
\newcommand{\DDnew}{DD}
\newcommand{\oursold}{CoNoC}
\newcommand{\BODA}{BODA}
\newcommand{\DD}{DD}
\newcommand{\uPU}{uPU}
\newcommand{\nnPU}{nnPU}
\newcommand{\anna}{ANNA}
\newcommand{\resnet}{ResNet50}
\newcommand{\vit}{ViT-L/14}
\newcommand{\problem}{Open-Set Domain Adaptation}
\newcommand{\Psource}{P_{\gS}}
\newcommand{\Ptarget}{P_{\gT}}
\newcommand{\datasource}{S_{\gS}}
\newcommand{\datatarget}{S_{\gT}}
\newcommand{\Plabel}[1]{P_{\gT, #1}}
\newcommand{\mub}{\boldsymbol{\mu}}
\newcommand{\etab}{\boldsymbol{\eta}}
\newcommand{\attendto}[1]{{\color{cyan}{}}} 
\newcommand{\yw}[1]{{\color{purple}{#1}}}
\newcommand{\notcheckmark}{{\cmark}\textsuperscript{\textcolor{black}{\kern-0.75em{\bf---}}}}
\newcommand{\cmark}{\ding{51}}%
\renewcommand{\yw}[1]{\ignorespaces}
\lstdefinestyle{mystyle}{
    commentstyle=\color{OliveGreen},
    keywordstyle=\color{BurntOrange},
    numberstyle=\tiny\color{black!60},
    stringstyle=\color{MidnightBlue},
    basicstyle=\ttfamily,
    breakatwhitespace=false,
    breaklines=true,
    captionpos=b,
    keepspaces=true,
    numbers=left,
    numbersep=5pt,
    showspaces=false,
    showstringspaces=false,
    showtabs=false,
    tabsize=2
}
\tikzset{
    -Latex,auto,node distance =1 cm and 1 cm,semithick,
    state/.style ={circle, draw, minimum width = 0.7 cm},
    detstate/.style ={rectangle, draw, minimum width = 0.7 cm, minimum height = 0.7 cm},
    point/.style = {circle, draw, inner sep=0.04cm,fill,node contents={}},
    bidirected/.style={Latex-Latex,dashed},
    el/.style = {inner sep=2pt, align=left, sloped}
}
\crefname{assumption}{assumption}{assumptions}
\setlist[enumerate]{leftmargin=15pt}
\setlist[itemize]{leftmargin=20pt}
\title{Open-Set Domain Adaptation Under Background Distribution Shift: Challenges and A Provably Efficient Solution}
\author{\name Shravan Chaudhari \email schaud35@jh.edu \\
\addr Department of Computer Science, Johns Hopkins University, Baltimore, MD, USA.
\AND
\name Yoav Wald \email yoav.wald@technion.ac.il \\
\addr Faculty of Data and Decision Sciences, Technion, Haifa, Israel. \\
\addr Center for Data Science, New York University, New York, NY, USA. \\
\AND
\name Suchi Saria \email ssaria1@jhu.edu\\
\addr Department of Computer Science, Johns Hopkins University, Baltimore, MD, USA.\\
\addr Bayesian Health, New York, NY, USA.
}
\begin{document}

\maketitle

\begin{abstract}
As we deploy machine learning systems in the real world, a core challenge is to maintain a model that is performant even as the data shifts.
Such shifts can take many forms: new classes may emerge that were absent during training, a problem known as open-set recognition, and the distribution of known categories may change.
Guarantees on open-set recognition are mostly derived under the assumption that the distribution of known classes, which we call \emph{the background distribution}, is fixed. In this paper we develop \ours{}, a method that is guaranteed to solve open-set recognition even in the challenging case where the background distribution shifts. We prove that the method works under benign assumptions that the novel class is separable from the non-novel classes, and provide theoretical guarantees that it outperforms a representative baseline in a simplified overparameterized setting.
We develop techniques to make \ours{} scalable and robust, and perform comprehensive empirical evaluations on image and text data. The results show that \ours{} significantly outperforms existing open-set recognition methods under background shift. Moreover, we provide new insights into how factors such as the size of the novel class influences performance, an aspect that has not been extensively explored in prior work. Code is available at \url{https://github.com/Shra1-25/CoLOR}.

\end{abstract}


\section{Introduction}\label{sec:introduction}
Adapting Machine Learning models to shifts in the data distribution is pivotal to ensuring their robustness and safety in real world applications \citep{quinonero2008dataset, pmlr-v139-koh21a} for fields like healthcare \citep{finlayson2021clinician}, autonomous driving \citep{filos2020can, wong2020identifying} and more broadly in computer vision \citep{bendale2016towards}. Maintaining robustness to distribution shift in classification problems includes both identifying familiar objects under new conditions, while detecting the long tail of object categories that has not been observed in the past. In this work we study the problem of adaptation when these two types of distribution shift occur concurrently. Specifically, we address the emergence of novel categories \citep{panareda2017open} alongside shifting distributions of known classes, a scenario referred to as Open-Set Domain Adaptation (OSDA).  

\begin{figure}[h!]
    \centering
    \includegraphics[width=0.99\linewidth]{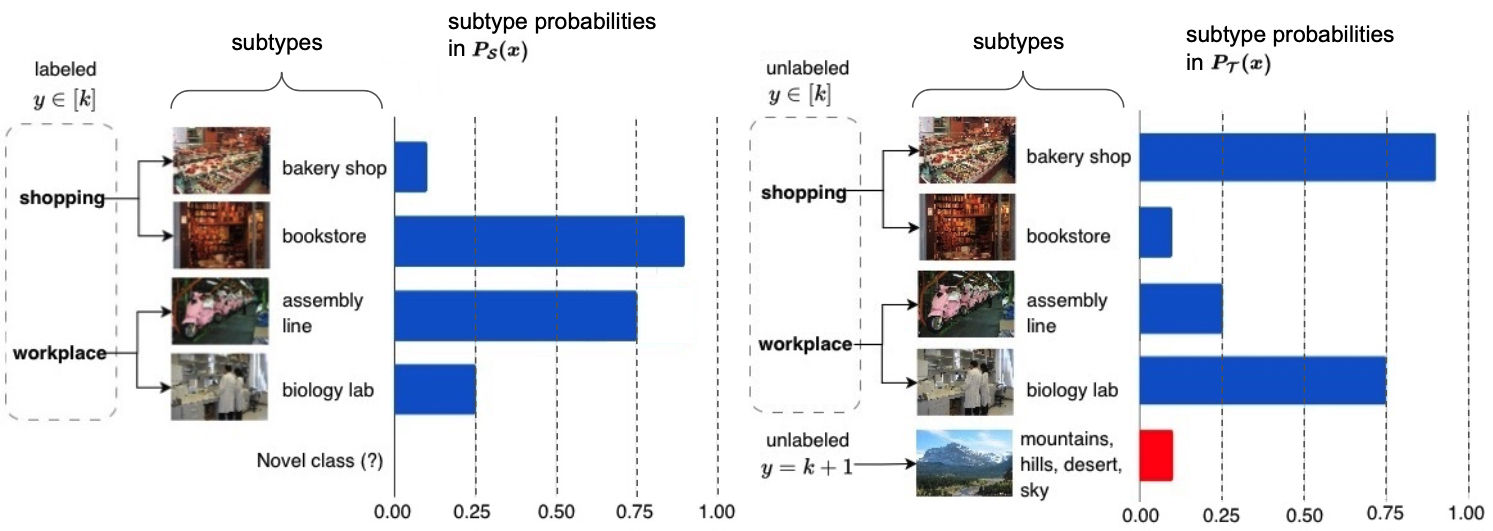}
    \caption{An example of \osr{} with background shift. There are two known classes, `shopping' and `workplace'. Each class can further have subtypes, but the learner has no access to the labels of these subtypes. \textbf{Left:} Source distribution with all samples from $k=2$ known categories with distribution $\Psource(x)$. \textbf{Right} Target distribution $\Ptarget(x)$ with samples from known categories as well as new classes (mountains, hills, etc.). $\Ptarget(x)$  consists of varying proportions of known categories and their latent subtypes along with shift due to emerging novel classes. Unlike pure label, covariate or conditional shift, background shift allows any of $\Psource(y)$, $\Psource(x|y)$ or $\Psource(x)$ to change in $\Plabel{}$ as long as the support overlap and separability assumptions are followed.}
    \label{fig:illustration}
\end{figure}
Let our training data be $\datasource$. The core task here is to detect the novel classes in target data $\datatarget$ collected under some conditions different from those observed at training time \citep{panareda2017open} while maintaining good performance over the existing classes. The emergence of a novel class in itself constitutes a shift between $\datasource$ and $\datatarget$, however, in real world scenarios \emph{it is likely not the only distribution shift that occurs} \citep{garg22adaptation, wald2023birds}. Our study concentrates on scenarios where the novel class exhibits a notable degree of separability from the known classes, and a distribution shift with overlapping supports between source and target distributions exists within the known classes, which we refer to as `background shift'. Figure \ref{fig:illustration} depicts a scenario of background shift within indoor scene images taken from the SUN397 dataset \citep{jianxiong2010sun397} while the novel class belongs to an outdoor scene category. Arguably, this is a very common case in domains such as healthcare, e.g. classifying histopathology slides for known and novel tumor types, where the background shift is due to differences in geographic locations \citep{pocevivciute2025out}. 

Several aspects of our work are understudied in the \osr{} literature; \emph{(i)} formally characterizing the shifts of interest; \emph{(ii)} developing methods with formal guarantees (finite sample bounds in our case) that are also effective in practice; \emph{(iii)} examining the effect of the novel class size on performance. Most research primarily address open-set recognition instead of \osr{} because they assume there is no distribution shift for known classes \citep{ge2017openmax, Neal_2018_ECCV, Lin_2019, Chen_2021, zeng-etal-2021-modeling, vaze2022openset, Esmaeilpour_Liu_Robertson_Shu_2022}. Other methods tackle \osr{} without formally characterizing `domain shifts' \citep{liang2021reallyneedaccesssource, li2021domainconsensus}. They typically validate their methods empirically on benchmarks such as Office-Home \citep{VenkateswaraECP17deephashing} or VisDA \citep{visda2017}, where the shift is simulated by changing the domain from real to sketch images or from synthetic to real world images. However, since these methods lack formal guarantees and rely on limited types of simulated shifts, their effectiveness beyond these datasets remain uncertain.
\citet{garg22adaptation} tackle \osr{} under shifting class probabilities in $\datatarget$ (label shift) and show improved results compared to prior methods. To this end, our theoretical and empirical results emphasize the regimes where our method is expected to outperform alternatives that do not address background shift in \osr{} properly. In particular, we show that our approach is more effective when the novel class constitutes a relatively small proportion of the target distribution—an important scenario in practice. Reliably identifying novel classes enables updating models or deferring to human oversight in safety-critical settings such as healthcare and autonomous driving \citep{amodei2016concrete, hendrycks2016baseline}.

Our contributions to these aspects of OSDA are thus the following:
\begin{itemize}[leftmargin=0.5cm]
	\item \textbf{Leverage and improve constrained learning rule for \osr{}:}
    We develop \ours{} (Constrained Learning for Open-set Recognition), a scalable method for \osr{} (\cref{sec:solutions}) that extends \citet{wald2023birds}'s small-model, no-known-class setting via architectural and algorithmic improvements effective for large models and high-dimensional image and text data. Our theory explains why and when the method performs favorably in the large-scale regime.
    \item{\textbf{Formal results on identifiability under background shift:}}
    We characterize sufficient conditions for identifiability of the novel class and analyze sample efficiency of \ours{} on a problem capturing key aspects of \osr{} with expressive models (\cref{sec:overparam_theory}). While \citet{garg22adaptation} solve \osr{} under label shift with assumptions weaker than our separability, we prove these fail under the more general background shift (Lemma~\ref{lem:impossibility_lemma}). In a linear-Gaussian framework, we further characterize regimes where \ours{} significantly outperforms a standard domain-discriminator baseline, especially when novel classes are rare.
    
    \item \textbf{Comprehensive empirical validation across diverse data modalities and datasets:} 
    Extensive experiments across image and text classification (\cref{sec:experiments}) confirm our theoretical findings. We observe that background shift substantially degrades \osr{} baselines, whereas our enhanced adaptation of the constrained learning rule, \ours{}, achieves superior performance on CIFAR100 \citep{Krizhevsky09learningmultiplecifar}, Amazon Reviews \citep{ni_justifying_2019} and SUN397 \citep{jianxiong2010sun397}. Notably, \ours{} remains robust even when the novel class constitutes a small portion of $\datatarget$, where other methods tend to struggle.
\end{itemize}

\textbf{Paper roadmap.} \cref{sec:prob_def} formally defines the problem; \cref{section:identifiability} establishes impossibility results and the assumptions under which it is solvable; \cref{sec:conoc_review} introduces a 0--1 loss learning rule in a linear Gaussian setting, and \cref{sec:overparam_theory} extends it to the overparameterized regime relevant to large-scale models, characterizing when it outperforms standard baselines; \cref{sec:solutions,sec:experiments} present the scalable algorithm and supporting experiments, with limitations discussed in \cref{sec:conclusion}.

\section{Related Work} \label{sec:related_lit}
Our work is closely related to several lines of work involving novelty detection, summarized below. \\
\textbf{Out-of-Distribution (OOD) Detection and Open Set Recognition:} Most works on detecting novel instances locate novelties within $\datatarget$ without adapting to it, using $\datasource$ alone to compute an ``anomaly score'' $s(\rvx)$ that ranks examples by their likelihood of being novel \citep{ruff2021unifying, Esmaeilpour_Liu_Robertson_Shu_2022}. Open Set Recognition (OSR) \citep{bendale2016towards, ge2017openmax, liu2018open, xu2019open, saito2018open, vaze2022openset} also detects novelties while classifying known classes correctly, but the learner has access to unlabeled $\datatarget$, allowing $s(\rvx)$ to adapt to the specific novel class; some OSR methods further refine $s(\rvx)$ via generative models on known instances \citep{Neal_2018_ECCV, Kong_2021_ICCV_opengan}. However, both OSR and OOD methods degrade under shift between $\Psource$ and $\Plabel{[k]}$ \citep{scholkopf2001estimating, cao2023anomalydetectiondistributionshift}, motivating shift-adaptive methods. \\
\textbf{\osda{}:} Works that seek to adapt to $\Plabel{[k]}$, while correctly classifying known instances and detecting novel ones, fall into this category.
Recent \osr{} works primarily focus on narrow domain shifts such as real-to-sketch or synthetic-to-real \citep{wen24crossdomain, panareda2017open, choe2024osdasemanticsegmentation, Hur2023unida, Zhu2023unida}, limiting generalizability to broader shifts like background shift; \citet{garg22adaptation} show these methods are not versatile across datasets and propose a method specifically for label shift. Other works leverage multi-modal foundation models like ChatGPT, DALL-E and CLIP for \osr{} \citep{qu2023lmc, Esmaeilpour_Liu_Robertson_Shu_2022}, but their pretraining data is not publicly accessible, making it infeasible to curate truly novel classes or shifts and to evaluate fairly. Most \osr{} methods lack formal guarantees or a clear characterization of the shifts they address; well-studied Positive and Unlabeled (PU) learning offers a framework to develop more generalizable methods with guarantees \citep{garg22adaptation, wald2023birds}.\\
\textbf{PU-Learning Under Distribution Shift:} It is formally equivalent to OSDA where k=1. In the absence of distribution shift, effective solutions with formal guarantees already exist, primarily based on adjustments to ``Domain Discriminator'' models (that are trained to distinguish source and target distributions i.e. $\Psource$ and $\Ptarget$ respectively) \citep{garg2021mixture, duplessis2014analysis, blanchard2010semi, elkan2008learning}. 
Under distribution shift, domain discriminator type methods still have guarantees when given infinite data \citep{gerych2022recovering}. In practice, however, we will see that they underperform (\cref{table:results_sumary}), even when the separability assumption (see \cref{ass:separability}) holds.
\citet{garg22adaptation} propose a $k$-way PU learning based method for identifying unseen classes under special case of label shift among $k$ known classes, given certain assumptions. We show that these assumptions are insufficient to handle the more general case of background shift, where additional conditions are required for reliable recovery.  
Furthermore, \citet{wald2023birds} propose an algorithm with finite-sample guarantees and demonstrate its performance on small-scale models. In contrast, our formal analysis targets the large-scale, overparameterized learning regime, where their results no longer apply \citep{belkin2019reconciling}. We extend these ideas to the setting of \osr{} under background shift.

\section{Open-Set Domain Adaptation under Background Shift – Problem Definition and Identifiability} \label{sec:setting}
In this section, we review the problem setting and develop the theory that motivates our solution. \Cref{sec:prob_def} reviews the problem definition and \cref{section:identifiability} discusses conditions for identifiability, such that a learner provided with unlimited data can solve the problem. Notably, we show that conditions studied in prior work for the case of label shift, are insufficient for identifiability under background shift. 

\subsection{Problem Definition}
\label{sec:prob_def}
For a prediction task with $k$ classes, we are interested in detecting the emergence of a novel class where $y=k+1$, while classifying a set of known classes $y=1, \ldots, k$. To treat this formally, we assume a learner is provided with two datasets, the training set $\datasource = \{ \rvx_i, y_i \}_{i=1}^{N_{\gS}}$ and an unlabelled target dataset $\datatarget = \{\rvx_i\}_{i=1}^{N_{\gT}}$. The datasets $\datasource$ and $\datatarget$ are sampled i.i.d from joint distribution over $(\rvx\in \gX,y\in\gY)$, $\Psource$ and $\Ptarget$ respectively, which take on values in $\gX, \gY=[k+1]$ and $\Psource(y=k+1) = 0$. $\Ptarget$ is a mixture distribution:
\begin{align} \label{eq:ptarget}
\Ptarget(\rvx, y) = (1-\alpha)\Plabel{[k]}(\rvx, y) + \alpha\Plabel{k+1}(\rvx).
\end{align}
Here we use the notation $\alpha := \Ptarget(y=k+1)$, $\Plabel{k+1}(\rvx) := \Ptarget(\rvx \mid y=k+1)$, and $\Plabel{[k]}(\rvx, y) := \Ptarget(\rvx, y \mid y \neq k+1)$.
Our goal is to learn a model that minimizes the error over all classes, including the novel $k+1$-th class. All errors will be measured w.r.t $\Ptarget$. To do this, we learn a model $h: \gX\rightarrow \sR^{k+1}$ that gives a score for each class. We also assume that a novelty detection score $h_{\mathrm{novel}}: \gX\rightarrow \sR$, derived from the overall model, is given and we will define a binary decision threshold at $0$ where $h_{\mathrm{novel}}(\rvx) > 0$ means $\rvx$ is classified as a novelty.\footnote{The threshold value is arbitrary and can be changed depending on the specific setting.} For example, $h_{\mathrm{novel}}(\rvx) = h(\rvx)_{k+1} - \max_{\hat{y}\in{[k]}} h(\rvx)_{\hat{y}}$ or $h(\rvx)_{k+1}$ when $h(\rvx)$ returns a probability distribution.

The success of an \osr{} model may be determined by its classification accuracy on data from $\Ptarget$, $\mathcal{R}^{l_{01}}_\mathcal{T}(h) = \E_{y,\rvx\sim \Ptarget}[\mathrm{arg}\max_{\hat{y}}\{h(\rvx)_{\hat{y}}\} \neq y]$, but since we often wish to emphasize novelty detection and not treat the novel class like any other known class, several other notions of success have been defined in the literature. Our theory focuses on the simple case in which $k=1$, where such considerations are unnecessary, and standard metrics for binary classification can be used.
In our experiments, as we mention again in \cref{sec:experiments}, we also use the \oscr{} metric of \citet{dhamija2018oscr}. Below we formally define the problem setting and other performance metrics we use for our theory. 
\begin{definition} [Open Set Domain Adaptation with Background Shift]\label{def:osda_problem}
An \osr{} problem with hypothesis class $\mathcal{H}$ is a tuple $\langle \Psource(\rvx, y), \Plabel{[k]}(\rvx,y),\Plabel{k+1}(\rvx), \alpha, N_\mathcal{S}, N_\gT, N_{\gT,0}, N_{\gT,1} \rangle$, where we are given $N_\gS$ and $N_{\gT}$ i.i.d examples from $\Psource$ and $\Ptarget$ (defined in \cref{eq:ptarget}) respectively. The problem undergoes \textbf{background shift} whenever $\Psource(\rvx, y) \neq \Plabel{[k]}(\rvx, y)$.

We further define $\beta(h; t) = \mathbb{E}_{\rvx\sim\datasource}\left[\mathbf{1}\left[ h_{\mathrm{novel}}(\rvx) > t \right]\right]$, $\alpha(h) = \mathbb{E}_{\rvx\sim\datatarget}\left[\mathbf{1}\left[ h_{\mathrm{novel}}(\rvx) > t \right]\right]$ as the False Positive Rate (FPR) and recall respectively, of a novelty detector $h_{\mathrm{novel}}$ derived from $h\in\mathcal{H}$ where $t\in{\bbR}$ is a scalar decision threshold. For notational convenience, we will define $\beta(h):=\beta(h; 0)$ and likewise for $\alpha(h)$.

\end{definition}
Note that we use background shift as a general term for shifts in the distribution of known classes, i.e. between $\Psource$ and $\Plabel{[k]}$, that includes common instances such as label, covariate, and conditional shift, studied in domain adaptation \citep{lalou2025skadabench}. The term `background shift' is used instead of generic `shift', to emphasize the open-set aspect of the problem which is our focus, where the emergence of the novel class (via the addition of $\Plabel{k+1}$) together with the background shift constitute the entire shift between $\Psource$ and $\Ptarget$.

Without any assumptions on the relationship between $\Psource$ and $\Ptarget{[k]}$ it is impossible to obtain a guarantee of better than chance accuracy \cite{garg22adaptation, david2010impossibility}. This holds even under strong simplifying assumptions, such as the existence of a model that solves the problem without errors: that is, $\mathcal{R}^{l_{01}}_\mathcal{T}(h^*)=0$ for some hypothesis $h^*$ \footnote{see Prop.~1 in \citet{garg22adaptation}, Prop.~2.1 in \citet{wald2023birds}, or discussion in \citet{bekker2019beyond} for the results of this flavor.}. In this paper we will work with the overlap and separability assumptions Combining the conditions for background with the above assumption about the existence of an oracle $h^*$, we obtain the separability assumption with which we will work.

\subsection{Necessary and Sufficient Assumptions for OSDA under Background Shift} \label{section:identifiability}


\begin{assumption}[separability] \label{ass:separability}
There exists $h^*\in{\gH}$ such that $\mathcal{R}_{\mathrm{novel}}^{l_{01}}(h^*)=0$. Furthermore, it holds that $ \mathrm{Supp}(\Plabel{[k]}) \subseteq \mathrm{Supp}(\Psource)$ i.e. background shift exists between $\Psource$ and $\Plabel{[k]}$
\end{assumption}
According to this assumption, we consider any background shift between $\Psource$ and $\Ptarget{[k]}$ that maintains support overlap such that $ \mathrm{Supp}(\Plabel{[k]}) \subseteq \mathrm{Supp}(\Psource)$.
Moreover, both parts of the assumption are rather intuitive, and hold at least approximately for many problems we consider in \osr{} or novelty detection, such as detection of novel semantic visual concepts. Let us emphasize two aspects of our assumption.

\textbf{Separability and background shift characteristics in known classes.} The separability assumption is required only for the novel class $Y=k+1$ vs. the known ones, we do not explicitly limit the $k$ known classes to be separable amongst themselves. Such an assumption would have placed the shift between $\Psource$ and $\Plabel{[k]}$ purely in the realm of covariate shift \citep{shimodaira2000improving}. However, background shift with \cref{ass:separability} facilitates more general forms of label shift and covariate shift. In our experiments we mostly create distribution shifts such that $\Plabel{[k]} (X \vert Y) \neq \Psource (X \mid Y)$ which follow the definition of background shift that are underexplored in existing works. 

\textbf{Insufficiency of less stringent assumptions.} While generalization bounds for domain adaptation are well known from seminal works such as \citet{bendavid2010adaptation}, they are less common in the Open Set learning literature, hence let us focus on this aspect of our problem, i.e. detecting the novel class. To the best of our knowledge the only characterized sufficient and necessary conditions for (non-separable) OSDA are those in \citet{garg22adaptation}, which \emph{hold only for label shift}, where it is assumed that $\Psource(X \mid Y=y) = \Ptarget(X \mid Y=y)$ for all $y\in{[k]}$. For this special case, they propose two assumptions which are sufficient (when added on top of the label-shift assumption) to guarantee $h^*$ can be learned from observed data. Their first assumption is \emph{(Strong Positivity)}: there exists $X_{sep}\in{\gX}$ such that $\Plabel{k+1}(X_{sep}) = 0$ and the matrix $[\Psource(\rvx \mid y)]_{\rvx\in{X_{sep}}, y\in{[k]}}$ is full rank and diagonal. We show that once more general shifts than label shifts are allowed, e.g. background/covariate shift, this condition is no longer sufficient and in fact no algorithm can guarantee better-than-chance detection. The proof for this is in \ref{sec:proof}.
\begin{lemma} \label{lem:impossibility_lemma}
Let $\gA$ be an algorithm for \problem. There are distributions $\Psource, \Plabel{[k]}, \Plabel{k+1}$ such that the problem satisfies strong positivity, and $\exists h^*\in{\gH}$ for which $\mathcal{R}^{l_{01}}_{\gT}(h^*)=0$, while $\E_{S_{\gS}, S_{\gT}}\left[ \mathcal{R}^{l_{01}}_{\gT}(\gA(S_{\gS}, S_{\gT})) \right] \geq 0.5$.
\end{lemma}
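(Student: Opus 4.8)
The proof is a two-point (Le Cam-style) argument: I will construct two Open-Set Domain Adaptation instances that induce \emph{the same} distribution over everything the learner observes, yet for which the set of novel points is completely exchanged, so that every (possibly randomized) $\gA$ must err on at least one of them. It suffices to do this for $k=1$ — the regime our theory targets, where $\mathcal{R}^{l_{01}}_{\gT}$ is exactly the novelty-detection error — and the general case follows by adjoining $k-1$ known classes that are present in $\Psource$, each with a dedicated ``pure anchor'' point (so the $k\times k$ matrix in strong positivity is genuinely full rank and diagonal), but absent from $\Ptarget$.

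\textbf{Construction.} Work on the finite space $\gX=\{x_0,x_1,x_2\}$ with $\gH$ the set of all functions $\gX\to\sR^2$ (enlarging $\gH$ only makes the oracle easier to realize and does not constrain $\gA$, whose failure is argued information-theoretically). In \emph{both} instances let $\Psource=\mathrm{Unif}(\{x_0,x_1,x_2\})$ with every label equal to $1$, and set $\alpha=\tfrac12$. In instance $\mathcal{I}_1$ put $\Plabel{1}=\delta_{x_1}$, $\Plabel{2}=\delta_{x_2}$; in instance $\mathcal{I}_2$ put $\Plabel{1}=\delta_{x_2}$, $\Plabel{2}=\delta_{x_1}$. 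Then in both cases $\Ptarget$ has $\gX$-marginal $\mathrm{Unif}(\{x_1,x_2\})$ via \eqref{eq:ptarget}; only the latent label attached to $x_1$ and $x_2$ differs between the two instances.

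\textbf{Verification and lower bound.} For each $\mathcal{I}_i$: strong positivity holds with $X_{sep}=\{x_0\}$, since $\Plabel{2}(x_0)=0$ and $[\Psource(x_0\mid 1)]=[1/3]$ is full rank (and trivially diagonal); since $\mathrm{Supp}(\Plabel{1})$ and $\mathrm{Supp}(\Plabel{2})$ are disjoint singletons, the hypothesis that assigns the correct label to $x_1$ and $x_2$ belongs to $\gH$ and attains $\mathcal{R}^{l_{01}}_{\gT}(h^*)=0$; and $\mathrm{Supp}(\Plabel{1})\subseteq\mathrm{Supp}(\Psource)$ with $\Psource(\cdot\mid 1)\neq\Plabel{1}$, so each instance is a genuine background shift, not a label shift, i.e.\ exactly the regime of \cref{ass:separability}. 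Now the learner observes a labeled draw from $\Psource$ and an \emph{unlabeled} draw from $\Ptarget$ (hence from $\Ptarget|_{\gX}$); both laws coincide across $\mathcal{I}_1,\mathcal{I}_2$, so $\halg:=\gA(\datasource,\datatarget)$ has one and the same distribution $\mu$ under either instance. Writing $\mathcal{R}^{(i)}(h)$ for $\mathcal{R}^{l_{01}}_{\gT}(h)$ evaluated under $\mathcal{I}_i$ and $\hat h:=\argmax_{\hat y}h(\cdot)_{\hat y}\in\{1,2\}$, for every fixed $h$
\[
\mathcal{R}^{(1)}(h)+\mathcal{R}^{(2)}(h)=\tfrac12\sum_{j\in\{1,2\}}\big(\mathbf{1}[\hat h(x_j)\neq 1]+\mathbf{1}[\hat h(x_j)\neq 2]\big)\ \ge\ 1 ,
\]
since exactly one indicator in each inner pair equals $1$. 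Taking $\E_{h\sim\mu}$ of both sides gives $\E[\mathcal{R}^{(1)}(\gA)]+\E[\mathcal{R}^{(2)}(\gA)]\ge 1$, hence $\max_i\E[\mathcal{R}^{(i)}(\gA)]\ge\tfrac12$; the maximizing instance supplies the distributions asserted by the lemma.

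\textbf{Main obstacle.} The lower-bound step is routine; the real work is the construction, which must satisfy all constraints simultaneously — strong positivity, a zero-error oracle in $\gH$, the support containment defining background shift, and observational equivalence of the two instances. Observational equivalence forces $\alpha=\tfrac12$ and a full ``swap'' of the target labels, and the conceptual reason this is possible is precisely the gap the lemma flags: under background shift $\Plabel{1}$ is only required to live inside $\mathrm{Supp}(\Psource)$, so the known class in $\Ptarget$ may concentrate exactly where one would expect the novel class, while strong positivity constrains only $\Psource$ and the \emph{support} of $\Plabel{2}$ and so cannot break the symmetry. For $k>1$ one further checks that the adjoined classes remain inert while still realizing the diagonal full-rank matrix, which is immediate from the pure-anchor construction.
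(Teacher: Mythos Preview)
Your proof is correct and follows essentially the same Le Cam two-point approach as the paper: build two instances with identical $\Psource$ and identical $\Ptarget$-marginal on $\gX$, but with the roles of the known-class and novel-class distributions swapped, then use that any hypothesis's risks on the two instances sum to at least $1$. The only notable difference is that the paper carries out the construction with $k=2$ on a four-point space (so that the strong-positivity matrix is a genuine $2\times 2$ diagonal), whereas you work with $k=1$ on three points, which makes the strong-positivity check a triviality about a $1\times 1$ matrix; your version is a bit more economical, the paper's makes the assumption feel less degenerate, but the argument is the same.
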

The second assumption proposed in \citet{garg22adaptation} is separability as defined in \cref{ass:separability}; however, since their assumption is combined with the label shift assumption, the methods they develop are tailored to that scenario and do not apply to our problem. \footnote{\Cref{lem:impossibility_lemma} has a similar flavor to Prop.~3.1 in \citet{wald2023birds}, but it is stronger. Our result shows impossibility under an additional assumption of strong positivity. This is significant in the context we consider here, as \citet{garg22adaptation} give guarantees on \osr{} with label shift under this strong positivity assumption, but our lemma shows that this is impossible under the background shifts we consider.}

\Cref{ass:separability} guarantees that given an infinitely large sample and an optimization oracle, we can learn to identify the novelties. It is also desirable to have guarantees on the required sample size. In this paper we focus on the finite sample guarantees of detecting the novel class in the overparameterized regime, since results on classification of remaining $k$ non-novel classes can be obtained from prior work on domain adaptation, see for instance \citet{ben2010theory} and the survey of \citet{redko2020survey}. 
This is to formally study and understand best practices in learning expressive models for open-set problems under background shift.

\section{Constrained Open-Set 
Learning Rules and Domain Discriminators} 
\label{sec:conoc_review}
We start by proposing the underlying statistical learning rule which our method build upon and call it simplified \ours{}. We further analyze and compare it with other standard domain discriminator baseline over a synthetic problem setup using linear Gaussians. We focus on the case $k=1$, i.e., PU-learning \citep{bekker2020learning}, to study the novelty detection aspect of \osr{}. 
The domain discrimination baseline trains a classifier, via Empirical Risk Minimization (ERM), to distinguish between $\Psource$ and $\Ptarget$ such that for a 0-1 loss function $l_{01}$, $h_{\mathrm{\text{DD}}} = \argmin_{h\in\gH}\frac{1}{N_\gS+N_\gT}\big(\sum_{\rvx \in \datasource} l_{01}(h(\rvx),0) + \sum_{\rvx\in\datatarget}l_{01}(h(\rvx),1)\big)$. Similarly, the learning rule for constrained learning is 
\begin{equation}\label{eq:color01}
h_{\mathrm{\text{color}}} = \argmin_{h\in\gH}\frac{1}{N_\gS}\big(\sum_{\rvx\in \datasource}l_{01}(h(\rvx),0)\big)  \text{ s.t. } \frac{1}{N_\gT}\sum_{\rvx\in\datatarget} l_{01}(h(\rvx),1)\leq 1-\hat{\alpha}    
\end{equation}
In practice, these rules are often implemented with the logistic loss
or a close  variation, e.g. \citep{kiryo2017positive}.

As noted in \cref{sec:related_lit}, standard PU-learning methods without distribution shift rely on domain discriminators. Constrained learning rules were studied in \citet{blanchard2010semi} and later extended under distribution shift by \citet{wald2023birds} using small models. 
Their analysis, however, is limited to classical learning regimes that assume infinite data, whereas in practice, we prefer high-capacity models and only have limited data. A regime that is beyond the scope of their theory. In this section, we review both approaches and explain why applying these guarantees to such high-capacity models is non-trivial. Our main result in \cref{sec:overparam_theory} compares constrained learning and domain discrimination in a simplified overparameterized setting that preserves key aspects of open-set training. The model builds on ideas from generalization to minority subgroups \citep{sagawa2020investigation, wald2022malign, puli2023don, nagarajan2021understanding}. Empirical results in \cref{sec:experiments} corroborate conclusions to which the theory points.
\subsection{Overparameterization and Finite Sample Analysis}
\label{sec:finite_sample}
In terms of formal guaranties, it is well known that at the limit of infinite data and a family $\gH$ that is large enough, the minimizer $h_{\mathrm{\text{DD}}}(\rvx)$ will be proportional to the log-odds ratio, $h_{\mathrm{\text{DD}}}(\rvx) = \log(\Ptarget(\rvx)/\Psource(\rvx))$.
Under \cref{ass:separability}, this means that the instances of the novel class will be classified as originating from $\Ptarget$ w.p. $1$, while for $\rvx\sim \Plabel{0}$ ($=\Plabel{[k]}$ since $k=1$) it will be bounded away from $1$ since $\mathrm{Supp}(\Plabel{[k]}) \subseteq \mathrm{Supp}(\Psource)$. Hence, under these favorable conditions, $h_{\mathrm{\text{DD}}}$ will classify the novel class vs. the single known class with accuracy $1$.
In practice, we do not perform perfect optimization with infinitely large datasets and highly expressive models. Therefore, it is desirable to study the sample complexity of different learning rules. Without further assumptions, the sample complexity of the domain discrimination approach resembles those in the domain adaptation literature, which depend on the divergence between $\Psource$ and $\Ptarget$, e.g. \citep{ben2010theory}. \citet{wald2023birds} show that instead of following ERM, solving a constrained learning rule, 
achieves a generalization bound where the error scales with a divergence $d_{\gH, \beta}(\Psource \| \Ptarget)$ which is usually considerably smaller than common divergences in domain adaptation \footnote{Intuitively, $\beta$ is a small number, and $d_{\gH, \beta}(\Psource \| \Ptarget)$ measures how probable a rare event under $\Psource$, i.e. with probability smaller than $\beta$, can become under $\Ptarget$}. Here, $\beta$ is a constant that depends on the Rademacher complexity of $\gH$ and the sample size. Thm 4.3 in \citet{wald2023birds} provide more details.

\textbf{Why existing theory may not reflect large-scale training.} As reflected by dependence on quantities like the Rademacher complexity of $\gH$, the results on constrained learning apply to a regime of lower capacity models that may not be expressive enough to achieve arbitarily low training loss. However, it is rather common to fit much more expressive models, as they often generalize better in practice. Next, we formally study a simple example of overparameterized models, where the number of parameters is larger than the number of training examples. 
This is a common toy model used to study deep networks
that overfit their training data but still generalize well, also known as ``benign overfitting'' 
\citep{belkin2019reconciling}. 
\subsection{Why Should Constrained Learning be Effective? Linear-Gaussian Overparameterized \osr{} Example with $k=1$}
\label{sec:overparam_theory}

To capture key aspects of the problem solved in practice, while maintaining a manageable mathematical analysis, we perform a few simplifications: \textbf{Data and models.} (i) We study linear models $h(\rvx) = \rvw^\top\rvx$ for $\rvw\in{\sR^d}$, where $d > N_{\gS} + N_{\gT}$ so the models are expressive and can interpolate the data; (ii) We focus on a problem with two features (along $\mub$ and $\etab$) and Gaussian noise (see \cref{def:overparam_setting}). Here $\mub$ and $\etab$ denote the two informative directions in the problem: the background shift, i.e. the shift in the known class, is in the direction $\mub$, and its magnitude is proportional to the norm $r_\mu$. While $\etab$ marks the novel class, and higher $r_{\eta}$ corresponds to a larger signal for learning the novel class. The covariance matrices (e.g., $I_d - r_\mu^{-2}\mub\mub^\top$) remove the corresponding rank-one components so the problem satisfies \cref{ass:separability}.
This construction keeps the model analytically tractable 
and this type of simplifications are common in works on benign overfitting and robustness to distribution shifts \citep{nagarajan2021understanding, wald2022malign, muthukumar2021classification}.
\begin{definition} \label{def:overparam_setting}
Let $d>0$, $\mub, \etab\in{\sR^d}$, 
$\sigma=1/\sqrt{d}$, $\alpha\in{(0, 1)}$, $r_\mu = \|\mub\|$, $r_\eta = \|\etab\|$. A \emph{Linear-Gaussian PU-learning problem} is an \osr{} problem (\cref{def:osda_problem}) with $1$ known class where $\Psource = \gN(\mub, \sigma^2(I_d-r_{\eta}^{-2}\etab\etab^\top))$,
$P_{\mathcal{T}, 0} = \gN(-\mub, \sigma^2(I_d-r_{\eta}^{-2}\etab\etab^\top)$ and $P_{\mathcal{T}, 1} =\gN(-\etab, \sigma^2 (I_d - r^{-2}_{\mu}\mub\mub^\top))$.  
\end{definition}
\textbf{Simplified \ours{} and Domain Discriminator methods.} (iii) The last simplification regards which optimization problem to analyze. In an overparameterized problem, there are many possible minimizers of the empirical risk and the constrained risk in \cref{eq:learningrule}. Therefore to analyze the solutions, it is common to consider $h_{\mathrm{\text{DD}}}$ and $h_{\mathrm{\text{color}}}$ as max-margin classifiers \footnote{This is based on results that show the implicit bias of gradient descent and the logistic loss to the max-margin classifier on separable data \citep{soudry2018implicit}} such that: 
\begin{equation*}\label{eq:DD_color_objective_unref}
\begin{aligned}
\rvw_{\text{\text{DD}}} & = \arg\min_{\rvw}\,\|\rvw\|& \\
\text{s.t.}\; & \rvw^\top\rvx \le -1  &\forall \rvx\in \datasource \\
              & \rvw^\top\rvx \ge 1   &\forall \rvx\in \datatarget
\end{aligned}
\qquad
\begin{aligned}
\rvw_{\mathrm{\text{color}}} & = \arg\min_{\rvw}\,\|\rvw\|& \\
\text{s.t.}\;       & \rvw^\top\rvx \le 0  &\forall \rvx\in \datasource \\
                    & \rvw^\top\rvx \ge 1 &\forall \rvx\in \datatarget
\end{aligned}
\end{equation*}
That is, while $\rvw_{\mathrm{\text{DD}}}$ is the max-margin classifier on the entire dataset, $\rvw_{\mathrm{\text{color}}}$ maximizes margin only on $\datatarget$ while constraining $\datasource$ to lie on the correct side of the decision threshold. Our practical implementation of the learning rule ($h_{\mathrm{\text{color}}}$) maximizes the margin on $\datasource$ while constraining $\datatarget$ as it leads to a more stable optimization. In the overparameterized regime, the conditions we lay out below ensure that both optimization problems admit feasible solutions with probability $1$. Now we present our main result for this section.

\begin{theorem}\label{thm:th1}
Consider a Linear-Gaussian PU-learning problem with parameters $\mub, \etab, d > 300$, dataset sizes $N_{\gT} > 10$, $N=N_{\gS}+N_{\gT}$, $\alpha \in{(0, \frac{N}{1024 N_\gT})}$ 
and let $\delta\in{(0,1)}$.
For all problems where
\begin{align*}
\min\{r_{\eta}, r_{\mu}\} \geq \tfrac{16}{\sqrt{N_\gT}}, \quad r_{\mu} \leq \tfrac{1}{2}\sqrt{N_{\gT, 0}}, \quad \tfrac{r_{\eta}}{r_{\mu}} \leq \tfrac{4}{N}, \quad c_1\log\!\left(\tfrac{c_2}{\delta}\right) \leq \min{\left(\sqrt{N}, \sqrt{\frac{d}{N}}\right)},
\end{align*}
it holds with probability at least $1-\delta$ that $\mathrm{AU-ROC}(\rvw_\mathrm{{\text{DD}}}) < 0.5$ and $\mathrm{AU-ROC}(\rvw_{\mathrm{\text{color}}}) > 0.9$.

\end{theorem}
Detailed derivation of the \cref{thm:th1} claims are provided in \cref{sec:propositions}.
The result shows that a constrained learning rule can be arbitrarily more accurate than a domain discriminator.\footnote{We prove that the AU-ROC of \ours{} exceeds $0.9$, but the proof can be extended to yield arbitrarily high AU-ROC.} This advantage holds even for overparameterized models that generalize well to domain discrimination. The parameter ranges in \cref{thm:th1} indicate when the constrained approach outperforms the baseline, specifically, when the novel-class fraction $\alpha$ is small and the shift magnitude $r_{\mu}$ is large enough but not too large to violate support overlap between $\Psource$ and $\Plabel{0}$. 
The conditions impose lower bounds on the norms of the source and novel class samples to avoid cases with excessive noisy sampling. 
Furthermore, the upper bounds on these norms makes it harder to distinguish between known and novel instances. Intuitively, higher-norm features induce stronger separation margins, making them easier for a classifier to detect. Hence, at lower norms, $\rvw_{\mathrm{\text{color}}}$ outperforms the baseline $\rvw_{\mathrm{\text{DD}}}$. 
We also have a condition on $d$ to ensure overparameterized regime given a lower bound on $N$.

Next, we introduce \ours{}, a practical implementation of the rule in \cref{eq:color01}, for \osr{} under background shift. Experiments in \cref{sec:experiments} confirm superior performance over baselines in regimes consistent with theory.

\section{\ours{}: A solution to \osr{} under Background Shift} \label{sec:solutions}
The simplified constrained learning rule proposed in the previous section cannot be directly applied in practice to large-scale models or real world datasets that deviate from the ideal linear-Gaussian setup we considered. Hence, we 
first formulate the learning rule for (\ours{}) for \osr{} under background shift for $k\geq1$ with novel classes of proportion, $\alpha$, w.r.t. target dataset and a tolerance $\beta>0$ for false detection of $S_\gS$ as novelties: 
\begin{equation}\label{eq:learningrule}
\textstyle
h_{\text{color}} = \argmin_{h\in\gH}\!\left(\sum_{i\in\datasource} l_{ce}(h_c(\rvx_i), y_i) + \hat{\beta}_{\mathrm{emp}}(h_{\mathrm{\text{novel}}})\right)
\ \text{s.t.}\ 
\hat{\alpha}_{\mathrm{emp}}(h_{\mathrm{\text{novel}}}) < \hat{\alpha}.
\end{equation}
where $h = [h_c, h_{\mathrm{\text{novel}}}]$, $h_c$ are classification heads for known classes, $h_{\mathrm{\text{novel}}}$ is the novelty detection head, $l_{ce}$ is the supervised cross entropy loss, $\hat{\alpha}_{\mathrm{emp}}(h), \hat{\beta}_{\mathrm{emp}}(h)$ are empirical estimates of $\alpha(h), \beta(h)$ over $S_{\gT}$ and $S_{\gS}$ respectively. 
$\hat{\alpha}$ is some constant corresponding to the target recall. It is a hyperparameter that needs tuning. 
\subsection{Efficient Architecture for Estimating the Novel Class Ratio ($\alpha$)}
To solve the \cref{eq:learningrule}, we follow \citet{wald2023birds, chamon2022constrained, cotter2019optimization} and solve a Lagrangian optimization problem obtained by switching the role of maximization and constraints in \cref{eq:learningrule} and taking its dual, see \cref{eq:eq04lagrangian} for the full objective. However, a na{\"i}ve implementation of this objective has a significant drawback in terms of computational complexity due to hyperparameter optimization. Indeed, prior work on the rate-constrained learning problems we seek to solve (where rate in our problem corresponds to the size of the novel class) is limited to either very small models and datasets \citep{wald2023birds}, or to certain applications such as fairness \citep{zafar2019fairness}, where the desired rate is known a-priori.

The computationally challenging part is that our Lagrangian problem needs to be solved for $L$ candidate values of $\hat{\alpha}_{\mathrm{emp}}\in{\boldsymbol{\alpha}}$, corresponding to the potential estimates of novel class proportion, where $\boldsymbol{\alpha}\in{[0,1]^L}$. We perform grid search over $\boldsymbol{\alpha}$ and choose the model that obtains largest empirical recall estimate ($\hat{\alpha}_{\mathrm{emp}}(h)$) w.r.t. the task of distinguishing between $\Psource$ and $\Ptarget$, while still satisfying the constraint on empirical FPR i.e. false positive rate estimate ($\hat{\beta}$) for the same task as specified in the learning objective. For empirical purposes, we calculate the approximate bound for $\beta$ (constraint on $\hat{\beta}_{\mathrm{emp}}(h)$) using the Rademacher complexity in theorem 1 of \cite{wald2023birds}. We find that setting $\beta=0.01$ is well within the theoretically calculated bounds and works well in practice for all the experiments. Plots in figure \ref{fig:vary_fpr} provide further insights about the impact of varying $\beta$ on \osr{} performance of \ours{}.

To this end we train an architecture $h(\rvx) = w \circ \phi(\rvx)$ where $\phi:\gX \rightarrow \sR^d$ is a shared representation for classification heads, $w^c:\sR^d \rightarrow \sR^{k}$, as well as several novelty heads, $w^\alpha = \{w^\alpha_i:\sR^d \rightarrow \sR\}^{L}_{i=1}$, and each $w^\alpha_i$ corresponds to a candidate value $\hat{\alpha}$ in the search grid $\boldsymbol{\alpha}$. Hence the solution amortizes training time for all candidate values by solving the primal dual optimization problem once. which leads to better performance with large data \& models in practice as we see in section \ref{sec:experiments}.  

\begin{wrapfigure}{r}{0.4\textwidth}
    \centering
    \hspace{-10pt}\includegraphics[scale=0.36]{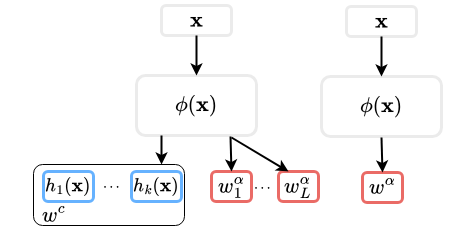}
    \caption{(left) \ours{} architecture for \osr{}, heads $w^a_i$ for multiple recall values and classification heads $w^c$, vs. (right) a network optimizing for novelty detection with single recall value as in \citet{wald2023birds}.}
    \vspace{-15pt}
    \label{fig:color_architecture}
\end{wrapfigure}
\subsection{A simple Extension of Constrained Learning for OSDA}
To account for the known classes $Y=1,\ldots, k$ and $L$ novelty heads, we construct  $w = [w^c, w^\alpha]$, such that $w:\sR^d\rightarrow \sR^{L+k}$. $w^\alpha$ is responsible for detecting novelties with various proportions while $w^c$ determines the known class of the input $\rvx$. See \cref{algorithm:color} for further details. Note that training multiple novelty heads ($[w^\alpha_1, w^\alpha_2, ..., w^\alpha_L$]) simultaneously is more computationally efficient than training a separate model for each candidate $\hat{\alpha}$ iteratively. 
We hypothesize that having a shared representation for both tasks enables the model to learn a simpler representation that fits both tasks, yielding favorable generalization bounds as suggested by theory on multitask learning \citep{baxter_model_2000, maurer_benet_16}.

Let us gather the components for our overall method, \ours{}, that we evaluate in the next section. \Cref{fig:color_architecture} illustrates the architecture of \ours{}.
It is trained by combining the constrained learning rule with $\ell_{\log}$, the binary cross-entropy loss, aiming to detect novel examples in $\datatarget$,\footnote{label $0$ in the loss corresponds here to a sample belonging to $\datasource$.} $\lambda_{\hat{\alpha}}$ is a Lagrange multiplier corresponding to the head $h_{\hat{\alpha}}=\phi\circ w^\alpha (\rvx)$, $\ell_{\sigma}(\cdot)$ is a sigmoid function which serves as an approximation to the indicator function, and the supervised cross entropy loss is $l_{ce}$ for $h_c(\rvx_i) = w^c \circ \phi (x)$.
\begin{align}\label{eq:eq04lagrangian}
    \mathcal{L}(h) = &N_\gS^{-1} \sum_{i \in \datasource} \biggl( l_{ce}(h_c(\rvx_i), y_i)  + \sum_{\hat{\alpha}\in{\boldsymbol{\alpha}}}l_{\text{log}}( h_{\hat{\alpha}}(\rvx_i), 0)\biggr) +
      N_\gT^{-1}  \sum_{\substack{i \in S_{\gT}, \\ \hat{\alpha}\in{\boldsymbol{\alpha}}}}\lambda_{\hat{\alpha}} \cdot \biggl(l_{\sigma}( h_{\hat{\alpha}}(\rvx_i)) - \hat{\alpha}\biggr)
      \vspace{-3pt}
\end{align}
\vspace{-10pt}

In our experiments, the representation $\phi(\rvx)$ is either trained from scratch or built on a pretrained backbone with added fully connected layers. For small datasets (e.g., CIFAR100), we train $\phi(x)$ as ResNet18 \citep{he_deep_2016} from scratch. For larger ones (e.g., SUN397), we use pretrained $\phi(x)$ such as ResNet50 (ImageNet1K\_V1) \citep{Russakovsky2015imagenet} or a CLIP-pretrained \vit{} encoder \citep{radford2021clip}. For text datasets (e.g., Amazon Reviews), we use RoBERTa \citep{liu_roberta_2019} for embeddings. See \cref{sec:app_color} for details.

\begin{algorithm}
\small
\caption{\ours: Constrained Learning for Open-set Recognition}\label{alg:cap}
\begin{algorithmic}[1]
\Require Labelled $\datasource$ and unlabelled $S_\mathcal{T}$ datasets, hypothesis class $\mathcal{H}$, target FPR $\beta > 0$ and $L$ potential novel class sizes $\boldsymbol{\alpha} \in [0,1]^L$.\\
Split $\datasource, S_\mathcal{T}$ into train $T_\mathcal{S}, T_\mathcal{T}$ and validation sets $V_\mathcal{S}, V_\mathcal{T}$ respectively.\\
Train either the entire model ($\phi$, $w^c$ \& $w^\alpha$) or use pretrained $\phi$ and just train the last fully connected layer of $\phi$ along with $w^c$ \& $w^\alpha_{\hat{\alpha}}$ to minimize the eq. \ref{eq:eq04lagrangian}.
\State Let $\hat{\beta}_{\mathrm{emp}}(w^\alpha_{\hat{\alpha}}) = \frac{1}{|V_\mathcal{S}|}\sum_{\rvx \in V_\mathcal{S}} w^\alpha_{\hat{\alpha}}\circ\phi(\rvx)$, and $\hat{\alpha}(w^\alpha_{\hat{\alpha}}) = \frac{1}{|V_\mathcal{T}|}\sum_{\rvx \in V_\mathcal{T}} w^\alpha_{\hat{\alpha}}\circ\phi(\rvx)$ $\forall \alpha\in{\boldsymbol{\alpha}}$
\State \Return \big[$w^c$, argmax$_{w^\alpha_{\hat{\alpha}}:\alpha \in \boldsymbol{\alpha}, \hat{\beta}_{\mathrm{emp}}(w^\alpha_{\hat{\alpha}})<\beta} \hat{\alpha}(w^\alpha_{\hat{\alpha}})$\big]
\end{algorithmic}
\label{algorithm:color}
\end{algorithm}

\section{Experiments} \label{sec:experiments}
We are now in place to empirically evaluate \ours{} against variety of baselines across background shifts and novel classes that we create in real data. Here, the main questions we wish to answer are:
\begin{enumerate}[leftmargin=0.4cm]
    \item Does background shift within non-novel instances affect OSDA performance? 
    \item Can shared representations mitigate the impact of background shift by enhancing robustness in classifying samples from $S_{\gT,[k]}$ thereby improving overall OSDA performance?
    \item What is the effect of novel class ratio $\alpha$ particularly w.r.t. detecting novel classes? 
\end{enumerate}


\subsection{Experimental setting}
The experiments to examine these questions are devised over image and text datasets as follows. We randomly draw a class from the set of classes $\gY$ and assign that as the $k+1$-th novel class. Denoting the instances of known classes by $S_k= \{\rvx : y \in [k]\}\, \forall \, (\rvx,y) \in S$ and novel ones as $S_{k+1}= \{\rvx : y=k+1\} \, \forall \, (\rvx,y) \in S$, we create a background shift by further splitting $S_k$ into $S_{\gS}$ and $S_{\gT,[k]}$. We use semantic attributes that are annotated in the metadata of each dataset to create this shift between $S_{\gS}$ and $S_{\gT,[k]}$. The attributes used for each dataset are specified in section \ref{sec:datasets_exp} and elaborated further in section \ref{sec:app_dataset}. At training time the learner is provided with a labelled dataset $S_{\gS}$ and the unlabelled dataset $S_{\gT}=S_{\gT,[k]}\cup S_{k+1}$. The mixture proportion $\alpha = |S_{k+1}|/(|S_{k+1}| + |S_{\mathcal{T}, [k]}|)$ is set by adjusting the sizes of the selected novel classes in $S_{\gT,[k]}$.

\subsection{Datasets} \label{sec:datasets_exp}

Most of the large scale image classification models are trained on ImageNet dataset. Hence we use a similar large scale dataset, \textbf{SUN397} \citep{jianxiong2010sun397} having completely different categories than ImageNet. 
We exploit three-level hierarchy structure to create distribution shifts by varying proportions of latent subtypes ($y$) of known class labels $\mathcal{Y}$. This causes a background shift where $ \mathrm{Supp}(\Plabel{[k]}) \subseteq \mathrm{Supp}(\Psource)$. Indoor scenes (e.g., shopping/dining places, workplaces) serve as known (in-distribution) classes, while outdoor scenes are randomly selected as novel classes. 
We also include a \textbf{Amazon Product Reviews} text dataset \citep{ni_justifying_2019} to demonstrate the versatility of the method across diverse modalities.
Classes are different product categories (prime pantry, musical instruments, etc.),
and induce background/covariate shift in known classes based on positive vs. negative sentiment in the review. The novel class is an unknown product category.
Finally, we include small scale \textbf{CIFAR100} \citep{Krizhevsky09learningmultiplecifar} dataset to evaluate the adaptive methods (like \DDnew{}, \nnPUnew{}, \uPUnew{}, \BODA{}, \osr{}) particularly in scenarios when their feature extractors are trained from scratch. 
Similar to the SUN397 dataset, we leverage the inherent hierarchies of CIFAR100 classes to create a natural background shift by varying latent subtype proportions and include novel classes in the target dataset. 
\Cref{table:shift_settings} provides a summary of characteristics and further details on each dataset are in \cref{sec:app_dataset}. 

\begin{table*}[h!]
\centering
\caption{Overview of experiment settings. DS = distribution shift, prop. = proportions}
\label{table:shift_settings}
\resizebox{10cm}{!}{%

\begin{tabular}{|c || c | c | c | c }
\hline
Experiment setup & SUN397 & CIFAR100 & Amazon Reviews \\
\hline\hline
 DS factor &  varying subtypes prop.  & varying subtypes prop. &  sentiment \\
\hline
 no. of novel classes & 12 & 5 & 6 \\
\hline
 Novel class ratio ($\alpha$) & $0.07\pm0.03$ & $0.16\pm0.09$ & $0.07\pm0.02$ \\
 \hline
\end{tabular}
}
\end{table*}

\textbf{Evaluation metrics:}
We primarily use Area Under ROC Curve (\auroc{}) and Area Under Precision-Recall Curve (\auprc{}) to evaluate the novel category detection performance, while we use Open-Set Classification Rate (\oscr{}) to summarize overall \osr{} performance for all the methods \footnote{Section \ref{sec:auroc_vs_auprc} in appendix provides detailed argument for the preference of AUPRC over AUROC mainly when minority class proportions are very low.}.
\oscr{} \citep{dhamija2018oscr} measures the trade-off between correct classification rate of the known classes and false positive rate of the novel samples.
\subsection{Baseline methods}

We include adaptive methods from novelty detection that access both labelled source and unlabelled target data. These baselines include domain discriminator \DDnew{}, \citep{elkan2008learning, duplessis2014analysis}, \uPUnew{} \citep{duplessis2014analysis} and \nnPUnew{} \citep{kiryo2017positive}. These methods are modified for \osr{} through joint learning approach enabled by a simple architectural modification like figure \ref{fig:color_architecture} and aggregating the loss components from both the tasks. We also include another popular \osr{} baselines \BODA{} \citep{saito2018open} and PULSE \citep{garg22adaptation} that are agnostic to input data modality. Another adaptive baseline is \cite{wuyang2023anna}, however, this is specifically designed for vision data. Hence, we report its performance on SUN397 dataset in \cref{table:sun397_vitl14_w_vs_wo_shift} along with other vision-specific baselines.

We include an entropy-based method, ARPL \citep{Chen_2021} with Maximum Logit Score as proposed in \citep{vaze2022openset}. Such methods are non-adaptive as they do not access target data, yet are a popular choice for novelty detection.\footnote{We are using ARPL on SUN397. The paper that proposed ARPL also introduced an improvement with Confused Sampling (ARPL+CS) that uses GANs, but that proved challenging to apply in the SUN397 dataset} We also tested simple and popular baselines such as MSP \cite{hendrycks2016baseline} and results can be found in appendix tables \ref{table:MSP_sun397_low_ratio}, \ref{table:MSP_sun397_high_ratio}. We further include SHOT \citep{liang2021reallyneedaccesssource}, CAC \citep{Miller2021ClassAC} and
a zero-shot OOD detection method (ZOC) proposed in \citet{Esmaeilpour_Liu_Robertson_Shu_2022}. 
Further details about training and hyperparameters are in Appendix \ref{sec:training}.

\vspace{-5pt}
\begin{table*}[h!]
\centering
\caption{Performance comparison of adaptive methods for \osr{} under background shift 
demonstrating versatility across data modalities. Detailed results in appendix Tables~\ref{table:sun397_rn50_w_shift}, \ref{table:cifar100_w_shift}, and \ref{table:amazon_reviews_w_shift}.\protect\footnotemark}
\label{table:results_sumary}
\resizebox{\textwidth}{!}{%

\begin{tabular}{c || c || c | c || c | c || c | c }
\hline
\multirow{3}{*}{Metric} & \multirow{3}{*}{Methods} & \multicolumn{2}{c||}{SUN397 ($\alpha=0.07\pm0.03$)} & \multicolumn{2}{c||}{CIFAR100 ($\alpha=0.07\pm0.02$)} & \multicolumn{2}{c}{Amazon Reviews ($\alpha=0.16\pm0.09$)} \\
\cline{3-8}
 &  & \multicolumn{2}{c||}{ResNet50 pretrained on ImageNet1K} & \multicolumn{2}{c||}{ResNet18 (randomly initialized)} & \multicolumn{2}{c}{pretrained RoBERTa} \\
\cline{3-8}
 & & Summary & Wins & Summary & Wins & Summary & Wins \\
\hline\hline
\multirow{5}{*}{\auroc{}} & \DDnew{} & $0.91\pm0.05$ & $1/15$ &                                                  $0.70\pm0.11$ & $3/25$ & $0.72\pm0.09$ & $1/30$ \\
                       & \uPUnew{} & $0.76\pm0.14$ & $0/15$ & $0.67\pm0.10$ & $1/25$ & $0.76\pm0.08$ & $4/30$ \\
                       & \nnPUnew{} & $0.76\pm0.14$ & $0/15$ & $0.67\pm0.12$ & $1/25$ & $0.76\pm0.08$ & $4/30^*$ \\
                       & \BODA{} & $0.86\pm0.06$ & $0/15$ & $0.57\pm0.06$ & $0/25$ & $0.66\pm0.10$ & $2/30$ \\
                       & \arpl{} & $0.71\pm0.08$ & $0/15$ & $0.76\pm0.05$ & $8/25$ & $0.70\pm0.05$ & $1/30$ \\
                       & \cac{} & $0.80\pm0.05$ & $0/15$ & $0.69\pm0.04$ & $2/25$ & $0.70\pm0.07$ & $5/30$ \\
                       & \pulse{} & $0.73\pm0.07$ & $0/15$ & $0.72\pm0.04$ & $0/25$ & $0.63\pm0.08$ & $1/30$ \\
                       & \ours{} & $\boldsymbol{0.98\pm0.02}$ & $\boldsymbol{14/15}$ & $\boldsymbol{0.77\pm0.09}$ & $\boldsymbol{10/25}$ & $\boldsymbol{0.79\pm0.09}$ & $\boldsymbol{16/30}$ \\
\hline
\multirow{5}{*}{\auprc{}} & \DDnew{} & $0.54\pm0.22$ & $1/15$ &                                                  $0.24\pm0.13$ & $2/25$ & $0.43\pm0.18$ & $0/30$ \\
                       & \uPUnew{} & $0.21\pm0.22$ & $0/15$ & $0.18\pm0.12$ & $1/25$ & $0.51\pm0.17$ & $7/30$ \\
                       & \nnPUnew{} & $0.21\pm0.22$ & $0/15$ & $0.20\pm0.16$ & $2/25$ & $0.51\pm0.17$ & $7/30^*$ \\
                       & \BODA{} & $0.45\pm0.11$ & $0/15$ & $0.09\pm0.03$ & $0/25$ & $0.28\pm0.21$ & $1/30$ \\
                       & \arpl{} & $0.12\pm0.07$ & $0/15$ & $0.18\pm0.07$ & $4/25$ & $0.27\pm0.13$ & $0/30$ \\
                       & \cac{} & $0.18\pm0.07$ & $0/15$ & $0.15\pm0.05$ & $1/25$ & $0.3\pm0.16$ & $1/30$ \\
                       & \pulse{} & $0.15\pm0.05$ & $0/15$ & $0.17\pm0.05$ & $0/25$ & $0.21\pm0.13$ & $0/30$ \\
                       & \ours{} & $\boldsymbol{0.91\pm0.09}$ & $\boldsymbol{14/15}$ & $\boldsymbol{0.33\pm0.14}$ & $\boldsymbol{15/25}$ & $\boldsymbol{0.54\pm0.18}$ & $\boldsymbol{21/30}$ \\
\hline
\multirow{5}{*}{\oscr{}} & \DDnew{} & $0.68\pm0.05$ & $1/15$ &                                                   $0.51\pm0.10$ & $1/25$ & $0.50\pm0.06$ & $0/30$ \\
                       & \uPUnew{} & $0.40\pm0.11$ & $0/15$ & $0.49\pm0.09$ & $0/25$ & $0.54\pm0.05$ & $5/30$ \\
                       & \nnPUnew{} & $0.40\pm0.11$ & $0/15$ & $0.48\pm0.10$ & $1/25$ & $0.54\pm0.05$ & $5/30^*$ \\
                       & \BODA{} & $0.55\pm0.10$ & $0/15$ & $0.60\pm0.04$ & $3/25$ & $\boldsymbol{0.56\pm0.06}$ & $7/30$ \\
                       & \arpl{} & $0.60\pm0.06$ & $0/15$ & $0.61\pm0.04$ & $\boldsymbol{7/25}$ & $0.56\pm0.05$ & $4/30$ \\
                       & \cac{} & $0.68\pm0.05$ & $0/15$ & $0.55\pm0.04$ & $0/25$ & $0.49\pm0.07$ & $0/30$ \\
                       & \pulse{} & $0.65\pm0.06$ & $0/15$ & $\boldsymbol{0.62\pm0.03}$ & $6/25$ & $0.53\pm0.07$ & $4/30$ \\
                       & \ours{} & $\boldsymbol{0.81\pm0.04}$ & $\boldsymbol{14/15}$ & $0.59\pm0.08$ & $\boldsymbol{7/25}$ & $\boldsymbol{0.56\pm0.06}$ & $\boldsymbol{10/30}$ \\  
\end{tabular}
}
\textit{$^*$ When \nnPU{} behaves exactly like \uPU{}, both are awarded the win.}
\end{table*}

\subsection{Results}      

\begin{figure}
    \centering
    \begin{subfigure}[!]{0.66\linewidth}
        \centering
        \includegraphics[height=8cm,scale=0.2]{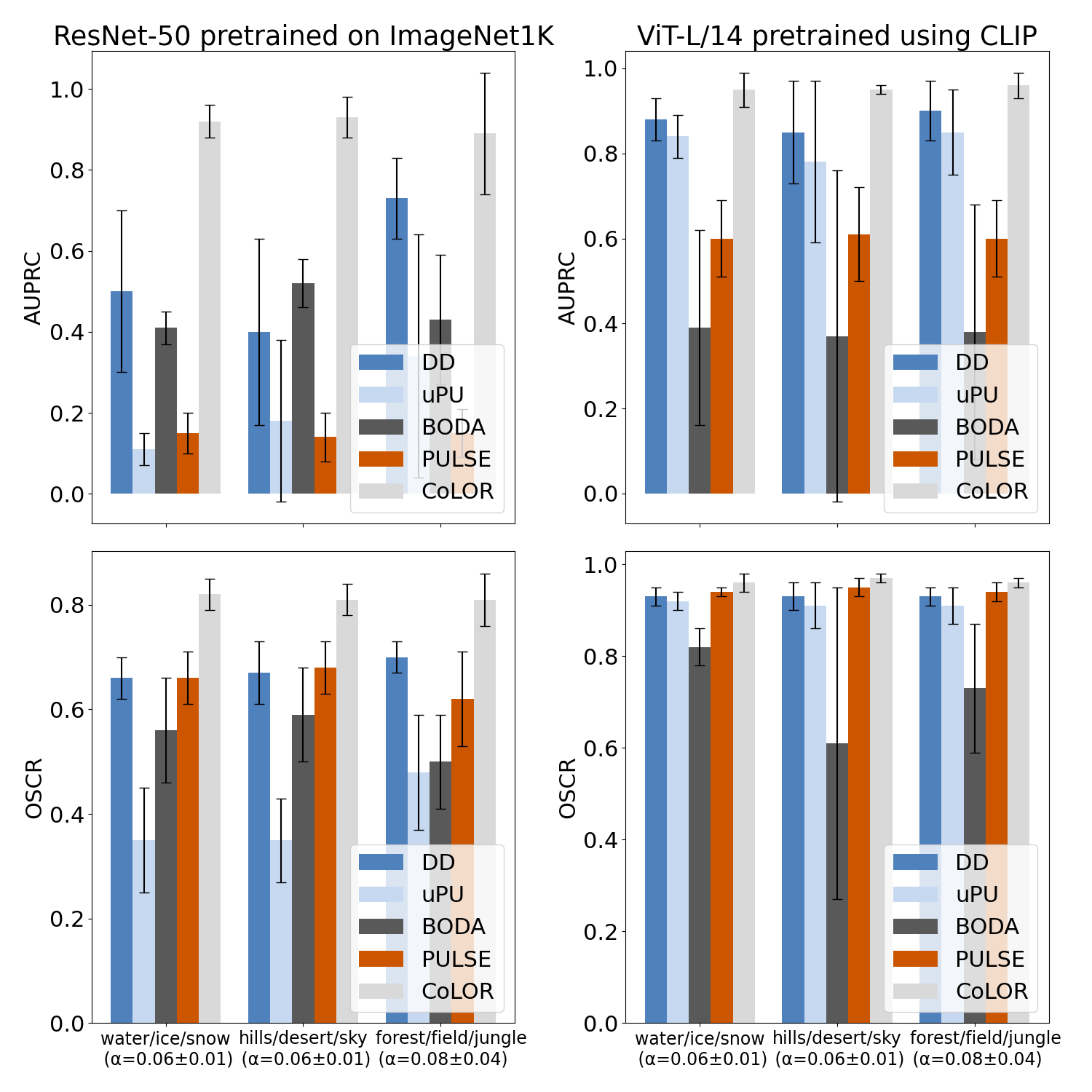}
        \caption{}
        \label{fig:sun397_bar_plot}
    \end{subfigure}
    \hfill 
    \begin{subfigure}[!]{0.33\linewidth}
        \centering
        \includegraphics[height=8cm,scale=0.2]{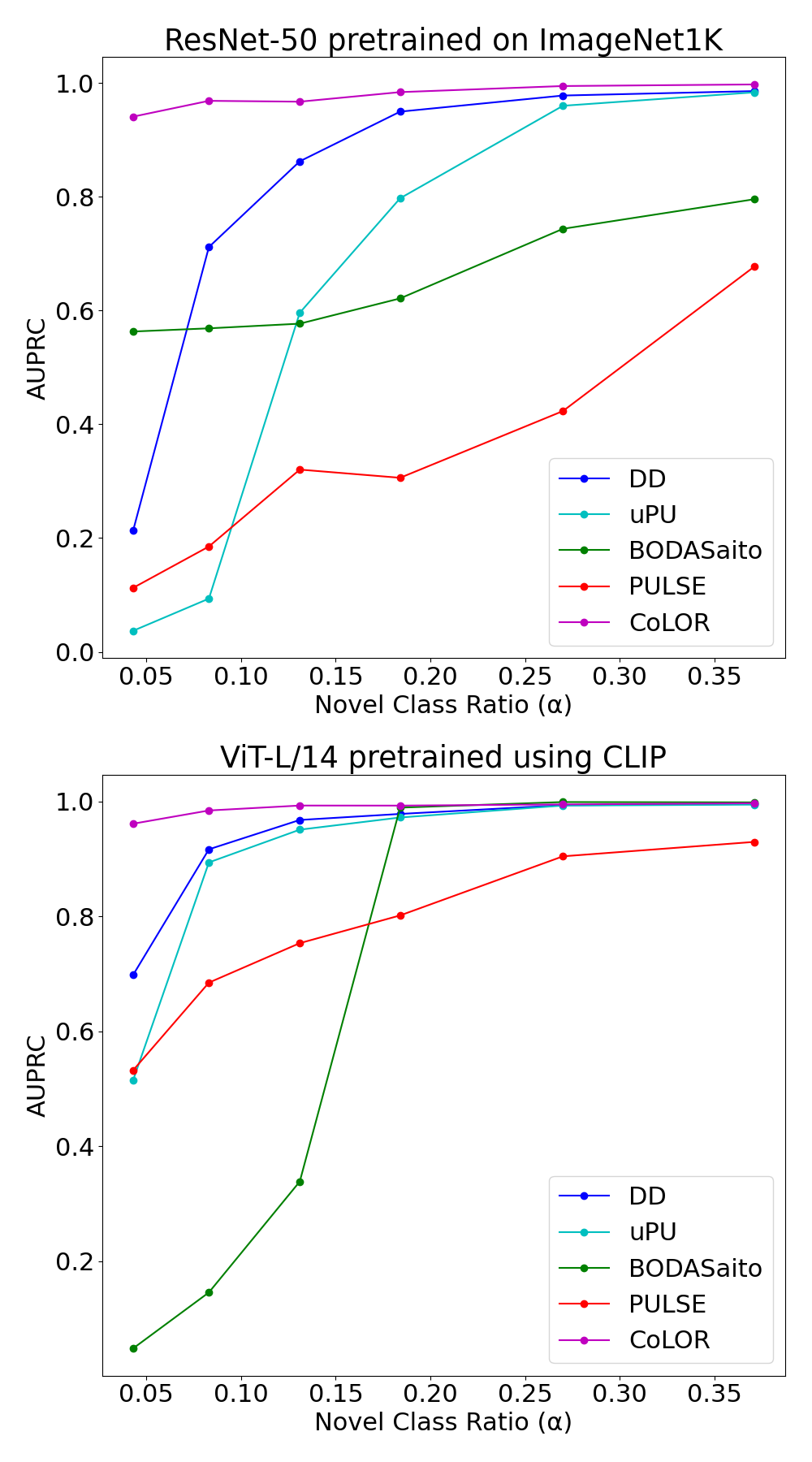}
        \caption{}
        \label{fig:sun397_curve_plot}
    \end{subfigure}
    \caption{(a) OSDA performance of top performing adaptive methods on SUN397 dataset with background shift using pretrained ResNet50 \& CLIP \vit{} backbone architectures. (b) Impact of novel class ratio ($\alpha$) on adaptive methods on SUN397 dataset under background shift. }
\end{figure}

Table \ref{table:results_sumary} summarizes the OSDA performance across all the datasets using all the metrics discussed previously. We observe that \ours{} generally outperforms all the baselines with a significant margin. The \oscr{} performance of \ours{} is comparable to \BODA{}, \arpl{} and \pulse{} for CIFAR100 and Amazon Reviews dataset however, they notably under perform in novelty detection (AUROC and AUPRC) for the same datasets. Additionally, we observe that these methods do not perform as well on the larger dataset of SUN397. We find that the \nnPUnew{} objective defaults to \uPUnew{} when using pretrained feature extractors (SUN397 and Amazon Reviews), as the empirical risk remains non-negative for both. This causes similar results of \uPUnew{} and \nnPUnew{} on SUN397 and CIFAR100 datasets. In the image datasets, we find that adaptive methods significantly outperform non-adaptive methods overall. We further strengthen our results by extending the study to other richer feature representations like pretrained CLIP \vit{}, particularly for SUN397 dataset. This is shown in \ref{fig:sun397_bar_plot} across different novel classes (X-axis) of SUN397 dataset. \ours{} still outperforms the baselines using either of the backbone architectures. We can see that CLIP \vit{} have overall better performance than ResNet50 (pretrained on ImageNet) in SUN397 dataset due to richer and more robust features. This mitigates the impact of distribution shift, which is more pronounced in SUN397 compared to other datasets. Further results are in appendix tables \ref{table:sun397_rn50_w_shift}, \ref{table:sun397_vitl14_w_shift}, \ref{table:cifar100_w_shift}, \ref{table:amazon_reviews_w_shift}.


\footnotetext{The summary statistics are derived by averaging (or aggregating wins) across all novel class identities and randomly generated data splits (between $\Psource$ \& $\Ptarget$) along with the corresponding standard deviations. Refer \ref{sec:training}}

\vspace{-5pt}
\subsection{Discussion} \label{sec:discussion}

\begin{table*}
\caption{OSDA performance with and without background shift (DS) using ResNet50 backbone for SUN397. These results include additional vision-specific baselines like \shot{}, \zoc{}, \anna{} }
\centering
\resizebox{0.7\textwidth}{!}{
\begin{tabular}{c || c | c || c | c || c | c}
\hline
\multirow{2}{*}{Method} & \multicolumn{2}{c||}{\textbf{AUROC}} & \multicolumn{2}{c||}{\textbf{AUPRC}} & \multicolumn{2}{c}{\textbf{OSCR}} \\
\cline{2-7}
                        & \textbf{w/ BS} & \textbf{w/o BS} & \textbf{w/ BS} & \textbf{w/o BS} & \textbf{w/ BS} & \textbf{w/o BS} \\
\hline\hline
\DDnew{}   & $0.91\pm0.05$ & $\boldsymbol{1.00\pm0.00}$ & $0.54\pm0.22$ & $\boldsymbol{1.00\pm0.00}$ & $0.68\pm0.05$ & $\boldsymbol{0.99\pm0.00}$ \\
\uPUnew{}  & $0.76\pm0.14$ & $\boldsymbol{1.00\pm0.00}$ & $0.21\pm0.22$ & $\boldsymbol{1.00\pm0.00}$ & $0.40\pm0.11$ & $\boldsymbol{0.99\pm0.00}$ \\
\BODA{}   & $0.86\pm0.06$ & $0.88\pm0.04$ & $0.45\pm0.11$ & $0.24\pm0.15$ & $0.55\pm0.10$ & $0.93\pm0.02$ \\
\arpl{}   & $0.71\pm0.08$ & $0.84\pm0.04$ & $0.12\pm0.07$ & $0.20\pm0.07$ & $0.60\pm0.06$ & $0.80\pm0.04$ \\
\cac{}  & $0.80\pm0.05$ & $0.88\pm0.03$ & $0.18\pm0.07$ & $0.29\pm0.07$ & $0.68\pm0.05$ & $0.84\pm0.03$ \\
\pulse{}  & $0.73\pm0.07$ & $0.82\pm0.05$ & $0.15\pm0.05$ & $0.20\pm0.05$ & $0.65\pm0.06$ & $0.78\pm0.05$ \\
\ours{}   & $\boldsymbol{0.98\pm0.02}$ & $\boldsymbol{1.00\pm0.00}$ & $\boldsymbol{0.91\pm0.09}$ & $0.99\pm0.03$ & $\boldsymbol{0.81\pm0.04}$ & $0.93\pm0.01$ \\
\hline
\shot{}   & $0.71\pm0.16$ & $0.84\pm0.04$ & $0.19\pm0.13$ & $0.20\pm0.07$ & $0.22\pm0.07$ & $0.80\pm0.04$ \\
\zoc{}*  & $0.82\pm0.07$ & $0.82\pm0.08$ & $0.23\pm0.06$ & $0.22\pm0.07$ & $0.48\pm0.12$ & $0.49\pm0.05$ \\
\anna{}   & $0.93\pm0.05$ & $0.95\pm0.07$ & $0.73\pm0.16$ & $0.90\pm0.08$ & $0.60\pm0.07$ & $0.82\pm0.07$ \\
\hline
\hline
\end{tabular}
}

\textit{*\zoc{} uses CLIP ViT-B/32 image encoder with BERT text encoder.}
\label{table:sun397_vitl14_w_vs_wo_shift}
\end{table*}

\begin{table*}[h!]
\centering
\caption{Top-1 accuracy over $S_{\gT}$ for closed-set classification on SUN397 data.}
\centering
\resizebox{0.4\textwidth}{!}{%
\begin{tabular}{c || c | c }
\hline
Method & \resnet{} & \vit{} from CLIP \\
\hline\hline
\source{} & $0.72\pm0.03$ & $0.97\pm0.01$ \\
\DDnew{} & $0.75\pm0.04$ & $0.97\pm0.01$ \\
\BODA{} & $0.70\pm0.07$ & $0.75\pm0.13$ \\
\cac{} & $\boldsymbol{0.85\pm0.02}$ & $\boldsymbol{0.98\pm0.01}$ \\
\pulse{} & $0.82\pm0.03$ & $0.97\pm0.01$ \\
\ours{} & $0.83\pm0.03$ & $\boldsymbol{0.98\pm0.01}$ \\
\hline
\end{tabular}
}
\label{table:sun397_target_acc}
\hspace{0.5cm} 

\end{table*}

Based on our observations from the experiments, we address the three critical questions below: \\
\textbf{Background shift within non-novel instances causes a significant drop in the \osr{} performance of all the methods that involve fine-tuning using the source data.}\\
From tables \ref{table:sun397_vitl14_w_vs_wo_shift}, \ref{table:sun397_vitl14_w_shift} and \ref{table:sun397_vitl14_wo_shift} we observe that all the methods that involve training a closed-set classifier suffer from background shift. Hence, background shift not only harms closed-set performance but also that of open-set recognition. If we had a better closed set classifier, in the sense that it was more robust to the background shift, then we probably would’ve ended up with better \osr{} performance too (see \cref{sec:app_color_with_da}). \ours{} seems to primarily help when standard techniques like long training, label smoothing, and other techniques used in \citet{vaze2022openset} are insufficient to train a model that’s robust to shift. \zoc{} performs zero-shot open-set classification using pretrained CLIP and text decoder models without accessing source and target data. Hence, it is not impacted by background shift as expected. We have provided further results on CIFAR100 dataset in \ref{table:cifar100_w_shift} \& \ref{table:cifar100_wo_shift} although CIFAR100 is a much simpler dataset and hence we see near perfect scores by \zoc{} due to the use of large CLIP ViT-B/32 model pretrained on huge image caption datasets. We see in tables \ref{table:sun397_vitl14_w_shift} \& \ref{table:sun397_vitl14_wo_shift} that non-adaptive methods do not perform well when scaled to larger SUN397 dataset.\\
\textbf{Shared representations obtained by joint learning of closed set classification and novel category detection can help mitigate the impact of background shift within non-novel instances.}\\
Table \ref{table:sun397_target_acc} compares the Top-1 accuracy for known classes under background shift between $S_{\gS}$ and $S_{\gT}$. The Source-only method is trained on labelled $S_{\gS}$ for classifying known classes in $S_{\gT}$, while \DDnew{} combines closed-set classification with domain discrimination (distinguishing samples from $S_{\gS}$ and $S_{\gT}$). We can see that \DDnew{} has better Top-1 accuracy of known classes than the Source-only method on $S_{\gT}$ notably for ResNet50 model. \ours{} further improves the performance by employing constrained learning instead of domain discrimination in the joint learning objective to detect novel classes. Performance differences remain incremental for CLIP ViT-L/14 because its pretrained representations are already rich and robust.\\
\textbf{As the novel class ratio $\alpha$ decreases, the performance of existing methods to detect unknown/novel classes significantly decreases.}\\
Figure \ref{fig:sun397_curve_plot} illustrates the effect of novel class ratio on the novel class detection performance of existing \osr{} methods and \ours{}. A low $\alpha$ significantly deteriorates the open-set classification performance, especially for methods not designed to handle such cases. Existing benchmarks focus on large novel class sizes. We believe that experimenting with smaller sizes would be a valuable step toward creating more realistic benchmarks.
\section{Future Work} \label{sec:conclusion}

Future research could focus on refining the search criteria for the optimal model head by selecting an appropriate $\beta$ threshold.
Additionally, a robust theoretical underpinning is needed to explain why shared representations improve the classification of known classes under distribution shifts. Integrating test-time or model-free domain adaptation methods, such as those by \citet{wang_tent_2020}, \citet{zhang_model_free_2023}, or \citet{saito2018open}, could further enhance \osr{} performance. It would be valuable to compare our approach with large foundation models. Although curating such a benchmark would require access to publicly available pretraining datasets like LAION, which would require significant effort and present unforeseen challenges.

\section*{Acknowledgments}
We acknowledge the guidance of Dr. Rama Chellappa in this project.
Furthermore, we acknowledge the support from the DARPA TIAMAT program under Grant No. HR00112490422. 
The views and conclusions contained in this document are those of the authors and should not be interpreted as representing the official policies, either expressed or implied, of the U.S. Government.

This work was also supported by the Gordon and Betty Moore Foundation through the grant titled 
“Safety Monitoring of Deployed Clinical AI via the Framework for Real-time Auditing of Individual Predictions (FRAP)” (GBMF ID \#12128).

We additionally acknowledge support from the NSF award titled 
“FW-HTF: Human-Machine Teaming for Medical Decision Making” (FAIN 1840088), which concluded in September 2024.

\bibliography{main}
\bibliographystyle{tmlr}

\newpage
\appendix

\section{Appendix} \label{sec:appendix}
\subsection{Notations and Keywords}\label{sec:notations}

\begin{table}[H]
\centering
\caption{Notations for CoLOR learning rules and theoretical analysis.}
\label{tab:notation_method}
\small
\begin{tabular}{ll}
\toprule
\textbf{Notation} & \textbf{Meaning} \\
\midrule

$k, [k]$ & Number of known (source) classes, known class labels $[1,...k]$ \\

$k+1$ & Novel / unknown class label \\

$x \in \gX$ & Input instance \\

$y \in \gY$ & Class label \\

$\datasource$ & Labeled source dataset $\{(x_i,y_i)\}_{i=1}^{N_S}$ \\

$\datatarget$ & Unlabeled target dataset $\{x_i\}_{i=1}^{N_T}$ \\

$N_S$ & Number of source samples \\

$N_T$ & Number of target samples \\

$N_{T,0}, N_{T,1}$ & Target known vs.\ novel samples (binary case) \\

$\gX, \gY$ & Input space and label space, $\gY=[k+1]$ \\

$P_{\gS} = P_{\gS}(x,y)$ & Source joint distribution over $(x,y)$ \\

$P_{\gT} = P_{\gT}(x,y)$ & Target joint distribution over $(x,y)$ \\

$P_{T,[k]}(x,y)$ & Target distribution restricted to known classes \\

$P_{T,k+1}(x)$ & Target distribution of the novel class \\

$\alpha := P_T(y=k+1)$ & Novel class proportion in target \\

\midrule

$\gH$ & Hypothesis class of open-set clasification models \\

$h:\gX \to \mathbb{R}^{k+1}$ & Model scoring all $k+1$ classes \\

$h_{\text{novel}}$ & Novelty detection score derived from $h$ \\

$h_c$ & Classification head for known classes \\

$h_{\DD}$ & ERM classifier distinguishing $P_S$ vs.\ $P_T$ \\

$h_{\ours}$ & Constrained learning solution (\cref{eq:learningrule}) \\

$w_{\DD}$ & Max-margin domain discriminator solution (\cref{eq:DD_color_objective}) \\

$w_{\ours}$ & Max-margin constrained separator (\cref{eq:DD_color_objective}) \\

\midrule

$\ell_{01}$ & Binary $0$-$1$ loss \\

$\ell_{\log}$ & Logistic / binary cross-entropy loss \\

$\ell_{\text{ce}}$ & Supervised cross-entropy loss \\

\midrule

$t \in \mathbb{R}$ & Decision threshold for novelty \\

$\beta(h) = \beta(h;t) = \mathbb{E}_{\rvx\sim\datasource}\left[\mathbf{1}\left[ h_{\mathrm{novel}}(\rvx; t) > t \right]\right]$ & False Positive Rate (FPR) on source \\

$\alpha(h) = \alpha(h;t) = \mathbb{E}_{\rvx\sim\datatarget}\left[\mathbf{1}\left[ h_{\mathrm{novel}}(\rvx; t) > t \right]\right]$ & Recall of novelty detection on target \\

$\hat{\alpha}$ & Target recall hyperparameter \\

$\hat{\beta}$ & Empirical FPR estimate \\

$\hat{\alpha}_{emp}(h)$ & Empirical estimate of target recall $\alpha(h)$ over $\datatarget$ \\

$\hat{\beta}_{emp}(h)$ & Empirical estimate of source FPR $\beta(h)$ over $\datasource$ \\

$\lambda_{\hat{\alpha}}$ & Lagrange multiplier for constraint \\

$\boldsymbol{\alpha} \in [0,1]^L$ & Grid of candidate novel proportions \\

$L$ & Number of novelty heads in CoLOR \\

\midrule

$\phi(x)$ & Backbone embedding model shared by $h_c$ and $h_{novel}$ heads \\

$w^c$ & Linear head for known-class classification \\

$w^\alpha_i$ & Novelty head for candidate $\hat{\alpha}_i$ \\

\midrule

$d$ & Feature dimension \\

$\mub=\mu$ & Background-shift direction \\

$\etab=\eta$ & Novel-class direction \\

$r_\mu=\|\mu\|$ & Magnitude of background shift \\

$r_\eta=\|\eta\|$ & Novel-class signal strength \\

$\sigma=1/\sqrt{d}$ & Noise scale \\

$\mathcal{N}(\cdot,\cdot)$ & Gaussian distribution \\

$\delta \in (0,1)$ & Failure probability in \cref{thm:th1} \\

\bottomrule
\end{tabular}
\end{table}

\begin{table}[H]
\centering
\caption{Keywords for OSDA problem setup, distributions, and evaluation metrics.}
\label{tab:notation_problem}
\small
\begin{tabular}{ll}
\toprule
\textbf{Keyword} & \textbf{Meaning} \\
\midrule




















$\mathrm{Supp}(\cdot)$ & Support of a distribution \\

Background shift & Shift where $\mathrm{Supp}(P_{T,[k]}) \subseteq \mathrm{Supp}(P_S)$ \\

Assumption 1 & Novel class separable from known classes \\

Linear-Gaussian theory (\cref{def:overparam_setting}) & Simplified overparameterized PU model \\







\midrule

$\auroc$ & Area Under ROC Curve \\

$\auprc$ & Area Under Precision-Recall Curve \\

$\oscr$ & Open-Set Classification Rate \\

Theorem 1 & Conditions in linear gaussian problem setup when \\
& $\auroc(w_{\DD})<0.5$ but $\auroc(w_{\ours})>0.9$ \\
\bottomrule
\end{tabular}
\end{table}

\subsection{Theoretical analysis}\label{sec:propositions}

Restating the statistical learning rule.
\begin{equation*}\label{eq:DD_color_objective}
\begin{aligned}
\rvw_{\text{\text{DD}}} & = \arg\min_{\rvw}\,\|\rvw\|& \\
\text{s.t.}\; & \rvw^\top\rvx \le -1  &\forall \rvx\in \datasource \\
              & \rvw^\top\rvx \ge 1   &\forall \rvx\in \datatarget
\end{aligned}
\qquad
\begin{aligned}
\rvw_{\mathrm{\text{color}}} & = \arg\min_{\rvw}\,\|\rvw\|& \\
\text{s.t.}\;       & \rvw^\top\rvx \le 0  &\forall \rvx\in \datasource \\
                    & \rvw^\top\rvx \ge 1 &\forall \rvx\in \datatarget
\end{aligned}
\end{equation*}

\begin{proof}[Proof sketch]
\label{sec:proof_sketch}
The proof consists of four parts. First, we show that $\mathrm{AU-ROC}(\rvw) = Q(\frac{\rvw^\top(\mub-\etab)}{\sigma\sqrt{\|\rvw\|^2 - (\rvw^\top(\mub+\etab))^2}})$, where $Q(\cdot)$ is the inverse Gaussian tail function. This means that whenever $\rvw_{\mathrm{\text{DD}}}^\top(\mub-\etab) \leq 0$, the domain discriminator algorithm will have $\mathrm{AU-ROC}(\rvw_{\mathrm{\text{DD}}}) \leq 0.5$. To derive ranges of problem parameters where this happens, we analyze the convex dual of the maximum-margin problem with an added constraint that $\rvw_{\mathrm{\text{DD}}}^\top(\mub-\etab) > 0$
and obtain a lower bound $\underline{ \gamma}$ on the value of the primal problem (i.e. a lower bound on the norm of a ``good'' solution). Then we guess a solution of the form $\rvw' = \mub + \sum_{i\in{U}}{\xi_i}$, which does not depend on $\etab$ at all and has AU-ROC smaller than $0.5$. Intuitively, $U$ is constructed by including the novel class examples and use the noise vectors of these examples to fit them and satisfy the constraints of the problem. Then, by showing that $\| \rvw'\| < \underline{\gamma} $ we prove that $\mathrm{AU-ROC}(\rvw_{\mathrm{\text{DD}}}) \leq 0.5$. This is summarized in \cref{proposition:DD_bad}.
Next in \cref{proposition:CoLOR good}, we follow a similar procedure with $\rvw_{\mathrm{\text{color}}}$ of analyzing a problem with an added constraint $\rvw^\top(\mub - \etab) < \tau$ for some $\tau > 0$. Again using weak duality and guessing a solution, we prove that $\rvw_{\mathrm{\text{color}}}$ will satisfy the constraint, this time showing that it obtains a high AU-ROC. Finally, by finding an intersection of parameters ranges where both bounds on the AU-ROC hold, we prove the main claim in \cref{thm:th1_app} (restatement of \cref{thm:th1}).
\end{proof}

\begin{lemma}\label{lemma:q_func}
    For a Gaussian two-feature PU problem, let $\hat{\etab} = \etab / \|\etab\|$ and $\hat{\mub}$ accordingly. We have that the AU-ROC of a model $\rvw\in{\bbR^d}$ is $Q(\frac{\langle \rvw, \etab - \mub \rangle}{\sigma\sqrt{(2\|\rvw\|^2 - \langle\rvw, \hat{\etab} \rangle^2 - \langle\rvw, \hat{\mub} \rangle ^2)}})$.
\end{lemma}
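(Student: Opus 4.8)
The scoring rule used for novelty detection is the linear map $s(\rvx) = \langle \rvw, \rvx\rangle$, evaluated on the target distribution where the positive (novel) class is drawn from $P_{\mathcal{T},1}$ and the negative (known) class from $P_{\mathcal{T},0}$. The plan is to use the standard fact that, for a fixed scoring rule, the AU-ROC equals the Mann--Whitney probability $\P_{\rvx^+ \sim P_{\mathcal{T},1},\,\rvx^- \sim P_{\mathcal{T},0}}\!\left[ s(\rvx^+) > s(\rvx^-) \right]$ with the two draws independent (ties occur with probability zero here, so the $\geq$ versus $>$ distinction is immaterial). Since an affine image of a Gaussian is Gaussian, I would observe that $s(\rvx^+) \sim \gN(\langle\rvw, -\eta\rangle,\; \rvw^\top \Sigma_1 \rvw)$ and $s(\rvx^-) \sim \gN(\langle\rvw, -\mu\rangle,\; \rvw^\top \Sigma_0 \rvw)$, where $\Sigma_1 = \sigma^2(I_d - r_\mu^{-2}\mu\mu^\top)$ and $\Sigma_0 = \sigma^2(I_d - r_\eta^{-2}\eta\eta^\top)$ are the covariances fixed in Definition~\ref{def:overparam_setting}.

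Next I would form the difference $D := s(\rvx^+) - s(\rvx^-)$, which by independence is $\gN\!\big(\langle\rvw,\mu-\eta\rangle,\; \rvw^\top(\Sigma_0+\Sigma_1)\rvw\big)$, so that the AU-ROC equals $\P[D > 0] = \Phi\!\big(\langle\rvw,\mu-\eta\rangle \,/\, \sqrt{\rvw^\top(\Sigma_0+\Sigma_1)\rvw}\big) = Q\!\big(\langle\rvw,\eta-\mu\rangle \,/\, \sqrt{\rvw^\top(\Sigma_0+\Sigma_1)\rvw}\big)$, using $Q(t) = 1-\Phi(t) = \Phi(-t)$ for the Gaussian tail function. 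It then remains only to simplify the variance term. Expanding $\rvw^\top\Sigma_0\rvw = \sigma^2(\|\rvw\|^2 - r_\eta^{-2}\langle\rvw,\eta\rangle^2)$ and recognizing $r_\eta^{-2}\langle\rvw,\eta\rangle^2 = \langle\rvw,\hat\eta\rangle^2$ since $\hat\eta = \eta/r_\eta$, and likewise $\rvw^\top\Sigma_1\rvw = \sigma^2(\|\rvw\|^2 - \langle\rvw,\hat\mu\rangle^2)$, gives $\rvw^\top(\Sigma_0+\Sigma_1)\rvw = \sigma^2\big(2\|\rvw\|^2 - \langle\rvw,\hat\eta\rangle^2 - \langle\rvw,\hat\mu\rangle^2\big)$. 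Substituting this into the previous display yields exactly the claimed expression.

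There is no serious obstacle here; the computation is routine Gaussian algebra. The only points requiring care are (i) correctly identifying which target components play the role of positive and negative class in the AU-ROC — the novel class $P_{\mathcal{T},1}$ versus the known target class $P_{\mathcal{T},0}$, which is what produces the $\eta-\mu$ term in the numerator rather than, say, an $\eta+\mu$ term — and (ii) tracking the rank-one deficiency of the covariances, so that each contributes a subtracted term $\langle\rvw,\hat\eta\rangle^2$ (resp.\ $\langle\rvw,\hat\mu\rangle^2$) built from the \emph{normalized} directions, not from $\langle\rvw,\eta\rangle^2$. I would state the Mann--Whitney characterization of AU-ROC explicitly at the outset so that the rest of the argument is self-contained, and then simply carry out the two displays above.
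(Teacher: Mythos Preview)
Your proposal is correct and takes essentially the same approach as the paper: both invoke the Mann--Whitney characterization of AU-ROC, reduce to the tail probability of a Gaussian difference $\langle\rvw,\rvx^+\rangle - \langle\rvw,\rvx^-\rangle$, and then expand the variance using the rank-one covariance structure. The only cosmetic difference is orientation: the paper writes AU-ROC as $P(\langle\rvw,\rvx_{\mathrm{novel}}\rangle < \langle\rvw,\rvx_{\gT,0}\rangle)$ and therefore ends up with $\langle\rvw,\mu-\eta\rangle$ in the numerator (which is the sign used in the subsequent propositions), whereas your standard convention $P(s(\rvx^+)>s(\rvx^-))$ yields $\langle\rvw,\eta-\mu\rangle$, matching the lemma statement verbatim.
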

\begin{proof}
Let $\rvx_{\mathrm{novel}}\sim P_{\gT, 1}, \rvx_{\gT, 0}\sim P_{\gT, 0}$, we have
\begin{align*}
\mathrm{AU-ROC}(\rvw) &= P(\langle \rvw, \rvx_{\mathrm{novel}} \rangle < \langle \rvw, \rvx_{\gT, 0}\rangle ) \\
&= P(\langle \rvw, -\etab + \xi_{\mathrm{novel}} \rangle < \langle \rvw, -\mub + \xi_{\mathrm{\gT, 0}}\rangle ) \\
&= P(\langle \rvw, -\etab + \mub \rangle  < \langle \rvw, -\xi_{\mathrm{novel}} + \xi_{\mathrm{\gT, 0}}\rangle ),
\end{align*}
where $\xi_{\mathrm{novel}}\sim\cN(0, \sigma^2(I_d-\hat{\mub}\hat{\mub}^\top))$ and $\xi_{\gT, 0}\sim\cN(0, \sigma^2(I_d-\hat{\etab}\hat{\etab}^\top))$. Then we have that $\langle \rvw, \xi_{\mathrm{novel}} - \xi_{\gT,0} \rangle \sim \cN\left(0, \sigma^2\left( 2\| \rvw \|^2 -\langle \rvw, \hat{\etab} \rangle^2 -\langle \rvw, \hat{\mub} \rangle^2 \right)\right)$. Dividing both sides of the argument above by a factor $\sigma\sqrt{2\|w\|^2 -\langle \rvw, \hat{\etab} \rangle^2 -\langle \rvw, \hat{\mub} \rangle^2}$ we get that for a variable $\xi\sim \cN(0, 1)$
\begin{align*}
\mathrm{AU-ROC}(\rvw) &= P\left(\frac{\langle \rvw, \mub - \etab \rangle}{\sigma \sqrt{2\|\rvw\|^2 - \langle \rvw, \hat{\etab}\rangle ^2 - \langle w, \hat{\mub}\rangle ^2} } < \xi \right) \\
&= Q\left(\frac{\langle \rvw, \mub - \etab \rangle}{\sigma \sqrt{2\|\rvw\|^2 - \langle \rvw, \hat{\etab}\rangle ^2 - \langle \rvw, \hat{\mub}\rangle ^2} }\right) 
\end{align*}

Note that when $\langle \rvw, \mub - \etab \rangle > 0$ then the AU-ROC is lower than $0.5$, and when $\langle \rvw, \mub - \etab \rangle < 0$ then $Q\left(\frac{\langle \rvw, \mub - \etab \rangle}{\sigma \sqrt{2\|\rvw\|^2 - \langle \rvw, \hat{\etab}\rangle ^2 - \langle \rvw, \hat{\mub}\rangle ^2} }\right) \geq Q\left(\frac{\langle \rvw, \mub - \etab \rangle}{\sqrt{2}\sigma\|\rvw\| }\right)$. Hence the RHS is a lower bound on the AU-ROC.

\end{proof}
\begin{proposition}\label{proposition:DD_bad}
    For $\sigma=1/\sqrt{d}$, let $\delta \in{(0,1)}$ be a failure probability, and $C_d, c_d$ constants, and consider the set of problems where 
    \begin{align*}
    C_d\log(\frac{c_d}{\delta}) &\leq \frac{\sqrt{d}}{4\sqrt{N_{\gS} + N_{\gT}}}, \\
    r_\mu &\geq \frac{1}{2\sqrt{N_{\gS} + N_{\gT}}} , \\
    r_\eta &\leq \frac{r_\mu}{4+8r_\mu\sqrt{N_{\gT,1}}}-\frac{1}{\sqrt{N_{\gS} + N_{\gT}}}
    \end{align*}
    There exists $C_d, c_d$ such that for any $\delta$, if we fix $\mub, \etab, r_{\mu}, r_{\eta}, N_{\gS}, N_{\gT,0}, N_{\gT, 1}$ that are in the above set, and drawing training data $(S_\gS, S_\gT)$ as described in problem definition. Let $\rvw_{\mathrm{\text{DD}}}$ as defined in \cref{eq:DD_color_objective}, then with probability at least $1-\delta$ we have that $AU-ROC(\rvw_{\mathrm{\text{DD}}}) \leq 0.5$.
\end{proposition}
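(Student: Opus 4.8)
The plan is to reduce the claim to a sign condition on $\langle\rvw_{\mathrm{\text{DD}}},\mu-\eta\rangle$ and then to establish that condition by a min-norm / weak-duality argument. By \cref{lemma:q_func} the quantity $\mathrm{AU-ROC}(\rvw)$ is a monotone function of $\langle\rvw,\mu-\eta\rangle$ (the denominator $\sigma\sqrt{2\|\rvw\|^2-\langle\rvw,\hat\mu\rangle^2-\langle\rvw,\hat\eta\rangle^2}$ is positive), equal to $Q(0)=0.5$ at $\langle\rvw,\mu-\eta\rangle=0$; following the sketch of \cref{thm:th1}, it therefore suffices to show $\langle\rvw_{\mathrm{\text{DD}}},\mu-\eta\rangle\le 0$. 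I would argue this by contradiction. Suppose $\langle\rvw_{\mathrm{\text{DD}}},\mu-\eta\rangle>0$. Then $\rvw_{\mathrm{\text{DD}}}$ is feasible for the max-margin program of \cref{eq:DD_color_objective} \emph{augmented} with the linear constraint $\langle\rvw,\mu-\eta\rangle\ge 0$; writing $\gamma'$ for the minimum norm in that augmented program we get $\|\rvw_{\mathrm{\text{DD}}}\|\ge\gamma'$. On the other hand $\rvw_{\mathrm{\text{DD}}}$ is the global min-norm point over the margin-only constraints, so $\|\rvw_{\mathrm{\text{DD}}}\|\le\|\rvw'\|$ for \emph{any} margin-feasible $\rvw'$. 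The contradiction will come from exhibiting one such $\rvw'$ together with a valid lower bound $\underline\gamma\le\gamma'$ for which $\|\rvw'\|<\underline\gamma$: this yields the impossible chain $\underline\gamma\le\gamma'\le\|\rvw_{\mathrm{\text{DD}}}\|\le\|\rvw'\|<\underline\gamma$, so in fact $\langle\rvw_{\mathrm{\text{DD}}},\mu-\eta\rangle\le 0$ and hence $\mathrm{AU-ROC}(\rvw_{\mathrm{\text{DD}}})\le 0.5$.

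For the lower bound I would square the objective (minimizing $\|\rvw\|$ is the same as minimizing $\tfrac12\|\rvw\|^2$) and take the Lagrangian dual of the augmented program: with $\tilde\rvx_i:=-\rvx_i$ for $\rvx_i\in\datasource$ and $\tilde\rvx_i:=\rvx_i$ for $\rvx_i\in\datatarget$, the dual objective is $g(\lambda,\nu)=\sum_i\lambda_i-\tfrac12\|\sum_i\lambda_i\tilde\rvx_i+\nu(\mu-\eta)\|^2$ over $\lambda\ge 0,\ \nu\ge 0$, and weak duality gives $\tfrac12\gamma'^2\ge g(\lambda,\nu)$ for every feasible $(\lambda,\nu)$. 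The key step is to plug in a hand-picked $(\lambda,\nu)$ — morally uniform weight on the novel examples plus a multiplier $\nu$ on the $\langle\rvw,\mu-\eta\rangle$ constraint — and to set $\underline\gamma:=\sqrt{2\,g(\lambda,\nu)}$. This certificate detects that under $\langle\rvw,\mu-\eta\rangle\ge 0$ the source and non-novel-target constraints force $\langle\rvw,\mu\rangle\lesssim-1$, hence $\langle\rvw,\eta\rangle\le\langle\rvw,\mu\rangle\lesssim-1$, which costs a $\hat\eta$-component of norm $\gtrsim 1/r_\eta$, so $\underline\gamma\gtrsim 1/r_\eta$. For the feasible point I would take $\rvw'=-c_\mu\hat\mu+\sum_{j\in\text{novel}}c_j\,\xi'_j$, where $\xi'_j:=\rvx_j+\eta$ is the noise part of the $j$-th novel example and $c_\mu\approx(1{+}\epsilon)/r_\mu$, $c_j\approx 1{+}\epsilon$: the term $-c_\mu\hat\mu$ sends $\datasource$ to $\approx-c_\mu r_\mu\le-1$ and the non-novel target points to $\approx c_\mu r_\mu\ge 1$ while contributing nothing to the novel scores (novel noise lies in $\hat\mu^\perp$ by \cref{def:overparam_setting}, and $\mu\perp\eta$ since the support-overlap half of \cref{ass:separability} forces it under these covariances), and each $c_j\xi'_j$ ``memorises'' its novel example, $\langle c_j\xi'_j,\rvx_j\rangle\approx c_j\|\xi'_j\|^2\ge 1$, without disturbing the rest. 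Then $\|\rvw'\|^2\approx c_\mu^2+\sum_j c_j^2\|\xi'_j\|^2\approx(1{+}\epsilon)^2(r_\mu^{-2}+N_{\gT,1})$, so $\|\rvw'\|\lesssim\sqrt{r_\mu^{-2}+N_{\gT,1}}$. The displayed hypotheses — in particular $r_\eta\le r_\mu/(4+8r_\mu\sqrt{N_{\gT,1}})-1/\sqrt{N_\gS+N_\gT}$ together with $r_\mu\ge 1/(2\sqrt{N_\gS+N_\gT})$ — are exactly calibrated so that $1/r_\eta>\sqrt{r_\mu^{-2}+N_{\gT,1}}$, i.e. $\underline\gamma>\|\rvw'\|$, closing the contradiction; the trailing $-1/\sqrt{N_\gS+N_\gT}$ supplies the slack absorbed by the $(1{+}\epsilon)$ losses and by the concentration errors.

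Every ``$\approx$'' above rests on a single high-probability event. Since $\sigma=1/\sqrt d$ with $d$ large, standard Gaussian-tail and union bounds give, with probability at least $1-\delta$, that the i.i.d.\ Gaussian noise vectors are nearly orthonormal after scaling: $\big|\|\xi_i\|^2-1\big|,\big|\|\xi'_j\|^2-1\big|$ and $|\langle\xi_i,\xi_j\rangle|,|\langle\xi'_i,\xi'_j\rangle|,|\langle\xi_i,\xi'_j\rangle|$ ($i\ne j$) are $O(\sqrt{\log(c_d/\delta)/d})$, and $|\langle\xi_i,\mu\rangle|,|\langle\xi'_j,\eta\rangle|$ are $O(r_\mu\sqrt{\log(c_d/\delta)/d})$, etc.; when the sums appearing in $g(\lambda,\nu)$ and in $\langle\rvw',\rvx\rangle$ are grouped appropriately (e.g.\ $\sum_j\langle\xi'_j,\xi_i\rangle=N_{\gT,1}\langle\bar\xi',\xi_i\rangle$), the stated condition $C_d\log(c_d/\delta)\le\sqrt d/(4\sqrt{N_\gS+N_\gT})$ makes each perturbation a small constant fraction of the $\Theta(1)$ gaps in the margin constraints and of the terms in the dual value, so the clean-case inequalities go through with the numerical constants displayed. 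I expect the real work to be (i) choosing the dual-feasible $(\lambda,\nu)$ so that $\underline\gamma$ is large enough — this is the step that actually ``rules out'' the cheap $\eta$-free way of satisfying the novel-example constraints once $r_\eta$ is small — and (ii) bookkeeping the concentration slack so it never overwhelms those $\Theta(1)$ gaps; forming the dual, the interpolation construction, and the final arithmetic comparing $1/r_\eta$ with $\sqrt{r_\mu^{-2}+N_{\gT,1}}$ are then routine.
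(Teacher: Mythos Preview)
Your high-level plan---augment the max-margin program with a linear constraint on $\langle\rvw,\mu-\eta\rangle$, lower-bound the augmented minimum norm via weak duality, exhibit a margin-feasible $\rvw'$ of strictly smaller norm, and conclude by contradiction---is exactly the strategy the paper uses. The gap is in the \emph{sign} of the augmented constraint, and it flips the whole conclusion. From \cref{lemma:q_func}, $\mathrm{AU\text{-}ROC}(\rvw)=Q\bigl(\langle\rvw,\mu-\eta\rangle/(\sigma\sqrt{\cdots})\bigr)$ with $Q$ the \emph{decreasing} Gaussian tail, so $\mathrm{AU\text{-}ROC}\le 0.5$ is equivalent to $\langle\rvw,\mu-\eta\rangle\ge 0$, not $\le 0$ (the main-text proof sketch you are following contains the same slip; check it against the lemma). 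As written, your contradiction establishes $\langle\rvw_{\text{DD}},\mu-\eta\rangle\le 0$ and hence $\mathrm{AU\text{-}ROC}\ge 0.5$. Flipping to the correct constraint $\langle\rvw,\mu-\eta\rangle\le 0$ breaks both halves of your sketch: your primal heuristic for the lower bound (``source forces $\langle\rvw,\mu\rangle\lesssim-1$, so under $\langle\rvw,\mu-\eta\rangle\ge 0$ also $\langle\rvw,\eta\rangle\lesssim-1$'') no longer applies, since under $\le 0$ one only gets $\langle\rvw,\eta\rangle\ge\langle\rvw,\mu\rangle$, which is vacuous; and your witness $\rvw'=-c_\mu\hat\mu+\sum_jc_j\xi'_j$ lies on the wrong side, $\langle\rvw',\mu-\eta\rangle\approx-c_\mu r_\mu<0$. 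A smaller related issue: even for your stated direction, the dual certificate that realizes your primal heuristic would need weight on the \emph{source} points (so that $\nu(\mu-\eta)$ cancels the $\mu$-part and leaves $\eta$), not ``uniform weight on the novel examples'' as you write.

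The paper's actual construction handles both halves differently. For the dual lower bound it sets $\lambda$ uniform on $U=\mathcal I_{\gT,1}\cup\mathcal I_{\text{Rand}}$, where $\mathcal I_{\text{Rand}}$ is a balanced random subset drawn half from $\mathcal I_{\gS}$ and half from $\mathcal I_{\gT,0}$, and then picks $\nu$ so that all $\mu$-contributions in $\nu(\eta-\mu)+\boldsymbol Z^\top\lambda$ cancel; what remains is $(\text{const})\cdot\eta+\bar\xi_U$, yielding $\|\rvw^*\|\gtrsim(2r_\eta+2/\sqrt{|U|})^{-1}$ with $|U|=N_\gS+N_\gT$ (this is the origin of the $-1/\sqrt{N_\gS+N_\gT}$ slack in the hypotheses). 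For the witness it takes $\rvw'=2(r_\mu^2-\sigma r_\mu t)^{-1}\mu-4\sigma^{-2}kd^{-1}\bar\xi_{\gT,1}$ with $k=N_{\gT,1}$, i.e.\ a \emph{positive} $\mu$ coefficient and the \emph{average} novel noise rather than a sum of individual vectors, verifies $\langle\rvw',\mu-\eta\rangle\ge 0$, and bounds $\|\rvw'\|\lesssim 2/r_\mu+8\sqrt{N_{\gT,1}}$; the stated condition on $r_\eta$ is then exactly the comparison $(2r_\eta+2/\sqrt{|U|})^{-1}>2/r_\mu+8\sqrt{N_{\gT,1}}$.
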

\begin{proof}
\textbf{Lower bound on norm of good solutions under DD:}\\
\label{sec:lowerbound_on_good_DD}
Lets take a Lagrangian
\begin{align*}
\cL(\rvw, \lambda, \nu) = \|\rvw\|^2 + \lambda_{\gS}^\top(\mathbf{1}_{\gS} - \boldsymbol{X}_{\gS}\rvw) + \lambda_{\gT}^\top(\mathbf{1}_{\gT} + \boldsymbol{X}_{\gT}\rvw) + \nu\cdot(\rvw^\top(\mub - \etab))
\end{align*}
and zero the gradient
\begin{align*}
\nabla_{\rvw}\cL(\rvw, \lambda, \nu) = 2\rvw - \boldsymbol{X}^\top_{\gS}\lambda_{\gS} + 
\boldsymbol{X}^\top_{\gT}\lambda_{\gT} + \nu\cdot(\mub - \etab) = 0 \\
\Leftrightarrow \rvw = \frac{1}{2}\left(\nu\cdot(\etab - \mub) + \boldsymbol{X}^\top_{\gS}\lambda_{\gS} - \boldsymbol{X}^\top_{\gT}\lambda_{\gT}\right)
\end{align*}
Plug back in to Lagrangian:
\begin{align*}
\cL(\lambda, \nu) = \lambda_{\gS}^{\top}\mathbf{1}_{\gS} + \lambda_{\gT}^{\top}\mathbf{1}_{\gT} - \frac{1}{4}\|\nu\cdot(\etab - \mub) + \boldsymbol{Z}^\top\lambda\|^2
\end{align*}
Here we used the notation $\boldsymbol{Z}$ to denote a matrix whose $i$-th row equals $\rvx_iy_i$, the set of indices of rows where $\rvx_i\in{S_{\gS}}$ as $\mathcal{I}_{\gS}$, and similarly for $\mathcal{I}_{\gT, 0}, \mathcal{I}_{\gT, 1}$.

Let $\alpha > 0$, to be determined later, $\nu = \frac{(|U| - k)2\alpha}{|U|}$, and define the set $U = \mathcal{I}_{\gT,1} \cup \mathcal{I}_{\mathrm{Rand}}$ where $\mathcal{I}_{\mathrm{Rand}}$ has $\frac{|U|}{2\alpha}\nu$ indices randomly drawn from $\mathcal{I}_{\gS}$ and half from $\mathcal{I}_{\gT, 0}$. Hence, $|U|=N$ Set $\lambda = \frac{\alpha}{|U|}\mathbf{1}_U$ and plug back into the Lagrangian. We have that
\begin{align*}
\boldsymbol{Z}^\top\lambda = \nu\mub + \frac{\alpha}{|U|}\left[ k\etab + \sum_{i\in{U}}{\xi_i y_i} \right].
\end{align*}
Hence denoting $\bar{\xi}_U = \frac{1}{|U|}\sum_{i\in{U}}{\xi_iy_i}$ as the average added noise vector (multiplied by the label) over set $U$, we have
\begin{align*}
\mathcal{L}(\alpha, \nu) = \alpha -\frac{1}{4}\| \etab(\nu + k\frac{\alpha}{|U|}) + \alpha\bar{\xi}_U \|^2
\end{align*}
Plugging in the value we already set for $\nu$,
\begin{align*}
\mathcal{L}(\alpha) = \alpha -\frac{1}{4}\| \etab(\frac{\alpha(2|U| - k)}{|U|}) + \alpha\bar{\xi}_U \|^2
\end{align*}
Taking the maximum over $\alpha$, and denoting the optimal solution to the primal optimization problem by $\rvw^*$, we get with probability at least $1-2\exp(-c t^2)$
\begin{align*}
\|\rvw^*\|^2 \geq \mathcal{L}(\alpha^*) &= 4\| \etab(\frac{2|U| - k}{|U|}) + \bar{\xi}_U \|^{-2} \\
&= 4\left( r^2_\eta(2- k/|U|)^2 + \|\bar{\xi}_U\|^2 + 2(2-k/|U|)\etab^\top\bar{\xi}_U \right)^{-1} \\
&\geq 4\left( r^2_\eta(2- k/|U|)^2 + |U|^{-2}\sigma^2(d + 2t\sqrt{d}) + 2(2-k/|U|)r_\eta\sigma |U|^{-1}t \right)^{-1}
\end{align*}

\textbf{Upper bound on norm of bad solutions under DD:\\}
\label{sec:upperbound_on_bad_DD}
Now we will guess a solution $\rvw_{\mathrm{\text{DD}}}$, show it satisfies the constraints of the problem and, under some conditions, achieves lower norm than the bound we obtained in the last section. 
\begin{lemma}
Let $t>0$ then with probability at least $1-2\exp\left[-ct^2k^{-2}\right]-2\exp\left[-ct^2/2\right]-3\exp\left[-ct^2/4\right]-2\exp\left[-ct^2\right]$, the model
\begin{align*}
\rvw_{\mathrm{\text{DD}}} = 2\left( r^2_{\mu} - \sigma r_\mu t\right)^{-1} \cdot \mub - 4\sigma^{-2}kd^{-1}\cdot \bar{\xi}_{\gT,1},
\end{align*}
satisfies $y_i\rvw_{\mathrm{\text{DD}}}^\top\rvx_i \geq 1$ for all $i\in{U}$ and $\rvw_{\mathrm{\text{DD}}}^\top(\mub-\etab) \geq 0$.
\end{lemma}
\begin{proof}
For $i\notin{\mathcal{I}_{\gT,1}}$, 
\begin{align*}
y_i\cdot \rvw^\top_{\mathrm{\text{DD}}}\rvx_i &= \left[2\left( r^2_{\mu} - \sigma r_\mu t \right)^{-1}\cdot \mub - 4\sigma^{-2}kd^{-1}\bar{\xi}_{\gT,1}\right]^\top\left[ \mub + y_i\xi_i \right] \\
&= 2(r^2_{\mu} - \sigma r_{\mu} t)^{-1}(r^2_\mu + y_i\mub^\top\xi_i) - 4\sigma^{-2}kd^{-1}y_i\xi_i^{\top}\bar{\xi}_{\gT,1}
\end{align*}
From \cref{eq:bernstein} we have that $|y_i\xi_i^\top\bar{\xi}_{\gT,1}| < \sigma^2 t\sqrt{\frac{d}{k}}$ w.p. at least $1 - 2\exp \left[-ct^{2}\right]$ for $t\leq d\sqrt{|U|}\sigma^2$. Hence, we can say w.p. at least $1 - 2\exp \left[-ct^{2}\right]$,
\begin{align*}
y_i\cdot \rvw^\top_{\mathrm{\text{DD}}}\rvx_i &\geq 2(r^2_\mu-\sigma r_\mu t)^{-1}(r^2_\mu + y_i\mub^\top\xi_i) - 4\sigma^{-2}kd^{-1} \cdot \sigma^2 t\sqrt{\frac{d}{k}} \\
&= 2(r^2_\mu-\sigma r_\mu t)^{-1}(r^2_\mu + y_i\mub^\top\xi_i) - 4t\sqrt{\frac{k}{d}} \\
&\geq
2(r^2_\mu-\sigma r_\mu t)^{-1}(r^2_\mu + y_i\mub^\top\xi_i) - 1 \text{ \hspace{4mm} for $k\leq d/(16t^2)$ and $t>1$}
\end{align*}
We also have that $|y_i\mub^\top\xi_i| < t\sigma r_\mu$ w.p. at least $1-2\exp(-ct^2/2)$. Putting these together
\begin{align*}
y_i\cdot \rvw^\top_{\mathrm{\text{DD}}}\rvx_i &\geq 2(r^2_\mu-\sigma r_\mu t)^{-1}(r^2_\mu - \sigma r_{\mu} t) - 1 = 1.
\end{align*}

Now for $i\in{\mathcal{I}_{\gT,1}}$, we repeat these steps, but this time use \cref{eq:i_in_U_innerproduct} to bound $|\xi^\top_i\bar{\xi}_{\gT,1}| \geq \frac{\sigma^2d}{2k}$ with probability at least $1-3\exp(-ct^2)$ for $t\leq\frac{\sqrt{d}}{2(2 + \sqrt{|U|-1})}$ and $|\etab^\top \bar{\xi}_{\gT,1}|<t\sigma r_\eta k^{-1}$ w.p. at least $1 - 2\exp\left(-ct^2/2\right)$. We get again that
\begin{align*}
y_i\cdot \rvw^\top_{\mathrm{\text{DD}}}\rvx_i &= 2(r^2_{\mu} - \sigma r_{\mu} t)^{-1}(-y_i \mub^\top \etab + y_i\cdot \mub^\top \xi_i) - 4\sigma^{-2}kd^{-1}(-y_i\etab^\top\bar{\xi}_{\gT,1} + y_i\cdot \xi_i^\top\bar{\xi}_{\gT,1})\\
&= 4\sigma^{-2}kd^{-1}(-\etab^\top\bar{\xi}_{\gT,1} + \xi_i^\top\bar{\xi}_{\gT,1}) \\
&\geq 4\sigma^{-2}kd^{-1}(-\frac{t\sigma r_\eta}{k} + \frac{\sigma^2d}{2k})\\
\end{align*}
For $r_\eta \leq \frac{\sigma d}{4t}$
\begin{align*}
y_i\cdot \rvw^\top_{\mathrm{\text{DD}}}\rvx_i & \geq 4\sigma^{-2}kd^{-1}(\frac{\sigma^2d}{4k})  = 1
\end{align*}

Finally, let us calculate $\rvw_{\mathrm{\text{DD}}}^\top(\mub-\etab)$,
\begin{align*}
\rvw_{\mathrm{\text{DD}}}^\top(\mub-\etab) &= 2(r_{\mu}^2 -\sigma r_{\mu}t)^{-1}r^2_{\mu} + 4\sigma^{-2}kd^{-1}\etab^\top\bar{\xi}_{\gT,1} \\
&\geq 2(1-\sigma r^{-1}_{\mu}t)^{-1} + 4\sigma^{-2}kd^{-1}\sigma r_\eta t k^{-1}\\
&= 2(1-\sigma r^{-1}_{\mu}t)^{-1} + 4\sigma^{-1}d^{-1}r_\eta t\\
&\geq 0 \text{ \hspace{4mm} for $r_\mu\geq\sigma t$ or more tighter condition is $r_\eta \geq \frac{\sigma d}{2t}(\sigma r_\mu^{-1}t - 1)^{-1}$.}
\end{align*}


The first inequality holds w.p. at least $1-2\exp\{-c t^{2}\}$.
Hence, for $\frac{\sigma d}{2t}(\sigma r_\mu^{-1}t - 1)^{-1} \leq r_\eta\leq \frac{\sigma d}{4t}$, we have w.p. at least $1-2\exp\{-c t^{2}\}$ that
\begin{align*}
    \rvw_{\mathrm{\text{DD}}}^\top(\mub-\etab) > 0 
\end{align*}

Overall, taking the union bounds so that these inequalities hold over the entire dataset, we get that the solution $\rvw_{\mathrm{\text{DD}}}$ satisfies all the constraints with probability $1-3|U|\exp(-ct^2/4) - 2\exp(-ct^2/2) - 2\exp(-ct^2)$.
\end{proof}

Requirements for parameters:
\begin{itemize}
\item $t\leq d\sqrt{|U|}\sigma^2$ 
\item $r_\mu \geq \sigma t$
\item $t\leq\frac{\sqrt{d}}{2(2 + \sqrt{|U|-1})}$
\item $\frac{\sigma d}{2t}(\sigma r_\mu^{-1}t - 1)^{-1} \leq r_\eta\leq \frac{\sigma d}{4t}$ (Lower bound is negative, so can be ignored)
\end{itemize}

\textbf{Finding ranges where AU-ROC is smaller than 0.5:\\}
Compare $\|\rvw_{\mathrm{\text{DD}}}\|$ and lower bounds on $\|\rvw^*\|$ obtained in earlier parts and find regions where we must have $\|\rvw_{\mathrm{\text{DD}}}\| < \|\rvw^*\|$

From Section~\ref{sec:upperbound_on_bad_DD}

We have that
\[
\|\rvw_{\mathrm{\text{DD}}}\| = \left\| \frac{2}{r^2_{\mu} - \sigma r_{\mu}t} \cdot \mub - \frac{4k}{\sigma^{2}d} \bar{\xi}_{\gT,1} \right\|.
\]
Hence from triangle inequality,
\[
\|\rvw_{\mathrm{\text{DD}}}\| \leq \frac{2}{r^2_{\mu} - \sigma r_{\mu}t_1} \cdot \|\mub\| + \frac{4k}{\sigma^{2}d} \cdot \|\bar{\xi}_{\gT,1}\|.
\]
We know that $\|\mub\| = r_{\mu}$ and $\bar{\xi}_{\gT,1} \sim \cN(0,\frac{\sigma^2}{k}I_d)$. Hence, using the concentration bound in \ref{eq:noise_chi_squared_gaussian}, we have with probability at least $1-\exp{\left(-\frac{t^2}{4}\right)}$ that
\[
\|\bar{\xi}_{\gT,1}\| \leq \sigma\sqrt{\frac{d + 2t\sqrt{d}}{k}}
\]
Hence with probability at least $1-\exp{\left(-\frac{t^2}{4}\right)}$,
\begin{align*}
\|\rvw_{\mathrm{\text{DD}}}\| &\leq \frac{2}{r_\mu - \sigma t} + \frac{4k}{\sigma d}\sqrt{\frac{d + 2t\sqrt{d}}{k}}\\
&\leq \frac{2}{r_\mu - \sigma t} + 8\sigma^{-1}\sqrt{\frac{k}{d}} \text{ \hspace{4mm} for $t\leq \frac{3}{2}\sqrt{d}$}
\end{align*}


We also have from section \ref{sec:lowerbound_on_good_DD}, 
\[\|\rvw^*\| \geq 2 \| \etab(\frac{2|U| - k}{|U|}) + \bar{\xi}_U \|^{-1}\]
\[\|\rvw^*\| \geq 2 (\|\etab\|(\frac{2|U| - k}{|U|}) + \|\bar{\xi}_U\|)^{-1}\]

Similarly, using concentration bound in \ref{eq:noise_chi_squared_gaussian} for $\|\bar{\xi}_U\|$, we get with probability at least $1-\exp{\left(-\frac{t ^2}{4}\right)}$, 
\begin{align*}
\|\rvw^*\| \geq \frac{2}{r_\eta(2-k/|U|) + \sigma \sqrt{\frac{d+2t\sqrt{d}}{|U|}}}
\end{align*}
For $t\leq \frac{3}{2}\sqrt{d} \implies \sqrt{d+2t\sqrt{d}}\leq 2\sqrt{d}$, hence
\begin{align*}
\|\rvw^*\| \geq \frac{2}{r_\eta(2-k/|U|) + 2\sigma \sqrt{\frac{d}{|U|}}}
\end{align*}
We know that $k\geq 0$,
Hence, the following inequality will guarantee \ref{eq:DD_to_prove} with probability at least $1-\exp{\left(-\frac{t ^2}{4}\right)}$,
\begin{align}\label{eq:w*_lower_bound}
\|\rvw^*\| \geq \frac{2}{2r_\eta + 2\sigma \sqrt{\frac{d}{|U|}}}
\end{align}

By union bounds w.p. at least $1-2\exp{\left(-\frac{t ^2}{4}\right)}$, we want conditions for
\begin{align*}
\frac{2}{2r_\eta + 2\sigma \sqrt{\frac{d}{|U|}}} \geq \frac{2}{r_\mu - \sigma t} + 8\sigma^{-1}\sqrt{\frac{k}{d}} \\
\frac{1}{2r_\eta + 2\sigma \sqrt{\frac{d}{|U|}}} \geq \frac{1}{r_\mu - \sigma t} + \frac{4\sqrt{k}}{\sigma\sqrt{d}} \\
\end{align*}
\begin{align}\label{eq:DD_to_prove}
2r_\eta + 2\sigma \sqrt{\frac{d}{|U|}} \leq \left(\frac{1}{r_\mu - \sigma t} + \frac{4\sqrt{k}}{\sigma\sqrt{d}}\right)^{-1}
\end{align}
For $r_\mu\geq 2\sigma t$, $\left(\frac{2}{r_\mu} + \frac{4\sqrt{k}}{\sigma\sqrt{d}}\right)^{-1} \leq \left(\frac{1}{r_\mu - \sigma t} + \frac{2\sqrt{k}}{\sigma\sqrt{d}}\right)^{-1}$. Hence, eq. \ref{eq:DD_to_prove} is satisfied if the following is guaranteed,
\begin{align*}
2r_\eta + 2\sigma \sqrt{\frac{d}{|U|}} \leq \left(\frac{2}{r_\mu} + \frac{4\sqrt{k}}{\sigma\sqrt{d}}\right)^{-1}\\
r_\eta \leq \frac{\sigma\sqrt{d}}{4} \left(\frac{r_\mu}{\sigma\sqrt{d} + 2r_\mu \sqrt{k}} - \frac{4}{\sqrt{|U|}}\right)
\end{align*}
Or more tighter condition that will guarantee eq. \ref{eq:DD_to_prove} is the following using $r_\mu \geq 2 \sigma t$
\begin{align*}
r_\eta \leq \frac{\sigma\sqrt{d}}{4} \left(\frac{2t}{\sqrt{d} + 4t \sqrt{k}} - \frac{4}{\sqrt{|U|}}\right)
\end{align*}
This means that the above parameter requirements are sufficient to guarantee that a domain discriminator is a suboptimal solution to the novelty detection problem under background shift with probability at least $1-2\exp{\left(-\frac{t ^2}{4}\right)}$.

Parameter requirements to guarantee eq. \ref{eq:DD_to_prove} w.p. at least $1-2\exp{\left(-\frac{t ^2}{4}\right)}$:
\begin{itemize}
\item $t\leq d\sqrt{|U|}\sigma^2$ 
\item $t\leq \frac{3}{2}\sqrt{d}$  \hspace*{1cm}  (redundant see next condition below.)
\item $t\leq\frac{\sqrt{d}}{2(2 + \sqrt{|U|-1})}$
\item $r_\mu \geq 2\sigma t$
\item $\frac{\sigma d}{2t}(\sigma r_\mu^{-1}t - 1)^{-1} \leq r_\eta\leq \frac{\sigma d}{4t}$ \hspace*{1cm} (Negative lower bound, so can be ignoredy)
\item $r_\eta \leq \frac{\sigma\sqrt{d}}{4} \left(\frac{r_\mu}{\sigma\sqrt{d} + 2r_\mu \sqrt{k}} - \frac{4}{\sqrt{|U|}}\right)$
\end{itemize}

\end{proof}

\begin{proposition}\label{proposition:CoLOR good}
    For $\sigma=1/\sqrt{d}$, there exist some constants $C_d$ and $c_d$, and for any failure probability $0\leq\delta\leq1$, if
    \begin{align*}
    C_d\log(\frac{c_d}{\delta}) &\leq \frac{\sqrt{d}}{4\sqrt{N_{\gS} + N_{\gT}}}, \\
    \frac{1}{2\sqrt{N_{\gT}}} &\leq r_\mu \leq \min\left(2\sqrt{N_{\gT}}(1-5\tau), 16\sqrt{N_{\gT,1}}\tau\right), \\
    r_\eta &\geq \frac{1}{2\sqrt{N_{\gT,1}}}, \\
    \frac{3}{r_\eta}+\frac{3-5\tau}{r_\mu} &\leq\frac{3}{8}\sqrt{N_{\gT,1}}-2
    \end{align*}
    then with probability at least $1-\delta$ over the drawing of $(S_\gS, S_\gT)$ as described in problem, $AU-ROC(\hat{\rvw}_{\mathrm{\text{color}}}) \geq 0.9$ as defined in first part of \cref{eq:DD_color_objective}
    is at least $Q\left(\frac{-\tau\sqrt{d}}{ \sqrt{2}\|\hat{\rvw}_{\mathrm{\text{color}}}\|}\right) \geq Q\left(\frac{-\tau\sqrt{d}}{ \sqrt{2}(\frac{3}{r_\eta}+\frac{3-5\tau}{r_\mu}+2)}\right) \geq Q\left(\frac{-\tau\sqrt{d}}{\sqrt{2}(1-\tau)}\right)$. 
\end{proposition}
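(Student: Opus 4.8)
The plan is to reduce $\mathrm{AU\text{-}ROC}(\hat{\rvw}_{\mathrm{\text{color}}})$ to a single scalar ratio with \cref{lemma:q_func} and then control that ratio. By \cref{lemma:q_func}, $\mathrm{AU\text{-}ROC}(\hat{\rvw}_{\mathrm{\text{color}}})=Q\!\left(\tfrac{\langle\hat{\rvw}_{\mathrm{\text{color}}},\,\mu-\eta\rangle}{\sigma\sqrt{2\|\hat{\rvw}_{\mathrm{\text{color}}}\|^2-\langle\hat{\rvw}_{\mathrm{\text{color}}},\hat{\eta}\rangle^2-\langle\hat{\rvw}_{\mathrm{\text{color}}},\hat{\mu}\rangle^2}}\right)$, and the last line of that lemma's proof shows that once $\langle\hat{\rvw}_{\mathrm{\text{color}}},\mu-\eta\rangle\le 0$ one has $\mathrm{AU\text{-}ROC}(\hat{\rvw}_{\mathrm{\text{color}}})\ge Q\!\left(\tfrac{\langle\hat{\rvw}_{\mathrm{\text{color}}},\mu-\eta\rangle}{\sqrt2\,\sigma\|\hat{\rvw}_{\mathrm{\text{color}}}\|}\right)$. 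Hence it is enough to show, with probability $\ge 1-\delta$, both (A) $\langle\hat{\rvw}_{\mathrm{\text{color}}},\mu-\eta\rangle\le-\tau$ and (B) $\|\hat{\rvw}_{\mathrm{\text{color}}}\|\le\tfrac{3}{r_\eta}+\tfrac{3-5\tau}{r_\mu}+2$; substituting $\sigma=1/\sqrt d$ and using that $Q$ is decreasing then yields the displayed chain of inequalities.

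\textbf{Bounding the norm (B).} I would exhibit an explicit feasible point $\rvw'$ of the max-margin program that defines $\hat{\rvw}_{\mathrm{\text{color}}}$ in \cref{eq:DD_color_objective} (the constraints $\langle\rvw,\rvx\rangle\le 0$ on $\datasource$ and $\langle\rvw,\rvx\rangle\ge1$ on $\datatarget$), built from the signal directions $\mu$ and $\eta$ together with a small correction $\rvr$ supported on the noise directions of the novel examples (this is where overparameterization $d>N_{\gS}+N_{\gT}$ enters: those noise vectors are essentially mutually orthogonal). Concretely $\rvw'=a\eta+b\mu+\rvr$ with $|a|=\Theta(1/r_\eta^2)$, $|b|=\Theta(1/r_\mu^2)$ and the signs of $a,b$ opposing the novel and source class means, so that the relevant projections supply margins of order $3$. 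One then checks that $\rvw'$ clears all four constraint families with high probability: the source constraints $\langle\rvw',\rvx\rangle\le 0$ hold because the systematic term $\Theta(b r_\mu^2)<0$ dominates $|b|\,|\langle\mu,\xi_i\rangle|$ after a union bound over the $N_{\gS}$ source points (using $r_\mu\ge 1/(2\sqrt{N_{\gT}})$ and $d$ large, so $\sigma$ is tiny); the target constraints on the non-novel $P_{\gT,0}$ points hold using $r_\mu\le 2\sqrt{N_{\gT}}(1-5\tau)$; the target constraints on the novel $P_{\gT,1}$ points hold, with help from $\rvr$, using $r_\eta\ge 1/(2\sqrt{N_{\gT,1}})$ and $r_\mu\le 16\sqrt{N_{\gT,1}}\tau$; and the separating inequality $\langle\rvw',\mu-\eta\rangle\le-\tau$ holds using the last hypothesis $\tfrac{3}{r_\eta}+\tfrac{3-5\tau}{r_\mu}\le\tfrac38\sqrt{N_{\gT,1}}-2$. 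The triangle inequality together with the $\chi^2$-tail and Bernstein-type bounds already used in \cref{proposition:DD_bad} (e.g. \cref{eq:noise_chi_squared_gaussian}, \cref{eq:bernstein}, \cref{eq:i_in_U_innerproduct}) give $\|\rvw'\|\le|a|\,\|\eta\|+|b|\,\|\mu\|+\|\rvr\|\le\tfrac{3}{r_\eta}+\tfrac{3-5\tau}{r_\mu}+2$, and since $\hat{\rvw}_{\mathrm{\text{color}}}$ is the minimum-norm feasible point, $\|\hat{\rvw}_{\mathrm{\text{color}}}\|\le\|\rvw'\|$, which is (B).

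\textbf{Landing in the good region (A).} Here I would mirror the weak-duality argument of \cref{proposition:DD_bad}. Consider the program $\min_{\rvw}\|\rvw\|$ subject to the $\hat{\rvw}_{\mathrm{\text{color}}}$ constraints \emph{plus} the extra constraint $\langle\rvw,\mu-\eta\rangle\ge-\tau$, i.e. restricted to the ``bad'' half-space (a superset of $\{\langle\rvw,\mu-\eta\rangle\ge 0\}$, where the AU-ROC need not exceed $\tfrac12$). Forming its Lagrangian and zeroing the gradient in $\rvw$ gives $\rvw=\tfrac12\bigl(\nu(\eta-\mu)+\boldsymbol{Z}^\top\lambda\bigr)$ exactly as in \cref{proposition:DD_bad} ($\boldsymbol{Z}$ the signed data matrix), and evaluating the dual at multipliers concentrated on the novel-class index set together with a few indices sampled from $\mathcal{I}_{\gS}$ and $\mathcal{I}_{\gT,0}$ yields a lower bound $\underline{\gamma}$ on the value of this ``bad'' program. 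Under the stated parameter conditions one verifies $\|\rvw'\|<\underline{\gamma}$; since $\hat{\rvw}_{\mathrm{\text{color}}}$ is feasible for the less-constrained program and has norm $\le\|\rvw'\|<\underline{\gamma}$, it cannot lie in the ``bad'' half-space, i.e. $\langle\hat{\rvw}_{\mathrm{\text{color}}},\mu-\eta\rangle<-\tau$, which is (A). Finally I would take a union bound over every high-probability event used in (A) and (B) and pick the absolute constants $C_d,c_d$ so that the total failure probability is $\le\delta$; the hypothesis $C_d\log(c_d/\delta)\le\sqrt d/(4\sqrt{N_{\gS}+N_{\gT}})$ is precisely what forces the sub-Gaussian deviation level $t\asymp\sqrt{\log(c_d/\delta)}$ to be small relative to $\sqrt d$, as all the concentration steps need.

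\textbf{Expected obstacle.} The skeleton — \cref{lemma:q_func}, then a weak-duality lower bound on ``bad'' solutions, then a guessed ``good'' feasible point that undercuts it — is the same as in \cref{proposition:DD_bad}, so the concentration inequalities themselves (Hanson--Wright / $\chi^2$ tails and Bernstein bounds for inner products of Gaussian noise vectors) are routine. The real work, and the main obstacle, is twofold: (i) designing $\rvw'$ so that it meets all four constraint families at once while its norm stays inside the tight budget $\tfrac{3}{r_\eta}+\tfrac{3-5\tau}{r_\mu}+2$; and (ii) the bookkeeping of matching each parameter inequality to the constraint it controls and certifying that $\|\rvw'\|$ is strictly below the ``bad''-region lower bound $\underline{\gamma}$ over the common overlap of the parameter ranges. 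I expect (ii) to be the crux: the constants (the $5\tau$, the $\tfrac38\sqrt{N_{\gT,1}}$, the additive $2$) have to be threaded through both the feasibility check and the dual bound consistently, which is exactly where the somewhat opaque form of the hypotheses comes from.
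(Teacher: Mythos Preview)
Your skeleton is exactly the paper's: reduce via \cref{lemma:q_func}, lower-bound the norm of any feasible point in the ``bad'' half-space $\langle\rvw,\mu-\eta\rangle\ge-\tau$ by weak duality, exhibit an explicit feasible $\rvw'$ whose norm undercuts that bound, and read off both (A) and (B).

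Two execution details you have carried over from \cref{proposition:DD_bad} that differ here. First, the correction term is $\rvr=-\beta\,\bar{\xi}_{S_{\gT,0}}$, the averaged noise of the \emph{non-novel} target points, not the novel ones; this is what supplies the margin-$1$ constraint on $\mathcal{I}_{\gT,0}$ (via \cref{eq:i_in_U_innerproduct}), while the $\eta$-component alone handles $\mathcal{I}_{\gT,1}$ and the correction actually works slightly against you there. Second, the dual multipliers for the bad-region lower bound are set uniformly on all of $S_{\gT}$ (i.e.\ $\lambda=\tfrac{\nu}{|U|}\mathbf{1}_{S_\gT}$), not the ``novels plus sampled $\gS/\gT,0$'' construction of \cref{proposition:DD_bad}: with this choice $\boldsymbol{Z}^\top\lambda=\nu(\eta-\mu)+\nu\bar{\xi}_{S_\gT}$ cancels $\nu(\mu-\eta)$ exactly, leaving the clean bound $\underline{\gamma}^2=\tfrac{3}{4}(1-\tau)^2/\|\bar{\xi}_{S_\gT}\|^2$. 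Relatedly, you have the role of the hypotheses swapped: the last hypothesis $\tfrac{3}{r_\eta}+\tfrac{3-5\tau}{r_\mu}\le\tfrac{3}{8}\sqrt{N_{\gT,1}}-2$ is precisely what forces $\|\rvw'\|<\underline{\gamma}$, whereas $\langle\rvw',\mu-\eta\rangle\le-\tau$ follows from the $r_\mu\le 16\sqrt{N_{\gT,1}}\,\tau$ bound.
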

\begin{proof}
\textbf{Lower bound on norm of bad solutions under \ours{}:\\}
This time the Lagrangian is
\begin{align*}
\cL(\rvw, \lambda, \nu) = \|\rvw\|^2  -\lambda_{\gS}^\top( \boldsymbol{X}_{\gS}\rvw) + \lambda_{\gT}^\top(\mathbf{1}_{\gT} + \boldsymbol{X}_{\gT}\rvw) - \nu\cdot(\rvw^\top(\mub - \etab) + \tau)
\end{align*}
and zero the gradient
\begin{align*}
\nabla_{\rvw}\cL(\rvw, \lambda, \nu) = 2\rvw - \boldsymbol{X}^\top_{\gS}\lambda_{\gS} + 
\boldsymbol{X}^\top_{\gT}\lambda_{\gT} - \nu\cdot(\mub - \etab) = 0 \\
\Leftrightarrow \rvw = \frac{1}{2}\left(\nu\cdot(\mub - \etab) + \boldsymbol{X}^\top_{\gS}\lambda_{\gS} - \boldsymbol{X}^\top_{\gT}\lambda_{\gT}\right)
\end{align*}
Plug back in to Lagrangian:
\begin{align*}
\cL(\lambda, \nu) = - \nu\cdot \tau + \lambda_{\gT}^{\top}\mathbf{1}_{\gT} - \frac{1}{4}\|\nu\cdot(\mub - \etab) + \boldsymbol{Z}^\top\lambda\|^2
\end{align*}
Consider setting $\lambda = \frac{\nu}{|U|}\mathbf{1}_{S_{\gT}}$, where $U$ has randomly drawn $|U|$ ($|U|=N_\gT$) examples from $S_{\gT}$. Also set $\nu = \frac{(1-\tau)}{\| \bar{\xi}_{S_{\gT}} \|^2}$. We have
\begin{align*}
\boldsymbol{Z}^\top\lambda &= -\nu\cdot \mub + \nu\bar{\xi}_{S_{\gT,0}}+\nu\cdot \etab + \nu\bar{\xi}_{S_{\gT,1}}\\
&= \nu(\etab-\mub) + \nu\bar{\xi}_{S_{\gT}}
\end{align*}
and then the lagrangian
\begin{align*}
\cL(\lambda, \nu) &= \nu\cdot(1-\tau) - \frac{1}{4}\| \nu\bar{\xi}_{S_{\gT}}\|^2 \\
&= \nu\cdot(1-\tau) - \frac{1}{4}\nu^2\left( \|\bar{\xi}_{S_{\gT}}\|^2\right)\| \\
\cL(\lambda)&= \frac{3}{4}\frac{(1-\tau)^2}{\| \bar{\xi}_{S_{\gT}} \|^2}
\end{align*}

\begin{align*}
    \|\hat{\rvw}_{\mathrm{color}}\|^2 \geq \cL(\lambda^*) = \frac{3}{4}\frac{(1-\tau)^2}{\| \bar{\xi}_{S_{\gT}} \|^2}
\end{align*}

\textbf{Getting feasible solution for our optimization problem:}
We will guess a solution of the form:
\begin{align*}
\rvw_{\mathrm{color}} = \alpha \etab + (\frac{\alpha r^2_{\eta}}{r^2_{\mu}} - \frac{5\tau}{r^2_{\mu}})\mub - \beta\bar{\xi}_{S_{\gT, 0}}\\
\rvw_{\mathrm{color}} = 2\frac{\etab}{r^{2}_\eta} + (\frac{2}{r^2_{\mu}} - \frac{5\tau}{r^2_{\mu}})\mub - \frac{1}{\sigma^2} \cdot\frac{|U|-k}{d}\bar{\xi}_{S_{\gT, 0}}\\
\rvw_{\mathrm{color}} = 2\frac{\etab}{r^{2}_\eta} + (\frac{2}{r^2_{\mu}} - \frac{5\tau}{r^2_{\mu}})\mub - (|U|-k)\bar{\xi}_{S_{\gT, 0}}
\end{align*}
To have $\rvw_{\mathrm{color}}^\top \rvx_i y_i > 1$ for $i\in{\mathcal{I}_{\gT, 1}}$, we need
\begin{align*}
\rvw_{\mathrm{color}}^\top (\etab - \xi_i) = \alpha r^2_\eta + \beta\xi_i^\top\bar{\xi}_{S_{\gT,0}} > 1
\end{align*}

We have
\begin{align*}
\rvw_{\mathrm{color}}^\top(-y_i\etab + y_i\xi_i) &= \rvw_{\mathrm{color}}^\top (\etab - \xi_i)\\
&= \alpha r^2_\eta - \alpha\etab^\top\xi_i + \beta\xi_i^\top\bar{\xi}_{\gT, 0}
\end{align*}
We know that $|\etab^\top\xi_i|<\sigma t r_\eta$ w.p. at least $1 - 2\exp\left(-ct^2/2\right)$ and $|\xi_i\bar{\xi}_{\gT,0}|\leq\sigma^2t\sqrt{\frac{d}{|U| -k}}$ w.p. at least $1 - 2\exp(-ct^2)$ for $t\leq d\sqrt{|U|}\sigma^2$. Hence, w.p. at least $1 - 2\exp\left(-ct^2/2\right) - 2\exp(-ct^2)$, we can say that,
\begin{align*}
    \rvw_{\mathrm{color}}^\top (\etab - \xi_i) \geq \alpha r^2_\eta - \alpha\sigma t r_\eta  - \beta\sigma^2t\sqrt{\frac{d}{|U| -k}}
\end{align*}
Required condition:
\begin{align*}
    \alpha r^2_\eta - \alpha\sigma t r_\eta  - \beta\sigma^2t\sqrt{\frac{d}{|U| -k}} \geq 1
\end{align*}

Possibly set $\alpha = 3r^{-2}_\eta$ and $\beta=\sigma^{-2}\frac{\sqrt{|U|-k}}{d}$.\\
\begin{align*}
    3 - 3\sigma t r^{-1}_\eta - \frac{t}{\sqrt{d}} \geq 1\\
    \frac{t}{\sqrt{d}} \leq 2 - 3\sigma t r^{-1}_\eta
\end{align*}
For $2 \sigma t \leq r_\eta$,
\begin{align*}
    \frac{t}{\sqrt{d}} \leq \frac{1}{2}
\end{align*}
Parameter requirements:
\begin{itemize}
    \item $\sigma t \leq \frac{r_\eta}{2}$
    \item $t\leq \frac{\sqrt{d}}{2}$
\end{itemize}

To also have $\rvw_{\mathrm{color}}^\top \rvx_i y_i > 1$ for $i\in{\mathcal{I}_{\gT, 0}}$ we need
\begin{align*}
\rvw_{\mathrm{color}}^\top (\mub + \xi_i) = (\frac{\alpha r^2_{\eta}}{r^2_{\mu}}-\frac{5\tau}{r^2_{\mu}})r^2_\mu + \beta\xi_i^\top\bar{\xi}_{S_{\gT, 0}} > 1
\end{align*}

We have
\begin{align*}
\rvw_{\mathrm{color}}^\top(-y_i\mub + y_i\xi_i) &= \rvw_{\mathrm{color}}^\top(\mub - \xi_i) \\
&= \left(\frac{\alpha r^2_\eta}{r^2_\mu} - \frac{5\tau}{r^2_\mu} \right) (\mub^\top\mub-\mub^\top\xi_i) - \beta(\mub^\top\bar{\xi}_{\gT,0} - \xi_i^\top\bar{\xi}_{\gT, 0})
\end{align*}
We know that $|\mub^\top\xi_i|\leq\sigma t r_\mu$ w.p. at least $1 - 2\exp\left(-ct^2/2\right)$ and $|\xi_i^\top\bar{\xi}_{\gT,0}|\geq\frac{\sigma^2d}{2(|U|-k)}$ w.p. at least $1 - 2\exp(-ct^2)$ for $t\leq \sigma^2$ and $t\leq\frac{\sqrt{d}}{2(2 + \sqrt{|U|-1})}$. We also know that $|\mub^\top \bar{\xi}_{\gT,0}|\leq \frac{\sigma tr_\mu}{|U|-k}$ w.p. at least $1-2\exp(-ct^2/2)$ Hence, w.p. at least $1 - 4\exp\left(-ct^2/2\right) - 2\exp(-ct^2)$, we can say that,
\begin{align*}
    \rvw_{\mathrm{color}}^\top(-y_i\mub + y_i\xi_i) &\geq \left(\frac{\alpha r^2_\eta}{r^2_\mu} - \frac{5\tau}{r^2_\mu}\right)(r^2_\mu - \sigma t r_\mu) + \beta\left(-\frac{\sigma tr_\mu}{|U|-k} + \frac{\sigma^2d}{2(|U|-k)}\right)\\
    &\geq \frac{r^2_\mu}{2}\left(\frac{\alpha r^2_\eta}{r^2_\mu} - \frac{5\tau}{r^2_\mu}\right) + \beta\left(\frac{\sigma^2d - r^2_\mu}{2(|U|-k)}\right) \text{ \hspace{4mm} for $r_\mu \geq 2\sigma t$}
\end{align*}
Required condition:
\begin{align*}
    \frac{1}{2}\left( \alpha r^2_\eta - 5\tau \right) + \beta\left(\frac{\sigma^2d - 2\sigma t r_\mu}{2(|U|-k)}\right)\geq1\\
\end{align*}

Setting $\alpha=3r^{-2}_\eta$ we get,
\begin{align*}
    \frac{1}{2}\left( 3 - 5\tau \right) + \beta\left(\frac{\sigma^2d - 2\sigma t r_\mu}{2(|U|-k)}\right)\geq1\\
    \beta\left(\frac{\sigma^2d - 2\sigma t r_\mu}{2(|U|-k)}\right) \geq \frac{5\tau-1}{2}\\
    \frac{\beta\sigma t r_\mu}{|U|-k} \leq \frac{\beta\sigma^2d}{2(|U|-k)} + \frac{1-5\tau}{2}
\end{align*}
Setting $\beta=\sigma^{-2}\frac{\sqrt{|U|-k}}{d}$ we get,
\begin{align*}
    \frac{r_\mu t}{\sigma d} \leq \frac{1}{2}+\frac{1-5\tau}{2}\sqrt{|U|-k}
\end{align*}
Parameter requirements:
\begin{itemize}
    \item $2\sigma t\leq r_\mu$
    \item $r_\mu t \leq \frac{\sigma d}{2}(1+(1-5\tau)\sqrt{|U|-k}) $
\end{itemize}

Finally, we also require $\rvw_{\mathrm{color}}^\top \rvx_i y_i > 0$ for $i\in{\mathcal{I}_{\gS, 0}}$, we need 

We have
\begin{align*}
\rvw_{\mathrm{color}}^\top(y_i\mub + y_i\xi_i) &= \rvw_{\mathrm{color}}^\top(\mub + \xi_i) \\
&= \left(\frac{\alpha r^2_\eta}{r^2_\mu} - \frac{5\tau}{r^2_\mu} \right) (\mub^\top\mub+\mub^\top\xi_i) - \beta(\mub^\top\bar{\xi}_{\gT,0} + \xi_i^\top\bar{\xi}_{\gT, 0})
\end{align*}
We know that $|\mub^\top\xi_i|\leq\sigma t r_\mu$ w.p. at least $1 - 2\exp\left(-ct^2/2\right)$ and $|\xi_i^\top\bar{\xi}_{\gT,0}|\leq\sigma^2t\sqrt{\frac{d}{|U|-k}}$ w.p. at least $1 - 2\exp(-ct^2)$ for $t\leq \sigma^2$ and $t\leq\frac{\sqrt{d}}{2(2 + \sqrt{|U|-1})}$. We also know that $|\mub^\top \bar{\xi}_{\gT,0}|\leq \frac{\sigma tr_\mu}{|U|-k}$ w.p. at least $1-2\exp(-ct^2/2)$ Hence, w.p. at least $1 - 4\exp\left(-ct^2/2\right) - 2\exp(-ct^2)$, we can say that,
\begin{align*}
    \rvw_{\mathrm{color}}^\top(-y_i\mub + y_i\xi_i) &\geq \left(\frac{\alpha r^2_\eta}{r^2_\mu} - \frac{5\tau}{r^2_\mu}\right)(r^2_\mu - \sigma t r_\mu) - \beta\left(\frac{\sigma tr_\mu}{|U|-k} + \sigma^2t\sqrt{\frac{d}{|U|-k}}\right)\\
    &\geq \frac{r^2_\mu}{2}\left(\frac{\alpha r^2_\eta}{r^2_\mu} - \frac{5\tau}{r^2_\mu}\right) - \beta\left(\frac{\sigma t r_\mu}{|U|-k} + \sigma^2 t\sqrt{\frac{d}{|U|-k}}\right) \text{ \hspace{4mm} for $r_\mu \geq 2\sigma t$}
\end{align*}

Required condition:
\begin{align*}
    \frac{1}{2}\left( \alpha r^2_\eta - 5\tau \right) - \beta\left(\frac{\sigma t r_\mu}{|U|-k} + \sigma^2 t\sqrt{\frac{d}{|U|-k}}\right) \geq 0
\end{align*}
Using $\alpha$ and $\beta$,
\begin{align*}
    \frac{3 - 5\tau}{2}  - \left(\frac{t r_\mu}{\sigma d\sqrt{|U|-k}} + \frac{t}{\sqrt{d}}\right) \geq 0
\end{align*}
The minimum value of the LHS above would be $\frac{3-5\tau}{2}  - \left(\frac{t r_\mu}{\sigma d\sqrt{|U|-k}} + \frac{1}{2}\right)$ We want this term to be greater than $0$. Hence, 
\begin{align*}
    \frac{1-5\tau}{2}  - \frac{t r_\mu}{\sigma d \sqrt{|U|-k}} \geq 0\\
    \frac{t r_\mu}{\sigma d}\leq \frac{1-5\tau}{2}\sqrt{|U|-k}
\end{align*}
Substituting $\alpha=3r^{-2}_\eta$ and $\beta=\sigma^{-2}\frac{\sqrt{|U|-k}}{d}$, we get,
\begin{align*}
    \frac{t r_\mu}{\sigma d} \leq \frac{1-5\tau}{2}\sqrt{|U|-k}
\end{align*}

Parameter requirements:
\begin{itemize}
    \item $t\leq \frac{\sqrt{d}}{2}$
    \item $r_\mu t \leq \frac{1}{2}\sigma d (1-5\tau) \sqrt{|U|-k}$
\end{itemize}


We want $\rvw_{\mathrm{color}}^\top(\mub-\etab)\leq -\tau$ i.e. $\rvw_{\mathrm{color}}^\top(\etab-\mub)\geq \tau$\\
We have
\begin{align*}
\rvw_{\mathrm{color}}^\top(\etab-\mub) &= \alpha \etab^\top\etab - \left(\frac{\alpha r^2_\eta}{r^2_\mu} - \frac{5\tau}{r^2_\mu} \right) \mub^\top\mub + \beta\cdot\mub^\top\bar{\xi}_{\gT,0}
\end{align*}
We know that $|\mub^\top \bar{\xi}_{\gT,0}|\leq \frac{\sigma tr_\mu}{|U|-k}$ w.p. at least $1-2\exp(-ct^2/2)$ Hence, w.p. at least $1 - 2\exp\left(-ct^2/2\right)$, we can say that,
\begin{align*}
    \rvw_{\mathrm{color}}^\top(\etab-\mub) &\geq \alpha r^2_\eta - \left(\frac{\alpha r^2_\eta}{r^2_\mu} - \frac{5\tau}{r^2_\mu}\right) r^2_\mu - \beta\frac{\sigma t r_\mu}{|U|-k}\\
\end{align*}

Required condition:
\begin{align*}
    5\tau - \beta\frac{\sigma t r_\mu}{|U|-k} \geq \tau\\
    \beta\frac{\sigma t r_\mu}{|U|-k}\leq 4\tau\\
    \frac{t r_\mu}{\sigma d\sqrt{|U|-k}} \leq 4\tau
\end{align*}
Parameter requirements:
\begin{itemize}
    \item $\frac{t r_\mu}{\sigma d} \leq 4\tau\sqrt{|U|-k}$
\end{itemize}

Bound on the norm of $\rvw_{\mathrm{color}}$:
\begin{align*}
    \rvw_{\mathrm{color}} = 3\frac{\etab}{r^{2}_\eta} + \left(\frac{3}{r^2_{\mu}} - \frac{5\tau}{r^2_{\mu}}\right)\mub - \sigma^{-2}\frac{\sqrt{|U|-k}}{d}\bar{\xi}_{S_{\gT, 0}}\\
    \|\rvw_{\mathrm{color}}\| \leq \frac{3}{r_\eta} + \left(\frac{3}{r_{\mu}} - \frac{5\tau}{r_{\mu}}\right) + \sigma^{-1}\frac{\sqrt{|U|-k}}{d} \sqrt{\frac{d+2t\sqrt{d}}{|U|-k}}\\
    \|\rvw_{\mathrm{color}}\| \leq \frac{3}{r_\eta} + \frac{3}{r_{\mu}} - \frac{5\tau}{r_{\mu}} + 2 \\
    \|\rvw_{\mathrm{color}}\| \leq \frac{3}{r_\eta} + \frac{3-5\tau}{r_{\mu}} + 2\\
    \text{\hspace{4mm} for $t\leq\frac{3}{2}\sqrt{d}$}
\end{align*}

We also want $\|\rvw_{\mathrm{color}}\|\leq \|\hat{\rvw}_{\mathrm{color}}\|$. For this, it is sufficient to ensure the following w.p. at least $1-2\exp(-ct^2)$:
\begin{align*}
    \frac{3}{r_\eta} + \frac{3-5\tau}{r_{\mu}} + 2
    &\leq \frac{3}{4}\frac{1-\tau}{\sqrt{(|U|)^{-1}\sigma^2(d+2t\sqrt{d})}}\\ 
    &\leq \frac{3}{8}\frac{1-\tau}{\sqrt{(|U|)^{-1}\sigma^2d}}  \text{ \hspace{4mm} for $t\leq \frac{3}{2}\sqrt{d}$}\\
    &\leq \frac{3}{8}(1-\tau)\sqrt{|U|}  \text{\hspace{4mm} for $\sigma =  \frac{1}{\sqrt{d}}$}\\
    \frac{3}{r_\eta} + \frac{3-5\tau}{r_{\mu}} &\leq \frac{3}{8}(1-\tau)\sqrt{|U|} - 2\\
\end{align*}
which satisfies the constraint in proposition 2.

Parameter requirements if $\alpha=2r^{-2}_\eta$ and $\beta=\sigma^{-2}\frac{|U|-k}{d}$:
\begin{itemize}
    \item $t\leq\frac{\sqrt{d}}{2(2+\sqrt{|U|-1})}$ 
    \item $t\leq\frac{\sqrt{d}}{2}$
    \item $2\sigma t \leq r_\eta$
    \item $2\sigma t\leq r_\mu$
    \item $r_\mu t \leq \min\left(\frac{1}{2}\sigma d (1-5\tau) \sqrt{|U|-k}, 4 \sigma d \tau \sqrt{|U|-k} \right)$
    \item $\frac{3}{r_\eta} + \frac{3-5\tau}{r_\mu}\leq\frac{3}{8}(1-\tau)\sqrt{|U|}-2$
\end{itemize}

Set $\sigma = \frac{1}{\sqrt{d}}$ and $t=\frac{\sqrt{d}}{4\sqrt{|U|}}$\\
Hence, $\frac{1}{2\sqrt{|U|}}\leq r_\eta$\\
$\frac{1}{2\sqrt{|U|}}\leq r_\mu$\\
$r_\mu\leq \min \{2\sqrt{|U|}(1-5\tau), 8\sqrt{|U|}\tau\}$\\

Now we want, 
\[Q\left(\frac{-\tau\sqrt{d}}{\sqrt{2}(1-\tau)}\right) \geq 0.9\]
This means we want,
\begin{align*}
    \frac{-\tau\sqrt{d}}{\sqrt{2}(1-\tau)} \leq -1.283 \\
    \frac{\tau\sqrt{d}}{\sqrt{2}(1-\tau)} \geq 1.283 \\
    \tau(\sqrt{\frac{d}{2}} + 1.283) \geq 1.283 \\
    \tau \geq \frac{1.283}{\sqrt{\frac{d}{2}}+1.283}
\end{align*}
From parameter requirements, we can also say,
\begin{align}\label{eq:tau_range}
    \frac{1.283}{\sqrt{\frac{d}{2}}+1.283} \leq \tau \leq 0.2 
\end{align}

For $d>300$, we can easily say pick $\tau=0.1$. This gives us $r_\mu t \leq \min\left(\frac{1}{4}\sigma d \sqrt{|U|-k}, 0.4\sigma d \sqrt{|U|-k}\right) = \frac{1}{4}\sigma d \sqrt{|U|-k}$
This gives us final parameter ranges as follows,
\begin{itemize}
    \item $t\leq\frac{\sqrt{d}}{2(2+\sqrt{|U|-1})}$ 
    \item $t\leq\frac{\sqrt{d}}{2}$
    \item $2\sigma t \leq r_\eta$
    \item $2\sigma t\leq r_\mu$
    \item $r_\mu t \leq \frac{1}{4}\sigma d \sqrt{|U|-k}$
    \item $\frac{3}{r_\eta} + \frac{3-5\tau}{r_\mu}\leq\frac{3}{8}(1-\tau)\sqrt{|U|}-2$
\end{itemize}
\end{proof}

\begin{theorem}\label{thm:th1_app}
(Restating \cref{thm:th1})\\
Consider a Linear-Gaussian PU-learning problem with parameters $\mub, \etab, d > 300$, dataset sizes $N_{\gT} > 10$, $N=N_{\gS}+N_{\gT}$, $\alpha \in{(0, \frac{N}{1024 N_\gT})}$ 
and let $\delta\in{(0,1)}$.
For all problems where
\begin{align}
\min\{r_{\eta}, r_{\mu}\} &\geq \tfrac{16}{\sqrt{N_\gT}}, \label{eq:cond1}\\
r_{\mu} &\leq \tfrac{1}{2}\sqrt{N_{\gT, 0}}, \label{eq:cond2}\\
\tfrac{r_{\eta}}{r_{\mu}} &\leq \tfrac{4}{N}, \label{eq:cond3}\\
c_1\log\!\left(\tfrac{c_2}{\delta}\right) &\leq \min{\left(\sqrt{N}, \sqrt{\frac{d}{N}}\right)} \label{eq:cond4}
\end{align}
it holds with probability at least $1-\delta$ that $\mathrm{AU-ROC}(\rvw_{\mathrm{\text{DD}}}) < 0.5$ and $\mathrm{AU-ROC}(\rvw_{\mathrm{\text{color}}}) > 0.9$.
\end{theorem}
\begin{proof}\label{theorem:proof}
Let us denote $p=\frac{N}{N_\gT}$. Then based on the stated assumptions on $N_\gS, N_{\gT}, N_{\gT,0}, N_{\gT,1}$ we can say,\\
\begin{align}\label{eq:nn_t1}
    \sqrt{N\cdot N_{\gT,1}} &= \sqrt{pN_{\gT}\cdot N_{\gT,1}} \nonumber\\
    &\geq\sqrt{pN_{\gT,0}\cdot N_{\gT,1}} \nonumber\\
    &=(\sqrt{p}-1)\sqrt{N_{\gT,0}\cdot N_{\gT,1}}+\sqrt{N_{\gT,0}\cdot N_{\gT,1}}\nonumber\\
    &\geq 1+\sqrt{N_{\gT,0} \cdot N_{\gT,1}}
\end{align}


Using \cref{eq:cond2} we can write,
$1+\sqrt{N_{\gT,0} \cdot N_{\gT,1}}\geq 1+2r_\mu\sqrt{N_{\gT,1}}$

From \cref{eq:cond1} we know that $r_\mu\sqrt{N}\geq 16$. Hence, \\
\begin{align}\label{eq:r_mu_sqrt_n}
    r_\mu\sqrt{N}&\geq 8+\frac{1}{2}r_\mu\sqrt{N} \nonumber\\
    &\geq 8+\frac{1}{2}r_\mu\sqrt{\frac{p}{\alpha}\cdot N_{\gT, 1}} \nonumber\\
    &\geq 8+16r_\mu\sqrt{N_{\gT,1}}=8(1+2r_\mu\sqrt{N_{\gT,1}})
\end{align}

Here, the first transition is from \cref{eq:cond1} while second and third from the range of $\alpha = \frac{pN_{\gT,1}}{N}\leq \frac{p}{1024}$.

From the \cref{proposition:DD_bad} and \cref{proposition:CoLOR good}, we have the following ranges of parameter ranges for $t=c_1\log(\frac{c_2}{\delta})$ where the \auroc($\rvw_{\mathrm{\text{DD}}}$)<0.5 and \auroc($\rvw_{\mathrm{\text{color}}}$)>0.9 with probability at least $1-\delta$:
\begin{align}
    \sqrt{N}&\geq c_1\log(\frac{c_2}{\delta})\\
    \sqrt{d}&\geq c_1\sqrt{N}\log(\frac{c_2}{\delta})\\
    \frac{1}{2\sqrt{N_\gT}}&\leq \min(r_\eta,r_\mu) \text{\hspace{2mm} which is guaranteed by \cref{eq:cond1}} \\
    r_\mu\ &\leq\sqrt{N\cdot N_{\gT,0}} \text{\hspace{2mm} which is guaranteed by \cref{eq:cond2}}\\
    \frac{r_\eta}{r_\mu} &\leq \frac{1}{4}\left(\frac{1}{1+2r_\mu\sqrt{N_{\gT,1}}}-\frac{4}{r_\mu\sqrt{N}}\right)\\
    \frac{3}{r_\eta}+\frac{3-5\tau}{r_\mu}&\leq \frac{27}{80}\sqrt{N}-2
\end{align}

Then putting this together, for parameters that satisfy the \cref{eq:cond4} we have,
\begin{align*}
    \frac{r_\eta}{r_\mu}&\leq\frac{4}{N}=4\cdot\sqrt{\frac{\alpha}{N\cdot N_{\gT,1}p}}\\
    &\leq\frac{1}{8}(N\cdot N_{\gT,1})^{-1/2} \text{ \hspace*{4mm} as $\alpha\leq\frac{p}{1024}$}\\
    &\leq\frac{1}{8}(1+2r_\mu \sqrt{N_{\gT,1}})^{-1}  \text{ \hspace*{4mm}  from \cref{eq:nn_t1}}\\
    &\leq\frac{1}{4}\left(\frac{1}{1+2r_\mu\sqrt{N_{\gT,1}}} - \frac{4}{r_\mu\sqrt{N}}\right) \text{ \hspace*{4mm}  from \cref{eq:r_mu_sqrt_n}}\\
\end{align*}
which satisfies the constraint in proposition 1.

\begin{align*}
    \frac{1}{2\sqrt{N_\gT}} \leq r_{\mu} &< \min\left(\frac{1}{2}\sqrt{d} (1-5\tau) \sqrt{N_{\gT,0}}, 4 \sqrt{d} \tau \sqrt{N_{\gT,0}}\right), \\
    &<\frac{1}{4}\sqrt{dN_{\gT,0}} \text{\hspace*{4mm} using $\tau=0.1$}\\
\end{align*}
\begin{align*}
    \frac{3}{r_\eta} + \frac{3-5\tau}{r_\mu}&\leq\frac{3}{8}(1-\tau)\sqrt{N_\gT}-2\\
    \frac{3}{r_\eta} + \frac{2.5}{r_\mu}&\leq\frac{27}{80}\sqrt{N_\gT}-2 \text{\hspace*{4mm} using $\tau=0.1$}\\
    3 + \frac{2.5r_\eta}{r_\mu}&\leq\frac{27}{80}r_\eta(\sqrt{N_\gT}-2) \\
    \frac{r_\eta}{r_\mu}\leq \frac{10}{25}\left(\frac{27}{80}r_\eta(\sqrt{N_\gT}-2) - 3\right)
\end{align*}
We know from \cref{eq:cond1} that $r_\eta(\sqrt{N_\gT}-2)\geq 16 \implies \frac{27}{5}\leq\frac{27}{80}r_\eta(\sqrt{N_\gT}-2)$.
\begin{align*}
    \frac{r_\eta}{r_\mu}&\leq \frac{24}{25}\\
    &= \frac{10}{25}\left(\frac{27}{5} - 3\right)\\
    &\leq \frac{10}{25}\left(\frac{27}{80}r_\eta(\sqrt{N_\gT}-2) - 3 \right)
\end{align*}

This means $\frac{r_\eta}{r_\mu}\leq \min{(\frac{24}{25}, \frac{4}{N_\gT})}$ which satisfies \cref{eq:cond3}.
\end{proof}

\begin{theorem}\label{thm:th1_app2}
(Restating \cref{thm:th1})\\
Consider a Linear-Gaussian PU-learning problem with parameters $\mub, \etab, d > 300$, dataset sizes $N_{\gT} > 10$, $N = N_{\gS} + N_{\gT}$, $\alpha \in (0, \frac{N}{1024 N_\gT})$ 
and let $\delta \in (0,1)$.
For all problems where
\begin{align}
\tfrac{16}{\sqrt{N_\gT}} &\le r_{\eta} \le \tfrac{3}{8}\left(\sqrt{N_{\gT,1}}\right)^{-1},\\
\tfrac{1}{\sqrt{N_{\gT,1}}} &\le r_{\mu} \le \tfrac{1}{2}\sqrt{N_{\gT,0}},\\
c_1\log\!\left(\tfrac{c_2}{\delta}\right) &\le \min\!\left(\sqrt{N}, \sqrt{\frac{d}{N}}\right)
\end{align}
it holds with probability at least $1-\delta$ that $\mathrm{AU\!-\!ROC}(\rvw_{\mathrm{DD}}) < 0.5$ and $\mathrm{AU\!-\!ROC}(\rvw_{\mathrm{color}}) > 0.9$.
\end{theorem}

\begin{theorem}\label{thm:th1_app3}
(Restating \cref{thm:th1})\\
Consider a Linear-Gaussian PU-learning problem with parameters $\mub, \etab, d > 300$, dataset sizes $N_{\gT} > 10$, $N = N_{\gS} + N_{\gT}$, $\alpha \in (0, \frac{N}{1024 N_\gT})$ 
and let $\delta \in (0,1)$.
For all problems where
\begin{align}
\tfrac{16}{\sqrt{N_\gT}} &\le \min{\left(r_{\eta},r_{\mu}\right)},\\
r_{\mu} &\le \tfrac{1}{2}\tfrac{\sqrt{N_{\gT,0}}}{N_{\gT,1}},\\
\tfrac{r_\eta}{r_{\mu}} &\le \sqrt{\frac{N_{\gT,1}}{N_{\gT,0}}}\\
c_1\log\!\left(\tfrac{c_2}{\delta}\right) &\le \min\!\left(\sqrt{N}, \sqrt{\frac{d}{N}}\right)
\end{align}
it holds with probability at least $1-\delta$ that $\mathrm{AU\!-\!ROC}(\rvw_{\mathrm{DD}}) < 0.5$ and $\mathrm{AU\!-\!ROC}(\rvw_{\mathrm{color}}) > 0.9$.
\end{theorem}

\subsection{Concentration statements}
If $\xi_i\sim{\cN (0, \sigma^2\mathbf{I}_{d})}$ for all $i\in{[N]}$, then
\begin{itemize}
    \item Bound on norm of vector \citep[Eq.~3.5]{ledoux2013probability}
    \begin{align} \label{eq:gauss_norm_bound}
    P(\|\xi_i\| \leq t) > 1-4\exp\left\{ -\frac{t^2}{8d\sigma^2} \right\}.
    \end{align}
    \item For $U\subseteq [N]$ consider $\{\xi_j\}_{j\in{U}}$ and $\xi_i, i\notin{U}$,
    \begin{align} \label{eq:noise_inner_prod_bound1}
    P\left( \Big| \langle \xi_i, \frac{1}{|U|} \sum_{j\in{U}} \xi_j \rangle \Big| < |U|^{-1/2}t \right) > 1-2\exp\left[ -c \min\{ \frac{t^2}{\sigma^4 d}, \frac{t}{4\sigma^2} \}  \right]
    \end{align}
    This is a derivation from Bernstein's inequality, see e.g. Lemma~3 in \citet{puli2023don}.
    \item For $i\in{U}$ we have,
    \begin{align} \label{eq:noise_chi_squared_gaussian}
    P\left(\| \xi_i \|^2 > \sigma^2(d+ 2t\sqrt{d})\right) &\leq \exp\left(-\frac{t^2}{4}\right)
    \end{align}
    \item If $Z$ is a chi-square variable with d-degrees of freedom, then it holds for any $v>0$
    \begin{align*}
    P(Z \leq d - 2\sqrt{d v}) \leq \exp{\left(-v\right)}
    \end{align*}
    It follows that
    \begin{align*}
    P(Z \geq d - 2\sqrt{d v}) \geq 1 - \exp{\left(-v\right)}
    \end{align*}
    Hence, the lower bound on $\|\xi_i\|^2 \geq \sigma^2(d-2\sqrt{dv})$ w.p. at least $1 - \exp(-v)$.\\
    Upon setting $t=\sqrt{v}$, we get 
    \begin{align}\label{eq:laurentmassart}
        P\left(\|\xi_i\|^2 \geq \sigma^2(d-2t\sqrt{d})\right) \geq 1 - \exp\left(-ct^2\right)
    \end{align}
    \item Let $u\in{[d]}$, then
    \[
    \| \xi_{i,u} \cdot \sum_{j\in{U\setminus i}} \xi_{j, u} \|_{\psi_1} \leq \|\xi_{i,u}\|_{\psi_2} \|\sum_{j\in{U\setminus i}}\xi_{j,u}\|_{\psi_2} \leq \sqrt{|U|} \sigma^2.
    \]
    Recall Bernstein \citep{vershynin2018high} says that for independent sub-exponential variables $X_1,\ldots, X_N$, for every $t>0$,
    \begin{align*}
    P(\left| \sum_{i=1}^{N} X_i \right| \geq t) \leq 2\exp\left[ -c \min\left\{ \frac{t^2}{\sum_{i=1}^{N}{\|X_i\|^2_{\psi_1}}}, \frac{t}{\max_i \| X_i \|_{\psi_1}} \right\} \right]
    \end{align*}
    Applying this inequality to our case (where $N=d$ and the $X_i$s are the products of $\xi$ variables on the LHS above),
    \[
    P(|\sum_{j\in{U\setminus i}} \xi_i^\top \xi_{j} | \geq t) \leq 2\exp\left[ -c \min \left\{ \frac{t^2}{d |U| \sigma^4}, \frac{t}{\sqrt{|U|}\sigma^2} \right\} \right].
    \]
    Now \textbf{assume that $t < d\sqrt{|U|}\sigma^2$}, then
    \begin{align*}
    P(|\sum_{j\in{U\setminus i}} \xi_i^\top \xi_{j} | \geq t) &\leq 2\exp\left[ -c \min \left\{ \frac{t^2}{d |U| \sigma^4}, \frac{t}{\sqrt{|U|}\sigma^2} \right\} \right] \\
    &= 2\exp\left[ -c \frac{t^2}{d|U|\sigma^4} \right].
    \end{align*}
    Replacing variables $t$ with $t \sqrt{d|U|\sigma^4}$, and divding both sides of the inequality by $|U|$, we end up with
    \begin{align}\label{eq:bernstein}
    P\left(\Big| |U|^{-1}\sum_{j\in{U\setminus i}} \xi_i^\top \xi_{j} \Big| \geq     \sigma^2 t \sqrt{d |U|^{-1}} \right) \leq 2\exp\left[ -ct^2 \right]
    \end{align}
    \item We know that for $ \Big| \langle \xi_i, \frac{1}{|U|}\sum_{j\in{U}} \xi_j \rangle \Big|  = \frac{1}{|U|} \|\xi_i\|^2 + \frac{1}{|U|} \Big| \langle \xi_i, \frac{1}{|U|}\sum_{j\in{U}, i\neq j} \xi_j \rangle \Big|$
    \begin{align*}
    \langle \xi_i^\top \bar{\xi}_{U} \rangle = |U|^{-1}\|\xi_i\|^2 + \langle\xi_i, |U|^{-1}\sum_{j\in U, j\neq i} \xi_j\rangle\\
    \left|\langle \xi_i^\top \bar{\xi}_{U} \rangle \right| \geq \left||U|^{-1}\|\xi_i\|^2 \right| - \left| \langle\xi_i, |U|^{-1}\sum_{j\in U, j\neq i} \xi_j\rangle \right|
    \end{align*}
    Taking union bound of inequalities \ref{eq:laurentmassart} and \ref{eq:bernstein}, we get
    \begin{align*}
        P\left(\left| \langle \xi_i^\top \bar{\xi}_{U} \rangle \right| \geq |U|^{-1}\sigma^2(d-2t\sqrt{d}) - |U|^{-1}\sigma^2t\sqrt{d(|U|-1)} \right) \geq 1 - 3\exp(-ct^2)\\
        P\left(\left| \langle \xi_i^\top \bar{\xi}_{U} \rangle \right| \geq |U|^{-1}\sigma^2\left(d-2t\sqrt{d} - t\sqrt{d(|U|-1)}\right) \right) \geq 1 - 3\exp(-ct^2)\\
    \end{align*}
    If we assume $t\leq\frac{\sqrt{d}}{2(2 + \sqrt{|U|-1})}$, we get
    \begin{align}\label{eq:i_in_U_innerproduct}
    P\left(\left| \langle \xi_i^\top \bar{\xi}_{U} \rangle \right| \geq \frac{\sigma^2d}{2|U|} \right) \geq 1 - 3\exp(-ct^2)
    \end{align}
\end{itemize}

 \subsection{Proof of Lemma \ref{lem:impossibility_lemma}} \label{sec:proof} 
Let us recall the strong positivity assumption stated in the main paper, which appears in \citet{garg22adaptation}.
\begin{assumption}[Strong positivity]
There exists $X_{sep}\subseteq \gX$ such that $\Plabel{1}(X_{sep}) = 0$ and the matrix $[\Psource(\rvx \mid y)]_{\rvx\in{X_{sep}m y\in{[k]}}}$ is full rank and diagonal.
\end{assumption}
We restate and prove the claim that \problem{} is not learnable under this assumption, once the label shift assumption is removed.
\begin{lemma*}
Let $\gA$ be an algorithm for \problem. There are distributions $\Psource, \Plabel{[k]}$ and $\Plabel{k+1}$ such that the problem satisfies strong positivity, and $\exists h^*\in{\gH}$ for which $R^{l_{01}}_{\gT}(h^*)=0$, while $\E_{S_{\gS}, S_{\gT}}\left[ R^{l_{01}}_{\gT}(\gA(S_{\gS}, S_{\gT})) \right] \geq 0.5$.
\label{lem:impossibility_under_strong_positivity}
\end{lemma*}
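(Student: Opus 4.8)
The plan is a two-worlds indistinguishability argument. The point is that once the known-class distribution is allowed to move anywhere inside $\mathrm{Supp}(\Psource)$, strong positivity — which only forces the novel class to vanish on a fixed anchor set $X_{sep}$ — says nothing about whether fresh mass appearing in $\datatarget$ is novel or merely shifted known-class mass. I will exhibit one instance (finite $\gX$, $k=1$) that realizes the claim; for general $k$ one pads with $k-1$ dummy known classes, each concentrated on its own atom that is stable between source and target and disjoint from the action, so that the strong-positivity matrix becomes a genuine full-rank diagonal $k\times k$ system without affecting anything below.

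\textbf{Construction.} Let $\gX=\{s,a,b\}$, $k=1$ (so $\gY=\{1,2\}$ with $2$ the novel label), $\gH$ the set of all maps $\gX\to\sR^2$, and $\alpha=\tfrac12$ (and any $N_\gS,N_\gT$). Let $\Psource$ assign label $1$ to every point with $\Psource(s)=\Psource(a)=\Psource(b)=\tfrac13$. Consider the two target laws
\begin{align*}
\Ptarget^{(1)}:\quad &\P(\rvx=a,y=1)=\tfrac12,\ \ \P(\rvx=b,y=2)=\tfrac12,\\
\Ptarget^{(2)}:\quad &\P(\rvx=b,y=1)=\tfrac12,\ \ \P(\rvx=a,y=2)=\tfrac12,
\end{align*}
i.e.\ $\Plabel{[1]}^{(1)}=\delta_a,\Plabel{2}^{(1)}=\delta_b$ and $\Plabel{[1]}^{(2)}=\delta_b,\Plabel{2}^{(2)}=\delta_a$. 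Both are background shifts: $\mathrm{Supp}(\Plabel{[1]}^{(i)})\subseteq\{a,b\}\subseteq\mathrm{Supp}(\Psource)$, and they are not label shifts since $\Ptarget^{(i)}(\rvx\mid y=1)\neq\Psource(\rvx\mid y=1)$. Strong positivity holds with $X_{sep}=\{s\}$: $\Plabel{2}^{(i)}(s)=0$ for $i=1,2$, and $[\Psource(\rvx\mid y=1)]_{\rvx\in\{s\},\,y\in\{1\}}=[\tfrac13]$ is (trivially) full rank and diagonal. Finally each world is realizable: $h^*_1$ with $\arg\max h^*_1(a)=1,\ \arg\max h^*_1(b)=2$ gives $\mathcal{R}^{l_{01}}_{\Ptarget^{(1)}}(h^*_1)=0$, and symmetrically $h^*_2$ with $\arg\max h^*_2(b)=1,\ \arg\max h^*_2(a)=2$ gives $\mathcal{R}^{l_{01}}_{\Ptarget^{(2)}}(h^*_2)=0$; both lie in $\gH$.

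\textbf{Averaging.} The learner observes only the samples $(S_\gS,S_\gT)$: labelled i.i.d.\ draws from $\Psource$ and unlabelled i.i.d.\ draws from the $\rvx$-marginal of $\Ptarget$. Both worlds share the same $\Psource$ and the same target $\rvx$-marginal ($a,b\mapsto\tfrac12$), so $(S_\gS,S_\gT)$ has the same law under $\Ptarget^{(1)}$ and $\Ptarget^{(2)}$; hence $h=\gA(S_\gS,S_\gT)$, together with its induced predictions $\hat y_a,\hat y_b\in\{1,2\}$ at $a$ and $b$, has a world-independent law. On this two-point support the risks read $\mathcal{R}^{l_{01}}_{\Ptarget^{(1)}}(h)=\tfrac12\ind[\hat y_a\neq1]+\tfrac12\ind[\hat y_b\neq2]$ and $\mathcal{R}^{l_{01}}_{\Ptarget^{(2)}}(h)=\tfrac12\ind[\hat y_b\neq1]+\tfrac12\ind[\hat y_a\neq2]$, so their sum equals $1$ for every realization of the data and the internal coins (each of $\hat y_a,\hat y_b$ misses exactly one of $\{1,2\}$). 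Taking expectations over the common law gives $\E[\mathcal{R}^{l_{01}}_{\Ptarget^{(1)}}(\gA)]+\E[\mathcal{R}^{l_{01}}_{\Ptarget^{(2)}}(\gA)]=1$, so at least one summand is $\geq\tfrac12$; choosing that world as the instance in the statement completes the proof.

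\textbf{Main obstacle.} There is no real technical difficulty; the content is in arranging the construction so that background shift, strong positivity, and realizability all hold simultaneously while the two worlds remain observationally identical. The only non-obvious choice is taking $\alpha=\tfrac12$ and swapping the roles of $a$ and $b$ across the worlds, which makes the target $\rvx$-marginals coincide exactly yet leaves the strong-positivity anchor $s$ untouched; given that, the averaging step is immediate. (If an impossibility for some prescribed $\alpha\neq\tfrac12$ were wanted, one enlarges $\gX$ and rebalances the masses of the swapped atoms — purely bookkeeping.)
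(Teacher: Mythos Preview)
Your proof is correct and follows essentially the same two-worlds indistinguishability argument as the paper: construct two problems with identical observable laws $(\Psource,\Ptarget(\rvx))$ but swapped novel/known roles on two atoms, verify strong positivity via an anchor set disjoint from the swap, and average the risks. Your construction is in fact slightly leaner than the paper's (three atoms and $k=1$ versus four atoms and $k=2$), and your remark about padding to general $k$ is apt.
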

\begin{proof}
Define the following distributions over $4$ states
\begin{align*}
[\Psource(x \mid y)]_{x\in{\gX}, y\in{[k]}} = \begin{bmatrix}
1-\varepsilon & 0 & 0 & \varepsilon \\
0 & 1- \varepsilon & \varepsilon & 0
\end{bmatrix}, \\
\Psource(Y) = [\frac{1-2\varepsilon}{1-\varepsilon}, \frac{\varepsilon}{1-\varepsilon}, 0]
\end{align*} for some $\varepsilon > 0$, and two other distribution over $\gX$, $Q(X) = [0, 0, 1, 0]$ and $D(X) = [0, 0, 0, 1]$. Consider $2$ \problem{} problems where $k=2$ and $\alpha=0.5$:
\begin{itemize}
    \item One where $\Plabel{[k]} = Q$ and $\Plabel{k+1} = D$, which means that $\Plabel{[k]}(X\mid Y=y) = [0, 0, 1, 0]$ and we set $\Plabel{[k]}(Y) = [\frac{1-2\varepsilon}{1-\varepsilon}, \frac{\varepsilon}{1-\varepsilon}, 0]$ although we can set it to any arbitrary distribution.
    \item For the second problem $\Plabel{[k]} = D$ and $\Plabel{k+1} = Q$, which entails similarly to the first case that $\Plabel{[k]}(X \mid Y=y) = [0, 0, 0, 1]$ for $y\in{[k]}$, while we keep $\Psource$ and the rest of the details as they are in the first problem.
\end{itemize}
It is clear that under the hypothesis class $\gH$ of all binary classifiers on $\gX$, it holds that $R^{l_{01}}_{\gT}(h^*)=0$. Now we will show that both problems satisfy strong positivity (note that they also satisfy that $\mathrm{Supp}(\Plabel{[k]}) \subseteq \mathrm{Supp}(\Psource)$), and also $\Psource(X, Y)$ and $\Ptarget (X)$ are the same for both problems.

Once this is shown, we can conclude our result, since any observed dataset that is an input to $\gA$ is equally likely in both problems. However, any hypothesis $h$ that achieves $R^{l_{01}}_{\gT}(h)=\delta$ on the first problem, achieves risk $1-\delta$ on the other problem since $\Plabel{[k]}$ and $\Plabel{k+1}$ switch roles between the two problems.

To show that the problems satisfy strong positivity, consider $X_{sep}$ as the first and second states. We have that for both problem $\Plabel{k+1}(X_{sep} = 0$ since both $D(X_{sep}) = 0$ and $Q(X_{sep})=0$, while
\begin{align*}
    [\Psource(x \mid y)]_{x\in{X_{sep}}, y\in{[k]}} = \begin{bmatrix}
    1-\varepsilon & 0 \\
    0 & 1-\varepsilon
    \end{bmatrix},
\end{align*}
which is a full rank and diagonal matrix. Hence the strong positivity condition is satisfied. 
We defined the same $\Psource(X, Y)$ for both problems, so it is left to show that $\Ptarget(X)$ also equals for them.
This is also straightforward as for both problem $\Ptarget(X) = 0.5\cdot Q + 0.5\cdot D$, which concludes the proof.
\end{proof}

\subsection{Additional details on experimental setting}\label{sec:add_setup}
\subsubsection{AUROC vs AUPRC scores for novel class detection }\label{sec:auroc_vs_auprc}
It is a common argument in the machine learning literature that AUPRC scores are more suitable for evaluating
methodologies in class-imbalanced scenarios. However, this stance is nuanced, as some research, such as \cite{mcdermott2024closer} suggests favoring AUROC over AUPRC in certain imbalanced conditions. \cite{mcdermott2024closer} further notes that
AUPRC inherently emphasizes the performance on samples with higher scores. Given our experimental focus on assessing
models’ ability to assign higher scores for novelty detection, AUPRC emerges as the most relevant metric for our analysis, especially when the proportions of positive class (novel class in our case) is very low.
\subsubsection{Discussion}\label{sec:app_discussion}
Building on the results and observations from the previous section, we proceed to further analyze and understand the aspects of OSDA under conditions of background shift. The curve plots in \ref{fig:sun397_curve_plot} compare the novelty detection performance of the methods shown , it is evident that \ours{} outperforms other methods particularly when the novel class ratio $\alpha$ is less than $0.2$. However, as $\alpha$ is increased beyond $0.3$, the other baselines rapidly catch up to the AUPRC performance of \ours{}.    
From Table \ref{table:sun397_target_acc} we observe that using constrained learning to acquire shared representations benefits the classification performance on known classes. The source-only method mentioned in \ref{table:sun397_target_acc} serves as a baseline, trained exclusively on the 
$\datasource$ and evaluated on the $\datatarget$ without employing any strategies to mitigate shift effects. Furthermore, Table \ref{table:sun397_vitl14_w_vs_wo_shift} provides insights on the impact of distribution shift on the overall OSDA performance of all the methods using \vit{} visual encoder pretraiend using CLIP.  

We specify our empirical test of measuring separability in \ref{sec:app_dataset}. Based on this test of separability, we observe that novel classes in Amazon Reviews dataset are not perfectly separable and hence violate the assumption of separability to some extent. However, we see in our result in tables \ref{table:results_sumary}, \ref{table:amazon_reviews_w_shift} and \ref{table:abs_amazon_reviews_w_shift_auprc} that \ours{} still outperforms other baselines. Hence, we realize that \ours{} is robust to certain violations of the separability assumption.
\subsubsection{Implementation of \ours{}}

It is important to note that each model head $h_{\hat{\alpha}}$ independently solve constrained problem in \ref{eq:eq04lagrangian} using primal-dual optimization while remaining $k$ heads focus on classifying samples (from $\datasource$) from $k$ known categories using ground truths $\mathcal{Y}_\mathcal{S}$.  

\subsection{Combining \ours{} with Domain Adaptation methods}\label{sec:app_color_with_da}
\osr{} performance depends on the robustness of the closed-set classifier. If the classifier were more robust to the specific shift, OSDA performance would improve. Since CoLOR operates on top of the closed-set classifier, any method that improves robustness to a specific shift (e.g., domain adaptation, shift-robust training) could be used together with \ours{}. \ours{} would ensure that the domain shift does not impact the performance of novelty detection while the robust closed-set classifier would aid in improving the performance over known classes. This flexibility enables \ours{} to be used with any architecture and make them robust against distribution shift during test time.  

\subsection{Extending constrained learning objective to OSDA}\label{sec:app_color}
In practice, constraints such as $\beta(h) \leq \beta$ are enforced using a differentiable approximation of a step function, e.g. via a sigmoid $\sum_{\rvx\in{\datasource}} \sigma(h(\rvx)) \leq \beta$, and objectives are optimized using the logistic loss.
We either train an entire model (encoder + classifier) from scratch or just add two fully-connected (FC) layers on top of a pretrained encoder and only train these additional FC layers. The first FC layer provides the shared representation acquired through learning from the related tasks while second FC layer uses this shared representation to classify the known classes and detect novel identities.
\subsubsection{Dataset}\label{sec:app_dataset}
One important factor to consider while creating shifts is that $\datasource$ and $S_{\mathcal{T},k+1}$ should be distinguishable. As dataset separability is a difficult quantity to measure we train a classifier (oracle) for each dataset to distinguish novel groups from samples belonging to known categories. We then use the learnability of the oracle as a criterion to ensure the separability of novel classes. This means that we calculate the AUROC and AUPRC scores of the oracle for the task of supervised novelty detection. Higher AUROC and AUPRC would correspond to higher separability. We consider an AUROC and AUPRC higher than 0.98 as ideal to ensure separability between novel class identities and known classes. It is difficult to ensure such high separability for all the datasets, particularly for Amazon Reviews dataset which does not perfectly satisfy the separability. Yet we observe that \ours{} is robust to background shift in such settings outperforming all the baselines as observed in Tables \ref{table:results_sumary}, \ref{table:amazon_reviews_w_shift} and 
\ref{table:abs_amazon_reviews_w_shift_auprc}. \\
We conduct 5 repetitions of an experiment for each dataset and for every identity of the novel class. Each repetition uses a unique random seed, representing a distinct background shift setting. These settings are generated by randomly varying the subtype proportions of the known categories within the SUN397 dataset.

\textbf{SUN397:} It consists of images of scenes/places from various locations. The dataset is provided with 3 levels of hierarchy where Level-1 is grouped as indoor, natural outdoor and man-made outdoor scenes. We use indoor classes as in-distribution classes while we choose novel classes from natural outdoor scenes. Level-2 hierarchy has shopping, workplace, homes/hotels, etc. under indoor category while outdoor natural contains classes like water/ice/snow, mountains/hills desert/sky, forest, etc. Each of these level-2 categories have subcategories (level-3 classes) that form these level-2 groupings. We randomly select 8 level-3 subtypes (like bakery shop or banquet hall) per level-2 category (shopping/dining places) and vary the subtype proportions to create background shift between source and target. Furthermore, novel classes are randomly selected from the level-2 categories of outdoor natural group. \\      
\textbf{CIFAR100:} It consists of 60,000 32x32 colour images in 20 primary classes (superclasses) each of which have 5 subcategories composing a total of 100 classes \cite{krizhevsky2009learning}. The training set has 50,000 images (i.e. 500 images per subcategory and 2500 images per primary class) while the test set has 10000 images (i.e. 100 images per subcategory and 500 images per primary class). We retain these splits for our experiments. We use 4 primary categories as aquatic mammals, flowers, fishes and birds. The subcategories of the aquatic mammals are beaver, dolphin, otter, seal are whale while that of flowers are orchid, poppy, rose, sunflower and tulip. Similarly fishes and birds have 5 subcategories each. We vary the marginal distribution of these subcategories to create a subpopulation shift leading to a background shift between source and target data while maintaining no label shift w.r.t. primary categories i.e. aquatic mammals and flowers. The novel category is randomly selected from the remaining unseen categories.\\
\textbf{Amazon Reviews} The dataset is heavily skewed with respect to sentiments and product categories. Hence, we select 6 product categories having similar orders of magnitude of the number of reviews namely 'Digital Music', 'Industrial \& Scientific', 'Luxury Beauty', 'Musical Intstruments', 'Prime Pantry' and 'Software'. To prevent further skewness in the dataset due to sentiments, we restrict the sample size per sentiment per category to 500 reviews in the training set and 125 reviews per sentiment per product category. We induce a background shift based on sentiments. Reviews with rating strictly below 3.0 (out of 5.0) are considered negative sentiments and those with rating strictly above 3.0 are considered positive sentiments whereas reviews having a rating of exactly 3.0 are discarded from the dataset. The minimum rating is 1.0 while the maximum is 5.0. 
\subsubsection{Hyper-Parameters and Training}\label{sec:training}
We consistently set the FPR threshold $\beta = 0.01$ without optimizing it at all based on validation dataset. For each novel class identity, we repeat the experiments for 5 different randomly generated splits between $\Psource$ and $\Ptarget$ adhering to the definition and assumptions of background shift. For CIFAR100, we use ResNet18 backbone followed by a linear layer for classification and train the whole model from scratch for all the methods. For Amazon Reviews dataset, we use pretrained RoBERTa features followed by 2 linear layers for classification. For Amazon Reviews, due to computational limitations we resort to linear probing rather than finetuning the whole model and only finetune the last two linear layers. The primary hyperparameters we tune for stable convergence are learning rate, L2 weight penalty scalar to prevent overfitting, logit multiplier values that act as temperature controllers for softmax/sigmoid scores and gradient clipping to avoid exploding gradients. These hyper parameters are tuned based on a sample training and validation sets but are kept constant throughout the dataset and baseline across different seed values and novelty cases. Furthermore, the methods {\oursold} \& {\ours} require additional hyperparameters like dual learning rate and lagrange multipliers. Each output node is associated with Lagrange multipliers, which address a distinct primal-dual optimization problem owing to diverse target recall constraints. These multipliers are initialized to 1.0, while the dual learning rate is meticulously calibrated for CIFAR100 \& Amazon Reviews datasets individually to ensure stable learning dynamics conducive to minimizing both the objective surrogate loss function for FPR  and recall inequality constraints. Note that for \zoc{}, We used cosine similarity between image embeddings and the known class text embeddings to obtain the closed-set class predictions. Table \ref{table:hyperparams} displays all the hyperparameters used for each of the baselines and datasets.

\subsubsection{Impact of search grid range and density on the performance}\label{sec:alpha_impact_on_performance}
For all the methods, we keep the search grid of candidate target recall values consistent through all the experiments i.e. $\boldsymbol{\alpha} = [0.02, 0.05, 0.10, 0.15, 0.20, 0.25, 0.30, 0.35, 0.40, 0.45]$.\\
\textbf{Choice of the search grid range:}\\
$\boldsymbol{\alpha} = [0.02, 0.45]$ is not claimed to be universal, but chosen to study the most informative regime. We observed that when the size of the novel class (true size, not the one estimated by the model) is below $1\%$, all baselines failed. Accordingly, we set the lower bound of our search grid on that scale, but we show in experiments that including lower values does not change performance by much.
As for the upper bound, we observe that our method is relatively robust when the true novel class size is small (i.e. the model wouldn’t choose a larger alpha if it had the possibility). We cap the upper bound of the gris at $0.45$ because when the novel class constitutes a large fraction of the target data, novelty detection becomes relatively easier. In such regimes, even simple inspection of random data points or sampling strategies can easily detect novel instances, enabling labeling and the use of alternative solutions. Our primary focus is therefore on the more challenging and practically relevant regime where novel classes are relatively rare (typically between 0.02 and 0.2).
We further confirm this through our ablation study in \cref{tab:alpha_range_ablation}. We note that this lower bound is not fundamental and may depend on factors such as dataset characteristics, feature representations, and model architecture. In practice, we recommend determining an appropriate $\boldsymbol{\alpha}$ range via simulation: known classes can be temporarily treated as unknown, and their detectability can be evaluated under varying assumed novel class ratios. This procedure provides empirical guidance for both the bottom value and density of the grid range.\\
\textbf{Choice of search grid density:}
We added experiments on the SUN397 dataset using a ResNet50 backbone pretrained on ImageNet-1K (Table 19 and Figure 5 in the revised draft). These results show that CoLOR is performant for a fixed $\boldsymbol{\alpha}$ range with only minor performance variations across varying grid densities. This indicates that the method does not require finely tuned grids to achieve strong performance, and that the chosen density is sufficient to cover the relevant operating regime.  
\begin{table*}[h!]
\centering
\caption{Hyperparameters: lr = learning rate, dlr = dual learning rate (\ours{}), L2 penalty = L2 weight penalty scaler, lm = logit multiplier, clip = gradient clipping value. The two values separated by "$/$" in learning rate column of SUN397 dataset correspond to the linear probing of ResNet50 (pretrained on ImageNet) and ViT (pretrained using CLIP) backbones respectively. }
\label{table:hyperparams}
\resizebox{\textwidth}{!}{%

\begin{tabular}{c || c | c | c | c | c || c | c | c | c | c || c | c | c | c | c } 
 \hline
 Method & \multicolumn{5}{c||}{CIFAR100} & \multicolumn{5}{c||}{Amazon Reviews} & \multicolumn{5}{c}{SUN397} \\
 \cline{2-16}
 & lr & dlr & L2 penalty & lm & clip & lr & dlr & L2 penalty & lm & clip & lr & dlr & L2 penalty & lm & clip \\
 \hline\hline
 \DDnew{} & $1e-2$ & $-$ & $3e-5$ & $1.2$ & $5.0$ & $1e-2$ & $-$ & $1e-4$ & $1.0$ & $1.0$ & $1e-2/1e-1$ & $-$ & $3e-5$ & $1.2$ & $5.0$ \\ 
 \uPUnew{} & $1e-3$ & $-$ & $3e-7$ & $1.2$ & $5.0$ & $1e-3$ & $-$ & $1e-4$ & $1.0$ & $1.0$ & $1e-3/1e-1$ & $-$ & $3e-7$ & $1.2$ & $5.0$ \\
 \nnPUnew{} & $1e-3$ & $-$ & $3e-7$ & $1.2$ & $5.0$ & $1e-3$ & $-$ & $1e-4$ & $1.0$ & $1.0$ & $1e-3/1e-1$ & $-$ & $3e-7$ & $1.2$ & $5.0$ \\
 \BODA{} & $1e-3$ & $-$ & $3e-3$ & $1.2$ & $5.0$ & $1e-3$ & $-$ & $1e-4$ & $1.0$ & $1.0$ & $1e-3$ & $-$ & $3e-3$ & $1.2$ & $5.0$  \\
 \arpl{} & $1e-2$ & $-$ & $3e-5$ & $1.0$ & $100.0$ & $-$ & $-$ & $-$ & $-$ & $-$ & $1e-2$ & $-$ & $3e-5$ & $1.0$ & $100.0$  \\
 \pulse{} & $1e-3$ & $-$ & $3e-5$ & $1.2$ & $5.0$ & $1e-3$ & $-$ & $1e-4$ & $1.0$ & $1.0$ & $1e-3$ & $-$ & $3e-5$ & $1.2$ & $5.0$ \\
 \ours{} & $1e-3$ & $2e-2$ & $3e-7$ & $1.2$ & $5.0$ & $1e-3$ & $6e-2$ & $1e-4$ & $1.0$ & $1.0$ & $1e-3$ & $2e-2$ & $3e-7$ & $1.2$ & $5.0$ \\ [1ex] 
 \hline
\end{tabular}
}
\end{table*}

\subsection{Performance comparison based on average relative \& absolute AU-ROC and AU-PRC scores}\label{sec:auroc}
Refer to tables \ref{table:cifar100_w_shift}, \ref{table:amazon_reviews_w_shift}, \ref{table:sun397_rn50_w_shift}, \ref{table:sun397_vitl14_w_shift}, \ref{table:sun397_vitl14_wo_shift}, \ref{table:sun397_vitl14_w_vs_wo_shift}, \ref{table:sun397_target_acc} below.

\begin{table*}[h!]
\centering
\caption{SUN397 dataset with distribution shift due to varying proportions of subtypes of scenes/places. All the methods here use ResNet50 backbone pretrained on ImageNet1K\_V1 \cite{Russakovsky2015imagenet}. \auroc{} and \auprc{} represent the performance for the novel category detection task while \oscr{} measures overall performance of the methods on both known and unknown classes. $\alpha$ is the mixture proportion column for the respective novel classes.}
\label{table:sun397_rn50_w_shift}
\resizebox{\textwidth}{!}{%

\begin{tabular}{c || c || c | c | c || c }
\hline
Metric & Method & \multicolumn{3}{c||}{Novel Classes (natural outdoor scenes/places)} &  \\
\cline{3-5}
 & & $\alpha$ $=0.06\pm0.01$ & $\alpha$ $=0.06\pm0.01$ & $\alpha$ $=0.08\pm0.04$ & \\
 \cline{3-5}
 & & [water, ice, snow, etc.] & [mountains, hills, desert, sky, etc.] & [forest, field, jungle, etc.] & Summary \\
\hline\hline
\multirow{5}{*}{\auroc{}} & \DDnew{} & $0.90\pm0.06$ & $0.88\pm0.05$ &                          $0.94\pm0.02$ & $0.91\pm0.05$ \\
                       & \uPUnew{} & $0.71\pm0.13$ & $0.72\pm0.18$ & $0.84\pm0.08$ & $0.76\pm0.14$ \\
                       & \nnPUnew{} & $0.71\pm0.13$ & $0.72\pm0.18$ & $0.84\pm0.08$ & $0.76\pm0.14$ \\
                       & \BODA{} & $0.86\pm0.03$ & $0.90\pm0.01$ & $0.82\pm0.09$ & $0.86\pm0.06$ \\
                       & \shot{} & $0.77\pm0.08$ & $0.64\pm0.21$ & $0.70\pm0.18$ & $0.71\pm0.16$ \\
                       & \arpl{} & $0.73\pm0.03$ & $0.72\pm0.08$ & $0.68\pm0.12$ & $0.71\pm0.08$ \\
                       & \anna{} & $0.92\pm0.03$ & $0.96\pm0.02$ & $0.92\pm0.08$ & $0.93\pm0.05$ \\
                       & \cac{} & $0.79\pm0.05$ & $0.82\pm0.05$ & $0.78\pm0.04$ & $0.80\pm0.05$ \\
                       & \pulse{} & $0.75\pm0.05$ & $0.76\pm0.05$ & $0.69\pm0.10$ & $0.73\pm0.07$ \\
                       & \ours{} & $\boldsymbol{0.98\pm0.02}$ & $\boldsymbol{0.98\pm0.01}$ & $\boldsymbol{0.98\pm0.02}$ & $\boldsymbol{0.98\pm0.02}$ \\
\hline
\multirow{5}{*}{\auprc{}} & \DDnew{} & $0.50\pm0.20$ & $0.40\pm0.23$ &                          $0.73\pm0.10$ & $0.54\pm0.22$ \\
                       & \uPUnew{} & $0.11\pm0.04$ & $0.18\pm0.20$ & $0.34\pm0.30$ & $0.21\pm0.22$ \\
                       & \nnPUnew{} & $0.11\pm0.04$ & $0.18\pm0.20$ & $0.34\pm0.30$ & $0.21\pm0.22$ \\
                       & \BODA{} & $0.41\pm0.04$ & $0.52\pm0.06$ & $0.43\pm0.16$ & $0.45\pm0.11$ \\
                       & \shot{} & $0.19\pm0.08$ & $0.17\pm0.19$ & $0.21\pm0.14$ & $0.19\pm0.13$ \\
                       & \arpl{} & $0.11\pm0.02$ & $0.11\pm0.04$ & $0.15\pm0.11$ & $0.12\pm0.07$ \\
                       & \anna{} & $0.71\pm0.13$ & $0.77\pm0.13$ & $0.72\pm0.23$ & $0.73\pm0.16$ \\
                       & \cac{} & $0.17\pm0.05$ & $0.17\pm0.06$ & $0.20\pm0.10$ & $0.18\pm0.07$ \\
                       & \pulse{} & $0.15\pm0.05$ & $0.14\pm0.06$ & $0.15\pm0.06$ & $0.15\pm0.05$ \\
                       & \ours{} & $\boldsymbol{0.92\pm0.04}$ & $\boldsymbol{0.93\pm0.05}$ & $\boldsymbol{0.89\pm0.15}$ & $\boldsymbol{0.91\pm0.09}$ \\
\hline
\multirow{5}{*}{\oscr{}} & \DDnew{} & $0.66\pm0.04$ & $0.67\pm0.06$ & $0.70\pm0.03$                         & $0.68\pm0.05$ \\
                       & \uPUnew{} & $0.35\pm0.10$ & $0.35\pm0.08$ & $0.48\pm0.11$ & $0.40\pm0.11$ \\
                       & \nnPUnew{} & $0.35\pm0.10$ & $0.35\pm0.08$ & $0.48\pm0.11$ & $0.40\pm0.11$ \\
                       & \BODA{} & $0.56\pm0.10$ & $0.59\pm0.09$ & $0.50\pm0.09$ & $0.55\pm0.10$ \\
                       & \shot{} & $0.25\pm0.05$ & $0.18\pm0.07$ & $0.24\pm0.08$ & $0.22\pm0.07$ \\
                       & \arpl{} & $0.61\pm0.01$ & $0.61\pm0.06$ & $0.58\pm0.08$ & $0.60\pm0.06$ \\
                       & \anna{} & $0.58\pm0.07$ & $0.62\pm0.05$ & $0.59\pm0.09$ & $0.60\pm0.07$ \\
                       & \cac{} & $0.68\pm0.05$ & $0.70\pm0.07$ & $0.66\pm0.03$ & $0.68\pm0.05$ \\
                       & \pulse{} & $0.66\pm0.05$ & $0.68\pm0.05$ & $0.62\pm0.09$ & $0.65\pm0.06$ \\
                       & \ours{} & $\boldsymbol{0.82\pm0.03}$ & $\boldsymbol{0.81\pm0.03}$ & $\boldsymbol{0.81\pm0.05}$ & $\boldsymbol{0.81\pm0.04}$ \\         
\end{tabular}

}
\end{table*}

\begin{table*}[h!]
\centering
\caption{SUN397 dataset without distribution shift due to varying proportions of subtypes of scenes/places. All the methods here use ResNet50 backbone pretrained on ImageNet1K\_V1 \cite{Russakovsky2015imagenet}. \auroc{} and \auprc{} represent the performance for the novel category detection task while \oscr{} measures overall performance of the methods on both known and unknown classes. $\alpha$ is the mixture proportion column for the respective novel classes.}
\label{table:sun397_rn50_wo_shift}
\resizebox{\textwidth}{!}{%

\begin{tabular}{c || c || c | c | c || c }
\hline
Metric & Method & \multicolumn{3}{c||}{Novel Classes (natural outdoor scenes/places)} &  \\
\cline{3-5}
 & & $\alpha$ $=0.06\pm0.01$ & $\alpha$ $=0.06\pm0.01$ & $\alpha$ $=0.08\pm0.04$ & \\
 \cline{3-5}
 & & [water, ice, snow, etc.] & [mountains, hills, desert, sky, etc.] & [forest, field, jungle, etc.] & Summary \\
\hline\hline
\multirow{5}{*}{\auroc{}} & \DDnew{} & $\boldsymbol{1.00\pm0.00}$ &                                              $\boldsymbol{1.00\pm0.00}$ & $\boldsymbol{1.00\pm0.00}$ &                                 $\boldsymbol{1.00\pm0.00}$ \\
                       & \uPUnew{} & $0.99\pm0.00$ & $0.99\pm0.01$ & $1.00\pm0.00$ & $0.99\pm0.01$ \\
                       & \nnPUnew{} & $0.99\pm0.00$ & $0.99\pm0.01$ & $1.00\pm0.00$ & $0.99\pm0.01$ \\
                       & \BODA{} & $0.86\pm0.06$ & $0.91\pm0.01$ & $0.85\pm0.05$ & $0.87\pm0.05$ \\
                       & \shot{} & $0.63\pm0.08$ & $0.58\pm0.12$ & $0.61\pm0.14$ & $0.61\pm0.11$ \\
                       & \arpl{} & $0.86\pm0.03$ & $0.81\pm0.03$ & $0.84\pm0.05$ & $0.84\pm0.04$ \\
                       & \anna{} & $0.95\pm0.05$ & $0.98\pm0.02$ & $0.92\pm0.11$ & $0.95\pm0.07$ \\
                       & \cac{} & $0.89\pm0.05$ & $0.89\pm0.01$ & $0.87\pm0.03$ & $0.88\pm0.03$ \\
                       & \pulse{} & $0.82\pm0.08$ & $0.84\pm0.02$ & $0.79\pm0.04$ & $0.82\pm0.05$ \\
                       & \ours{} & $\boldsymbol{1.00\pm0.00}$ & $\boldsymbol{1.00\pm0.00}$ & $\boldsymbol{1.00\pm0.00}$ & $\boldsymbol{1.00\pm0.00}$ \\
\hline
\multirow{5}{*}{\auprc{}} & \DDnew{} & $\boldsymbol{1.00\pm0.00}$ &                                              $\boldsymbol{1.00\pm0.00}$ & $\boldsymbol{0.99\pm0.01}$ &                                 $\boldsymbol{1.00\pm0.00}$ \\
                       & \uPUnew{} & $0.95\pm0.05$ & $0.90\pm0.13$ & $0.98\pm0.02$ & $0.94\pm0.09$ \\
                       & \nnPUnew{} & $0.95\pm0.05$ & $0.90\pm0.13$ & $0.98\pm0.02$ & $0.94\pm0.09$ \\
                       & \BODA{} & $0.40\pm0.11$ & $0.52\pm0.02$ & $0.40\pm0.06$ & $0.44\pm0.09$ \\
                       & \shot{} & $0.09\pm0.02$ & $0.09\pm0.06$ & $0.11\pm0.06$ & $0.10\pm0.05$ \\
                       & \arpl{} & $0.20\pm0.04$ & $0.16\pm0.04$ & $0.25\pm0.09$ & $0.20\pm0.07$ \\
                       & \anna{} & $0.90\pm0.05$ & $0.94\pm0.03$ & $0.86\pm0.13$ & $0.90\pm0.08$ \\
                       & \cac{} & $0.29\pm0.08$ & $0.27\pm0.06$ & $0.30\pm0.08$ & $0.29\pm0.07$ \\
                       & \pulse{} & $0.21\pm0.07$ & $0.21\pm0.05$ & $0.19\pm0.03$ & $0.20\pm0.05$ \\
                       & \ours{} & $0.96\pm0.03$ & $\boldsymbol{1.00\pm0.00}$ & $\boldsymbol{1.00\pm0.00}$ & $0.99\pm0.02$ \\
\hline
\multirow{5}{*}{\oscr{}} & \DDnew{} & $0.86\pm0.00$ & $0.86\pm0.00$ & $0.86\pm0.00$                              & $0.86\pm0.00$ \\
                       & \uPUnew{} & $0.72\pm0.01$ & $0.72\pm0.01$ & $0.72\pm0.01$ & $0.72\pm0.01$ \\
                       & \nnPUnew{} & $0.72\pm0.01$ & $0.72\pm0.01$ & $0.72\pm0.01$ & $0.72\pm0.01$ \\
                       & \BODA{} & $0.71\pm0.03$ & $0.70\pm0.01$ & $0.61\pm0.05$ & $0.67\pm0.06$ \\
                       & \shot{} & $0.19\pm0.05$ & $0.16\pm0.04$ & $0.17\pm0.08$ & $0.18\pm0.06$ \\
                       & \arpl{} & $0.82\pm0.02$ & $0.78\pm0.03$ & $0.80\pm0.05$ & $0.80\pm0.04$ \\
                       & \anna{} & $0.82\pm0.05$ & $0.85\pm0.02$ & $0.79\pm0.10$ & $0.82\pm0.07$ \\
                       & \cac{} & $0.84\pm0.05$ & $0.85\pm0.01$ & $0.83\pm0.03$ & $0.84\pm0.03$ \\
                       & \pulse{} & $0.78\pm0.08$ & $0.81\pm0.02$ & $0.76\pm0.03$ & $0.78\pm0.05$ \\
                       & \ours{} & $\boldsymbol{0.92\pm0.03}$ & $\boldsymbol{0.93\pm0.01}$ & $\boldsymbol{0.92\pm0.02}$ & $\boldsymbol{0.92\pm0.02}$ \\         
\end{tabular}

}
\end{table*}

\begin{table*}[h!]
\centering
\caption{SUN397 dataset with distribution shift due to varying proportions of subtypes of scenes/places. All the principled methods (\DDnew{}, \uPU{}, \nnPU{}, \BODA{} \& \ours{}) here use pretrained CLIP \vit{} backbone from \cite{radford2021clip}. }
\label{table:sun397_vitl14_w_shift}
\resizebox{\textwidth}{!}{%

\begin{tabular}{c || c || c | c | c || c }
\hline
Metric & Method & \multicolumn{3}{c||}{Novel Classes (natural outdoor scenes/places)} &  \\
\cline{3-5}
 & & $\alpha$ $=0.06\pm0.01$ & $\alpha$ $=0.06\pm0.01$ & $\alpha$ $=0.08\pm0.04$ & \\
 \cline{3-5}
 & & [water, ice, snow, etc.] & [mountains, hills, desert, sky, etc.] & [forest, field, jungle, etc.] & Summary \\
\hline\hline
\multirow{5}{*}{\auroc{}} & \DDnew{} & $0.96\pm0.03$ & $0.96\pm0.03$ &                          $0.96\pm0.02$ & $0.96\pm0.02$ \\
                       & \uPUnew{} & $0.95\pm0.03$ & $0.94\pm0.05$ & $0.95\pm0.04$ & $0.95\pm0.04$ \\
                       & \nnPUnew{} & $0.95\pm0.03$ & $0.94\pm0.05$ & $0.95\pm0.04$ & $0.95\pm0.04$ \\
                       & \BODA{} & $0.91\pm0.06$ & $0.82\pm0.18$ & $0.79\pm0.19$ & $0.84\pm0.15$ \\
                       & \zoc{} & $0.84\pm0.02$ & $0.85\pm0.03$ & $0.78\pm0.12$ & $0.82\pm0.07$ \\
                       & \cac{} & $0.98\pm0.02$ & $0.99\pm0.01$ & $0.98\pm0.01$ & $0.98\pm0.01$ \\
                       & \pulse{} & $0.96\pm0.02$ & $0.96\pm0.01$ & $0.95\pm0.01$ & $0.96\pm0.01$ \\
                       & \ours{} & $\boldsymbol{0.99\pm0.02}$ & $\boldsymbol{0.99\pm0.01}$ & $\boldsymbol{0.99\pm0.01}$ & $\boldsymbol{0.99\pm0.01}$ \\
\hline
\multirow{5}{*}{\auprc{}} & \DDnew{} & $0.88\pm0.05$ & $0.85\pm0.12$ &                          $0.90\pm0.07$ & $0.87\pm0.08$ \\
                       & \uPUnew{} & $0.84\pm0.05$ & $0.78\pm0.19$ & $0.85\pm0.10$ & $0.82\pm0.12$ \\
                       & \nnPUnew{} & $0.84\pm0.05$ & $0.78\pm0.19$ & $0.85\pm0.10$ & $0.82\pm0.12$ \\
                       & \BODA{} & $0.39\pm0.23$ & $0.37\pm0.39$ & $0.38\pm0.30$ & $0.38\pm0.29$ \\
                       & \zoc{} & $0.21\pm0.05$ & $0.23\pm0.04$ & $0.26\pm0.08$ & $0.23\pm0.06$ \\
                       & \cac{} & $0.77\pm0.12$ & $0.82\pm0.10$ & $0.78\pm0.08$ & $0.79\pm0.10$ \\
                       & \pulse{} & $0.6\pm0.09$ & $0.61\pm0.11$ & $0.60\pm0.09$ & $0.60\pm0.09$ \\
                       & \ours{} & $\boldsymbol{0.95\pm0.04}$ & $\boldsymbol{0.95\pm0.01}$ & $\boldsymbol{0.96\pm0.03}$ & $\boldsymbol{0.95\pm0.03}$ \\
\hline
\multirow{5}{*}{\oscr{}} & \DDnew{} & $0.93\pm0.02$ & $0.93\pm0.03$ &                           $0.93\pm0.02$ & $0.93\pm0.02$ \\
                       & \uPUnew{} & $0.92\pm0.02$ & $0.91\pm0.05$ & $0.91\pm0.04$ & $0.92\pm0.03$ \\
                       & \nnPUnew{} & $0.92\pm0.02$ & $0.91\pm0.05$ & $0.91\pm0.04$ & $0.92\pm0.03$ \\
                       & \BODA{} & $0.82\pm0.04$ & $0.61\pm0.34$ & $0.73\pm0.14$ & $0.72\pm0.21$ \\
                       & \zoc{} & $0.53\pm0.03$ & $0.52\pm0.05$ & $0.47\pm0.08$ & $0.51\pm0.06$ \\
                       & \cac{} & $0.96\pm0.02$ & $0.96\pm0.01$ & $0.95\pm0.02$ & $0.96\pm0.02$ \\
                       & \pulse{} & $0.94\pm0.01$ & $0.95\pm0.02$ & $0.94\pm0.02$ & $0.94\pm0.02$ \\
                       & \ours{} & $\boldsymbol{0.96\pm0.02}$ & $\boldsymbol{0.97\pm0.01}$ & $\boldsymbol{0.96\pm0.01}$ & $\boldsymbol{0.96\pm0.01}$ \\         
\end{tabular}
}
\end{table*}

\begin{table*}[h!]
\centering
\caption{SUN397 dataset \textbf{without} any intended distribution shift. All the principled methods (\DDnew{}, \uPU{}, \nnPU{}, \BODA{} \& \ours{}) here use pretrained CLIP \vit{} backbone from \cite{radford2021clip}. }
\label{table:sun397_vitl14_wo_shift}
\resizebox{\textwidth}{!}{%

\begin{tabular}{c || c || c | c | c || c }
\hline
Metric & Method & \multicolumn{3}{c||}{Novel Classes (natural outdoor scenes/places)} &  \\
\cline{3-5}
 & & $\alpha$ $=0.06\pm0.00$ & $\alpha$ $=0.05\pm0.01$ & $\alpha$ $=0.07\pm0.03$ & \\
 \cline{3-5}
 & & [water, ice, snow, etc.] & [mountains, hills, desert, sky, etc.] & [forest, field, jungle, etc.] & Summary \\
\hline\hline
\multirow{5}{*}{\auroc{}} & \DDnew{} & $\boldsymbol{1.00\pm0.00}$ &                                      $\boldsymbol{1.00\pm0.00}$ & $\boldsymbol{1.00\pm0.00}$ &                         $\boldsymbol{1.00\pm0.00}$ \\
                       & \uPUnew{} & $\boldsymbol{1.00\pm0.00}$ & $\boldsymbol{1.00\pm0.00}$ & $\boldsymbol{1.00\pm0.00}$ & $\boldsymbol{1.00\pm0.00}$ \\
                       & \nnPUnew{} & $\boldsymbol{1.00\pm0.00}$ & $\boldsymbol{1.00\pm0.00}$ & $\boldsymbol{1.00\pm0.00}$ & $\boldsymbol{1.00\pm0.00}$ \\
                       & \BODA{} & $0.87\pm0.03$ & $0.88\pm0.04$ & $0.90\pm0.05$ & $0.88\pm0.04$ \\
                       & \arpl{} & $0.86\pm0.03$ & $0.81\pm0.03$ & $0.84\pm0.05$ & $0.84\pm0.04$ \\
                       & \zoc{} & $0.84\pm0.02$ & $0.86\pm0.02$ & $0.77\pm0.12$ & $0.82\pm0.08$ \\
                       & \cac{} & $0.99\pm0.01$ & $0.99\pm0.00$ & $0.99\pm0.01$ & $0.99\pm0.01$ \\
                       & \pulse{} & $0.98\pm0.01$ & $0.98\pm0.01$ & $0.97\pm0.02$ & $0.98\pm0.01$ \\
                       & \ours{} & $\boldsymbol{1.00\pm0.00}$ & $\boldsymbol{1.00\pm0.00}$ & $\boldsymbol{1.00\pm0.00}$ & $\boldsymbol{1.00\pm0.00}$ \\
\hline
\multirow{5}{*}{\auprc{}} & \DDnew{} & $\boldsymbol{1.00\pm0.00}$ &                                              $\boldsymbol{1.00\pm0.00}$ & $\boldsymbol{1.00\pm0.00}$ &                                 $\boldsymbol{1.00\pm0.00}$ \\
                       & \uPUnew{} & $\boldsymbol{1.00\pm0.00}$ & $\boldsymbol{1.00\pm0.00}$ & $\boldsymbol{1.00\pm0.00}$ & $\boldsymbol{1.00\pm0.00}$ \\
                       & \nnPUnew{} & $\boldsymbol{1.00\pm0.00}$ & $\boldsymbol{1.00\pm0.00}$ & $\boldsymbol{1.00\pm0.00}$ & $\boldsymbol{1.00\pm0.00}$ \\
                       & \BODA{} & $0.18\pm0.04$ & $0.20\pm0.06$ & $0.34\pm0.22$ & $0.24\pm0.15$ \\
                       & \arpl{} & $0.20\pm0.04$ & $0.16\pm0.04$ & $0.25\pm0.09$ & $0.20\pm0.07$ \\
                       & \zoc{} & $0.19\pm0.02$ & $0.21\pm0.04$ & $0.25\pm0.11$ & $0.22\pm0.07$ \\
                       & \cac{} & $0.87\pm0.10$ & $0.91\pm0.03$ & $0.86\pm0.08$ & $0.88\pm0.08$ \\
                       & \pulse{} & $0.76\pm0.08$ & $0.76\pm0.08$ & $0.74\pm0.11$ & $0.75\pm0.09$ \\
                       & \ours{} & $0.99\pm0.01$ & $0.99\pm0.01$ & $0.99\pm0.01$ & $0.99\pm0.01$ \\
\hline
\multirow{5}{*}{\oscr{}} & \DDnew{} & $\boldsymbol{0.99\pm0.00}$ &                                       $\boldsymbol{0.99\pm0.00}$ & $\boldsymbol{0.99\pm0.00}$ &                         $\boldsymbol{0.99\pm0.00}$ \\
                       & \uPUnew{} & $\boldsymbol{0.99\pm0.00}$ & $\boldsymbol{0.99\pm0.00}$ & $\boldsymbol{0.99\pm0.00}$ & $\boldsymbol{0.99\pm0.00}$ \\
                       & \nnPUnew{} & $\boldsymbol{0.99\pm0.00}$ & $\boldsymbol{0.99\pm0.00}$ & $\boldsymbol{0.99\pm0.00}$ & $\boldsymbol{0.99\pm0.00}$ \\
                       & \BODA{} & $0.94\pm0.01$ & $0.94\pm0.01$ & $0.92\pm0.02$ & $0.93\pm0.02$ \\
                       & \arpl{} & $0.82\pm0.02$ & $0.78\pm0.03$ & $0.80\pm0.05$ & $0.80\pm0.04$ \\
                       & \zoc{} & $0.50\pm0.01$ & $0.51\pm0.02$ & $0.45\pm0.07$ & $0.49\pm0.05$ \\
                       & \cac{} & $0.98\pm0.01$ & $0.99\pm0.00$ & $0.98\pm0.01$ & $0.98\pm0.01$ \\
                       & \pulse{} & $0.98\pm0.01$ & $0.98\pm0.01$ & $0.97\pm0.02$ & $0.97\pm0.02$ \\
                       & \ours{} & $0.98\pm0.00$ & $0.98\pm0.00$ & $0.98\pm0.00$ & $0.98\pm0.00$ \\         
\end{tabular}
}
\end{table*}

\begin{table*}[h!]
\centering
\caption{CIFAR100 dataset with background shift due to varying proportions of subtypes. (\auroc{}) and (\auprc{}) represent the performance for the task of novel class detection while Open-Set Classification Rate (\oscr{}) measures overall performance of the methods on both known and unknown classes. $\alpha$ is in the range of $0.05$ to $0.10$. \textit{Note that all the adaptive methods use ResNet18 and are trained from scratch.}}
\label{table:cifar100_w_shift}
\resizebox{\textwidth}{!}{%

\begin{tabular}{c || c || c | c | c | c | c || c }
\hline
Metric & Method & \multicolumn{5}{c||}{Novel Classes ($\alpha$$=0.07\pm0.02$)} &  \\
\cline{3-7}
 & & Baby & Man & Butterfly & Rocket & Streetcar & Summary \\
\hline\hline
\multirow{5}{*}{\auroc{}} & \DDnew{} & $0.76\pm0.03$ & $0.61\pm0.13$ & $0.64\pm0.10$                         & $0.78\pm0.09$ & $0.72\pm0.07$ & $0.70\pm0.11$ \\
                       & \uPUnew{} & $0.71\pm0.07$ & $0.62\pm0.09$ & $0.56\pm0.13$ & $0.74\pm0.06$ & $0.73\pm0.05$ & $0.67\pm0.10$ \\
                       & \nnPUnew{} & $0.69\pm0.09$ & $0.58\pm0.10$ & $0.60\pm0.09$ & $0.74\pm0.06$ & $0.72\pm0.16$ & $0.67\pm0.12$ \\
                       & \BODA{} & $0.57\pm0.07$ & $0.54\pm0.05$ & $0.59\pm0.03$ & $0.54\pm0.07$ & $0.61\pm0.05$ & $0.57\pm0.06$ \\
                       & \arpl{} & $0.79\pm0.05$ & $0.80\pm0.03$ & $0.73\pm0.05$ & $0.73\pm0.04$ & $0.73\pm0.05$ & $0.76\pm0.05$ \\
                       & \zoc{} & $\boldsymbol{0.97\pm0.01}$ & $\boldsymbol{0.98\pm0.01}$ & $\boldsymbol{0.82\pm0.03}$ & $\boldsymbol{0.98\pm0.01}$ & $\boldsymbol{1.00\pm0.00}$ & $\boldsymbol{0.95\pm0.07}$ \\
                       & \cac{} & $0.71\pm0.05$ & $0.72\pm0.04$ & $0.71\pm0.02$ & $0.64\pm0.03$ & $0.68\pm0.02$ & $0.69\pm0.04$ \\
                       & \pulse{} & $0.72\pm0.04$ & $0.74\pm0.03$ & $0.70\pm0.02$ & $0.70\pm0.02$ & $0.72\pm0.06$ & $0.72\pm0.04$ \\
                       & \ours{} & $0.77\pm0.06$ & $0.70\pm0.12$ & $0.68\pm0.04$ & $0.83\pm0.04$ & $0.85\pm0.03$ & $0.77\pm0.09$ \\
\hline
\multirow{5}{*}{\auprc{}} & \DDnew{} & $0.25\pm0.03$ & $0.15\pm0.10$ & $0.15\pm0.09$                         & $0.37\pm0.12$ & $0.27\pm0.17$ & $0.24\pm0.13$ \\
                       & \uPUnew{} & $0.22\pm0.18$ & $0.11\pm0.03$ & $0.11\pm0.05$ & $0.28\pm0.14$ & $0.18\pm0.10$ & $0.18\pm0.12$ \\
                       & \nnPUnew{} & $0.21\pm0.16$ & $0.09\pm0.03$ & $0.11\pm0.04$ & $0.32\pm0.09$ & $0.27\pm0.25$ & $0.20\pm0.16$ \\
                       & \BODA{} & $0.10\pm0.05$ & $0.08\pm0.02$ & $0.09\pm0.02$ & $0.08\pm0.02$ & $0.09\pm0.03$ & $0.09\pm0.03$ \\
                       & \arpl{} & $0.22\pm0.07$ & $0.23\pm0.06$ & $0.16\pm0.05$ & $0.15\pm0.05$ & $0.15\pm0.06$ & $0.18\pm0.07$ \\
                       & \zoc{} & $\boldsymbol{0.89\pm0.04}$ & $\boldsymbol{0.76\pm0.05}$ & $\boldsymbol{0.32\pm0.07}$ & $\boldsymbol{0.77\pm0.05}$ & $\boldsymbol{0.96\pm0.02}$ & $\boldsymbol{0.74\pm0.23}$ \\
                       & \cac{} & $0.18\pm0.06$ & $0.16\pm0.06$ & $0.17\pm0.04$ & $0.11\pm0.03$ & $0.14\pm0.07$ & $0.15\pm0.05$ \\
                       & \pulse{} & $0.18\pm0.06$ & $0.21\pm0.06$ & $0.15\pm0.05$ & $0.13\pm0.02$ & $0.16\pm0.04$ & $0.17\pm0.05$ \\
                       & \ours{} & $0.35\pm0.12$ & $0.20\pm0.11$ & $0.21\pm0.06$ & $0.43\pm0.03$ & $0.44\pm0.14$ & $0.33\pm0.14$ \\
\hline
\multirow{5}{*}{\oscr{}} & \DDnew{} & $0.55\pm0.04$ & $0.44\pm0.12$ & $0.44\pm0.11$                         & $0.57\pm0.10$ & $0.52\pm0.07$ & $0.51\pm0.10$ \\
                       & \uPUnew{} & $0.51\pm0.08$ & $0.45\pm0.08$ & $0.42\pm0.12$ & $0.53\pm0.05$ & $0.53\pm0.05$ & $0.49\pm0.09$ \\
                       & \nnPUnew{} & $0.50\pm0.09$ & $0.41\pm0.07$ & $0.44\pm0.07$ & $0.54\pm0.08$ & $0.52\pm0.16$ & $0.48\pm0.10$ \\
                       & \BODA{} & $0.61\pm0.04$ & $0.61\pm0.04$ & $0.61\pm0.02$ & $0.57\pm0.05$ & $0.62\pm0.04$ & $0.60\pm0.04$ \\
                       & \arpl{} & $0.63\pm0.04$ & $0.64\pm0.03$ & $0.60\pm0.04$ & $0.60\pm0.04$ & $0.59\pm0.05$ & $0.61\pm0.04$ \\
                       & \zoc{} & $\boldsymbol{0.82\pm0.02}$ & $\boldsymbol{0.83\pm0.02}$ & $\boldsymbol{0.70\pm0.02}$ & $\boldsymbol{0.82\pm0.02}$ & $\boldsymbol{0.82\pm0.01}$ & $\boldsymbol{0.84\pm0.01}$ \\
                       & \cac{} & $0.56\pm0.04$ & $0.57\pm0.04$ & $0.56\pm0.03$ & $0.51\pm0.03$ & $0.54\pm0.02$ & $0.55\pm0.04$ \\
                       & \pulse{} & $0.62\pm0.03$ & $0.64\pm0.04$ & $0.61\pm0.02$ & $0.60\pm0.02$ & $0.62\pm0.05$ & $0.62\pm0.03$ \\
                       & \ours{} & $0.58\pm0.06$ & $0.54\pm0.12$ & $0.52\pm0.05$ & $0.64\pm0.05$ & $0.65\pm0.02$ & $0.59\pm0.08$ \\         
\end{tabular}

}
\end{table*}

\begin{table*}[h!]
\centering
\caption{CIFAR100 dataset without background shift. (\auroc{}) and (\auprc{}) represent the performance for the task of novel class detection while Open-Set Classification Rate (\oscr{}) measures overall performance of the methods on both known and unknown classes. $\alpha$ is set to $0.05$. \textit{Note that all the adaptive methods use ResNet18 and are trained from scratch.}}
\label{table:cifar100_wo_shift}
\resizebox{\textwidth}{!}{%

\begin{tabular}{c || c || c | c | c | c | c || c }
\hline
Metric & Method & \multicolumn{5}{c||}{Novel Classes ($\alpha$$=0.05\pm0.00$)} &  \\
\cline{3-7}
 & & Baby & Man & Butterfly & Rocket & Streetcar & Summary \\
\hline\hline
\multirow{5}{*}{\auroc{}} & \DDnew{} & $0.82\pm0.04$ & $0.71\pm0.09$ &                          $0.71\pm0.11$ & $0.84\pm0.09$ & $0.72\pm0.15$ &                          $0.76\pm0.11$ \\
                       & \uPUnew{} & $0.75\pm0.10$ & $0.67\pm0.08$ & $0.64\pm0.12$ & $0.84\pm0.06$ & $0.74\pm0.11$ & $0.73\pm0.11$ \\
                       & \nnPUnew{} & $0.75\pm0.05$ & $0.62\pm0.05$ & $0.66\pm0.07$ & $0.82\pm0.04$ & $0.79\pm0.08$ & $0.73\pm0.10$ \\
                       & \BODA{} & $0.59\pm0.0$ & $0.58\pm0.02$ & $0.59\pm0.05$ & $0.59\pm0.02$ & $0.62\pm0.06$ & $0.60\pm0.04$ \\
                       & \arpl{} & $0.78\pm0.03$ & $0.82\pm0.03$ & $0.76\pm0.03$ & $0.73\pm0.04$ & $0.74\pm0.05$ & $0.77\pm0.05$ \\
                       & \zoc{} & $\boldsymbol{0.97\pm0.01}$ & $\boldsymbol{0.98\pm0.01}$ & $\boldsymbol{0.82\pm0.02}$ & $\boldsymbol{0.98\pm0.00}$ & $\boldsymbol{1.00\pm0.00}$ & $\boldsymbol{0.95\pm0.07}$ \\
                       & \cac{} & $0.71\pm0.01$ & $0.72\pm0.02$ & $0.69\pm0.03$ & $0.62\pm0.03$ & $0.71\pm0.01$ & $0.69\pm0.04$ \\
                       & \pulse{} & $0.73\pm0.02$ & $0.77\pm0.03$ & $0.72\pm0.01$ & $0.70\pm0.02$ & $0.72\pm0.05$ & $0.73\pm0.04$ \\
                       & \ours{} & $0.82\pm0.05$ & $0.76\pm0.04$ & $0.76\pm0.05$ & $0.82\pm0.03$ & $0.82\pm0.05$ & $0.80\pm0.05$ \\
\hline
\multirow{5}{*}{\auprc{}} & \DDnew{} & $0.39\pm0.10$ & $0.15\pm0.06$ &                          $0.20\pm0.09$ & $0.48\pm0.23$ & $0.25\pm0.16$ &                          $0.29\pm0.18$ \\
                       & \uPUnew{} & $0.20\pm0.11$ & $0.13\pm0.06$ & $0.11\pm0.05$ & $0.45\pm0.12$ & $0.21\pm0.13$ & $0.22\pm0.15$ \\
                       & \nnPUnew{} & $0.18\pm0.08$ & $0.08\pm0.02$ & $0.10\pm0.05$ & $0.39\pm0.12$ & $0.27\pm0.14$ & $0.21\pm0.14$ \\
                       & \BODA{} & $0.07\pm0.01$ & $0.06\pm0.00$ & $0.06\pm0.01$ & $0.07\pm0.01$ & $0.07\pm0.01$ & $0.07\pm0.01$ \\
                       & \arpl{} & $0.15\pm0.06$ & $0.20\pm0.05$ & $0.18\pm0.06$ & $0.11\pm0.03$ & $0.12\pm0.02$ & $0.15\pm0.05$ \\
                       & \zoc{} & $\boldsymbol{0.87\pm0.03}$ & $\boldsymbol{0.68\pm0.04}$ & $\boldsymbol{0.25\pm0.04}$ & $\boldsymbol{0.70\pm0.05}$ & $\boldsymbol{0.94\pm0.02}$ & $\boldsymbol{0.69\pm0.25}$ \\
                       & \cac{} & $0.12\pm0.02$ & $0.13\pm0.02$ & $0.09\pm0.01$ & $0.08\pm0.01$ & $0.12\pm0.02$ & $0.11\pm0.02$ \\
                       & \pulse{} & $0.12\pm0.01$ & $0.16\pm0.03$ & $0.12\pm0.02$ & $0.12\pm0.03$ & $0.11\pm0.02$ & $0.13\pm0.03$ \\
                       & \ours{} & $0.35\pm0.12$ & $0.17\pm0.05$ & $0.21\pm0.07$ & $0.39\pm0.05$ & $0.33\pm0.11$ & $0.29\pm0.12$ \\
\hline
\multirow{5}{*}{\oscr{}} & \DDnew{} & $0.63\pm0.05$ & $0.51\pm0.15$ &                           $0.52\pm0.17$ & $0.62\pm0.17$ & $0.53\pm0.18$ &                          $0.56\pm0.15$ \\
                       & \uPUnew{} & $0.54\pm0.12$ & $0.49\pm0.08$ & $0.45\pm0.11$ & $0.63\pm0.06$ & $0.53\pm0.13$ & $0.53\pm0.12$ \\
                       & \nnPUnew{} & $0.55\pm0.06$ & $0.44\pm0.06$ & $0.45\pm0.07$ & $0.58\pm0.06$ & $0.59\pm0.10$ & $0.52\pm0.09$ \\
                       & \BODA{} & $0.62\pm0.04$ & $0.63\pm0.03$ & $0.61\pm0.02$ & $0.56\pm0.03$ & $0.64\pm0.01$ & $0.61\pm0.04$ \\
                       & \arpl{} & $0.66\pm0.02$ & $0.68\pm0.02$ & $0.64\pm0.03$ & $0.62\pm0.03$ & $0.63\pm0.04$ & $0.65\pm0.04$ \\
                       & \zoc{} & $\boldsymbol{0.81\pm0.01}$ & $\boldsymbol{0.81\pm0.00}$ & $\boldsymbol{0.70\pm0.02}$ & $\boldsymbol{0.81\pm0.00}$ & $\boldsymbol{0.82\pm0.01}$ & $\boldsymbol{0.79\pm0.05}$ \\
                       & \cac{} & $0.55\pm0.01$ & $0.57\pm0.01$ & $0.53\pm0.02$ & $0.49\pm0.03$ & $0.55\pm0.02$ & $0.54\pm0.04$ \\ 
                       & \pulse{} & $0.63\pm0.02$ & $0.66\pm0.02$ & $0.63\pm0.02$ & $0.61\pm0.02$ & $0.63\pm0.04$ & $0.63\pm0.03$ \\ 
                       & \ours{} & $0.65\pm0.02$ & $0.60\pm0.03$ & $0.59\pm0.05$ & $0.63\pm0.04$ & $0.63\pm0.06$ & $0.62\pm0.04$ \\         
\end{tabular}

}
\end{table*}

\begin{table*}[h!]
\centering
\caption{Amazon Reviews dataset with sentiment based distribution shift. \auroc{} and \auprc{} represent the performance for the task of novel class detection while \oscr{} measures overall performance of the methods on both known and unknown classes. $\alpha$ is the mixture proportion column for the respective novel classes.}
\label{table:amazon_reviews_w_shift}
\resizebox{\textwidth}{!}{%

\begin{tabular}{c || c || c | c | c | c | c | c || c }
\hline
Metric & Method & \multicolumn{6}{c||}{Novel Classes} &  \\
\cline{3-8}
 & & $\alpha=0.32$ & $\alpha=0.11$ & $\alpha=0.07$ & $\alpha=0.07$ & $\alpha=0.22$ & $\alpha=0.18$ &  \\
\cline{3-8}
 & & Musical Instruments & Digital Music & Software & Luxury Beauty & Industrial \& Scientific & Prime Pantry & Summary \\
\hline\hline
\multirow{5}{*}{\auroc{}} & \DDnew{} & $0.75\pm0.05$ & $0.81\pm0.04$ & $0.80\pm0.04$                         & $0.73\pm0.04$ & $0.58\pm0.04$ & $0.63\pm0.06$ &                                   $0.72\pm0.09$\\
                       & \uPUnew{} & $0.79\pm0.04$ & $0.85\pm0.06$ & $0.79\pm0.03$ & $0.76\pm0.03$ & $0.67\pm0.04$ & $0.67\pm0.09$ & $0.76\pm0.08$ \\
                       & \nnPUnew{} & $0.79\pm0.04$ & $0.85\pm0.06$ & $0.79\pm0.03$ & $0.76\pm0.03$ & $0.67\pm0.04$ & $0.67\pm0.09$ & $0.76\pm0.08$ \\
                       & \BODA{} & $0.76\pm0.03$ & $0.61\pm0.13$ & $0.56\pm0.06$ & $0.67\pm0.07$ & $0.72\pm0.04$ & $0.65\pm0.09$ & $0.66\pm0.10$ \\
                       & \arpl{} & $0.66\pm0.04$ & $0.75\pm0.06$ & $0.70\pm0.05$ & $0.66\pm0.06$ & $0.72\pm0.02$ & $0.74\pm0.03$ & $0.70\pm0.05$ \\
                       & \cac & $0.70\pm0.03$ & $0.72\pm0.05$ & $0.70\pm0.03$ & $0.56\pm0.08$ & $\boldsymbol{0.74\pm0.05}$ & $0.74\pm0.02$ & $0.70\pm0.07$ \\
                       & \pulse{} & $0.60\pm0.03$ & $0.57\pm0.07$ & $0.65\pm0.06$ & $0.58\pm0.03$ & $0.70\pm0.02$ & $\boldsymbol{0.75\pm0.02}$ & $0.63\pm0.08$ \\
                       & \ours{} & $\boldsymbol{0.82\pm0.03}$ & $\boldsymbol{0.87\pm0.04}$ & $\boldsymbol{0.84\pm0.06}$ & $\boldsymbol{0.77\pm0.04}$ & $0.68\pm0.09$ & $0.72\pm0.11$ & $\boldsymbol{0.79\pm0.09}$ \\
\hline
\multirow{5}{*}{\auprc{}} & \DDnew{} & $0.68\pm0.06$ & $0.62\pm0.08$ & $0.32\pm0.12$                         & $0.27\pm0.09$ & $0.33\pm0.05$ & $0.38\pm0.07$ &                                   $0.43\pm0.18$ \\
                       & \uPUnew{} & $0.73\pm0.04$ & $0.68\pm0.10$ & $0.39\pm0.07$ & $0.37\pm0.10$ & $0.43\pm0.07$ & $0.44\pm0.14$ & $0.51\pm0.17$ \\
                       & \nnPUnew{} & $0.73\pm0.04$ & $0.68\pm0.10$ & $0.39\pm0.07$ & $0.37\pm0.10$ & $0.43\pm0.07$ & $0.44\pm0.14$ & $0.51\pm0.17$ \\
                       & \BODA{} & $0.67\pm0.07$ & $0.19\pm0.11$ & $0.07\pm0.01$ & $0.14\pm0.04$ & $0.38\pm0.06$ & $0.25\pm0.07$ & $0.28\pm0.21$ \\
                       & \arpl{} & $0.43\pm0.04$ & $0.27\pm0.13$ & $0.14\pm0.04$ & $0.12\pm0.03$ & $0.36\pm0.04$ & $0.33\pm0.03$ & $0.27\pm0.13$ \\
                       & \cac{} & $0.50\pm0.04$ & $0.24\pm0.06$ & $0.14\pm0.02$ & $0.08\pm0.01$ & $0.41\pm0.07$ & $0.38\pm0.03$ & $0.30\pm0.16$ \\
                       & \pulse{} & $0.38\pm0.02$ & $0.14\pm0.04$ & $0.10\pm0.02$ & $0.09\pm0.01$ & $0.33\pm0.03$ & $0.36\pm0.03$ & $0.21\pm0.13$ \\
                       & \ours{} & $\boldsymbol{0.76\pm0.04}$ & $\boldsymbol{0.69\pm0.07}$ & $\boldsymbol{0.49\pm0.12}$ & $\boldsymbol{0.35\pm0.14}$ & $\boldsymbol{0.44\pm0.12}$ & $\boldsymbol{0.50\pm0.15}$ & $\boldsymbol{0.54\pm0.18}$ \\
\hline
\multirow{5}{*}{\oscr{}} & \DDnew{} & $0.54\pm0.04$ & $0.54\pm0.05$ & $0.53\pm0.05$                         & $0.47\pm0.02$ & $0.43\pm0.03$ & $0.46\pm0.04$ &                                    $0.50\pm0.06$ \\
                       & \uPUnew{} & $0.59\pm0.03$ & $\boldsymbol{0.59\pm0.06}$ & $0.53\pm0.04$ & $0.49\pm0.02$ & $0.52\pm0.03$ & $0.51\pm0.03$ & $0.54\pm0.05$ \\
                       & \nnPUnew{} & $0.59\pm0.03$ & $\boldsymbol{0.59\pm0.06}$ & $0.53\pm0.04$ & $0.49\pm0.02$ & $0.52\pm0.03$ & $0.51\pm0.03$ & $0.54\pm0.05$ \\
                       & \BODA{} & $0.50\pm0.01$ & $0.53\pm0.05$ & $\boldsymbol{0.56\pm0.04}$ & $\boldsymbol{0.51\pm0.03}$ & $\boldsymbol{0.62\pm0.01}$ & $0.62\pm0.06$ & $\boldsymbol{0.56\pm0.06}$ \\
                       & \arpl{} & $0.55\pm0.04$ & $0.53\pm0.02$ & $0.54\pm0.04$ & $\boldsymbol{0.51\pm0.03}$ & $\boldsymbol{0.62\pm0.04}$ & $0.60\pm0.02$ & $0.56\pm0.05$ \\
                       & \cac{} & $0.50\pm0.02$ & $0.47\pm0.04$ & $0.46\pm0.03$ & $0.37\pm0.04$ & $0.57\pm0.05$ & $0.56\pm0.03$ & $0.49\pm0.07$ \\
                       & \pulse{} & $0.52\pm0.03$ & $0.47\pm0.04$ & $0.54\pm0.04$ & $0.49\pm0.03$ & $0.63\pm0.03$ & $\boldsymbol{0.64\pm0.06}$ & $0.53\pm0.07$ \\
                       & \ours{} & $\boldsymbol{0.61\pm0.04}$ & $\boldsymbol{0.59\pm0.05}$ & $\boldsymbol{0.56\pm0.07}$ & $0.50\pm0.05$ & $0.56\pm0.03$ & $0.55\pm0.07$ & $\boldsymbol{0.56\pm0.06}$ \\         
\end{tabular}
}
\end{table*}

\begin{figure*}[h!]
\begin{subfigure}{.33\textwidth}
  \centering
  \includegraphics[width=1.1\linewidth]{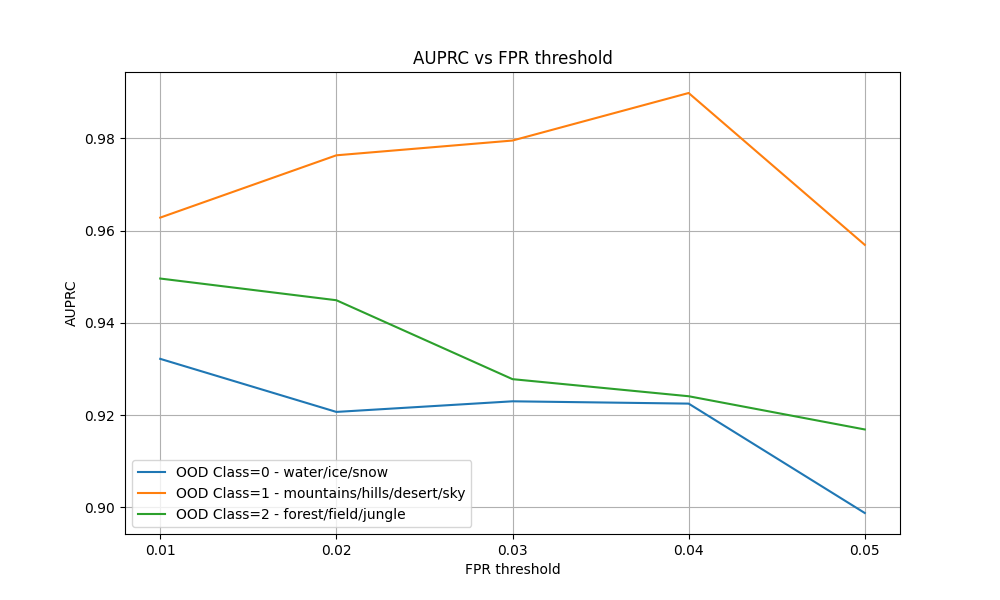}
  \caption{}
  \label{fig:auprc_vs_fpr}
\end{subfigure}%
\begin{subfigure}{.33\textwidth}
  \centering
  \includegraphics[width=1.1\linewidth]{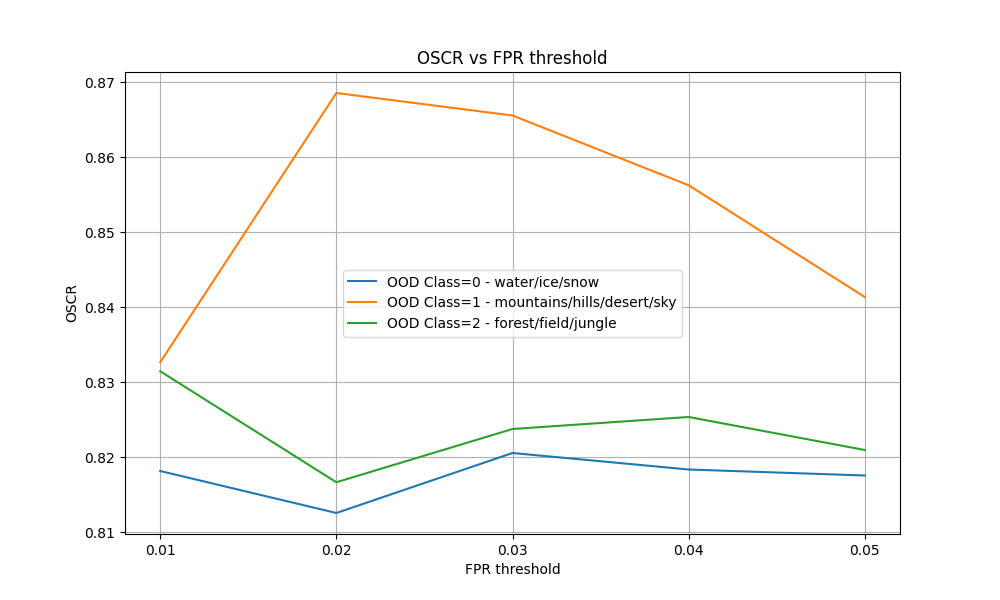}
  \caption{}
  \label{fig:oscr_vs_fpr}
\end{subfigure}%
\begin{subfigure}{.33\textwidth}
  \centering
  \includegraphics[width=1.1\linewidth]{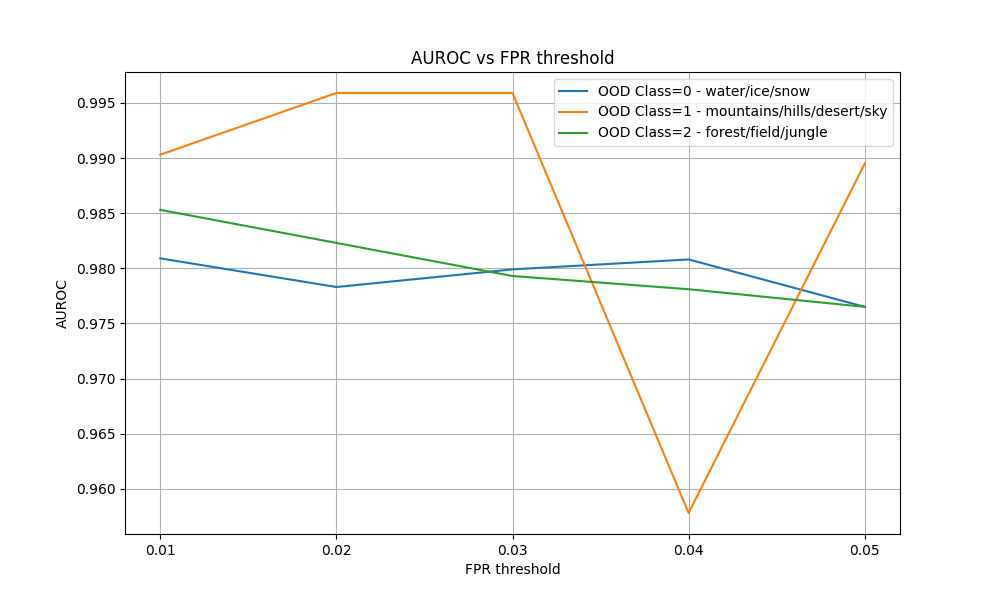}
  \caption{}
  \label{fig:auroc_vs_fpr}
\end{subfigure}
\caption{Effects of varying the FPR threshold on \ours{} method performance on SUN397 dataset.}
\label{fig:vary_fpr}
\end{figure*}

\begin{table*}[h!]
\centering
\caption{AUPRC scores with \& w/o joint learning for novelty detection on CIFAR100 dataset with distribution shift.\\
}
\label{table:abs_cifar100_w_shift_auprc}
\resizebox{\textwidth}{!}{%
\begin{tabular}{c | c || c | c | c | c | c | c | c | c } 
 \hline
 Novel Class & $\alpha$ & \multicolumn{8}{c}{Absolute AUPRC} \\
 \cline{3-10}
  &  & DD w/o joint learning & \DDnew{} & uPU w/o joint learning & \uPUnew{} & nnPU w/o joint learning & \nnPUnew{} & CoLoR w/o joint learning & \ours \\
 \hline\hline
 Baby & $0.23\pm0.05$ & $0.69\pm0.10$ & $0.77\pm0.04$ & $0.63\pm0.10$ & $0.64\pm0.10$ & $0.61\pm0.08$ & $0.61\pm0.11$ & $0.65\pm0.13$ & $\boldsymbol{0.80\pm0.08}$ \\ 
 Tulip & $0.29\pm0.12$ & $0.41\pm0.14$ & $0.39\pm0.14$ & $0.27\pm0.11$ & $0.25\pm0.04$ & $0.26\pm0.12$ & $0.29\pm0.10$ & $0.50\pm0.16$ & $\boldsymbol{0.53\pm0.16}$ \\
 Crocodile & $0.23\pm0.05$ & $0.58\pm0.08$ & $0.64\pm0.04$ & $0.47\pm0.09$ & $0.44\pm0.12$ & $0.47\pm0.11$ & $0.42\pm0.11$ & $0.61\pm0.15$ & $\boldsymbol{0.68\pm0.10}$ \\
 Dolphin & $0.29\pm0.12$ & $\boldsymbol{0.65\pm0.12}$ & $\boldsymbol{0.65\pm0.12}$ & $0.50\pm0.06$ & $0.49\pm0.08$ & $0.50\pm0.08$ & $0.45\pm0.11$ & $\boldsymbol{0.65\pm0.12}$ & $0.64\pm0.19$ \\
 Man & $0.23\pm0.05$ & $0.59\pm0.19$ & $0.64\pm0.13$ & $0.31\pm0.04$ & $0.41\pm0.14$ & $0.28\pm0.12$ & $0.46\pm0.13$ & $0.68\pm0.09$ & $\boldsymbol{0.79\pm0.07}$ \\ [1ex] 
 \hline
\end{tabular}
}
\end{table*}

\begin{table*}[h!]
\centering
\caption{AUPRC scores with \& w/o joint learning for novelty detection on Amazon Reviews dataset with distribution shift.
}
\label{table:abs_amazon_reviews_w_shift_auprc}
\resizebox{\textwidth}{!}{%
\begin{tabular}{c | c || c | c | c | c | c | c | c | c } 
 \hline
 Novel Class & $\alpha$ & \multicolumn{8}{c}{Absolute AUPRC} \\
 \cline{3-10}
  &  & DD w/o joint learning & \DDnew{} & uPU w/o joint learning & \uPUnew{} & nnPU w/o joint learning & \nnPUnew{} & CoLoR w/o joint learning & \ours \\
 \hline\hline
 Musical Instruments & $0.16$ & $0.37\pm0.09$ & $0.40\pm0.1$ & $0.34\pm0.09$ & $0.45\pm0.13$ & $0.34\pm0.09$ & $0.45\pm0.13$ & $0.48\pm0.11$ & $\boldsymbol{0.55\pm0.13}$ \\ 
 Digital Music & $0.05$ & $0.28\pm0.07$ & $0.29\pm0.09$ & $0.23\pm0.10$ & $0.24\pm0.12$ & $0.23\pm0.10$ & $0.24\pm0.12$ & $0.33\pm0.13$ & $\boldsymbol{0.35\pm0.15}$ \\
 Software & $0.06$ & $0.36\pm0.05$ & $0.37\pm0.11$ & $0.31\pm0.13$ & $0.36\pm0.08$ & $0.31\pm0.13$ & $0.36\pm0.08$ & $0.48\pm0.11$ & $\boldsymbol{0.49\pm0.12}$ \\
 Luxury Beauty & $0.07$ & $0.27\pm0.15$ & $0.25\pm0.08$ & $0.28\pm0.12$ & $0.35\pm0.11$ & $0.28\pm0.12$ & $0.35\pm0.11$ & $0.35\pm0.14$ & $\boldsymbol{0.36\pm0.13}$ \\
 Industrial \& Scientific & $0.12$ & $0.13\pm0.01$ & $0.14\pm0.03$ & $0.11\pm0.02$ & $0.16\pm0.05$ & $0.11\pm0.02$ & $0.16\pm0.05$ & $0.17\pm0.02$ & $\boldsymbol{0.21\pm0.08}$ \\ [1ex] 
 \hline
\end{tabular}
}
\end{table*}


\begin{table}[h!]
\centering
\caption{MSP/Entropy based OOD detection on SUN397 with low $\alpha=0.05\pm0.01$ on three types of novel categories}
\label{table:MSP_sun397_low_ratio}
\begin{tabular}{|c|c|c|c|c|c|c|}
\hline
\multirow{2}{*}{} & \multicolumn{2}{c|}{ [water, ice, snow, etc.]} & \multicolumn{2}{c|}{[mountains, hills, desert, sky, etc.]} & \multicolumn{2}{c|}{[forest, field, jungle, etc.]} \\ \cline{2-7} 
                  & AUROC        & AUPRC        & AUROC        & AUPRC        & AUROC        & AUPRC        \\ \hline
\multicolumn{7}{|c|}{With Shift}                                                                           \\ \hline
MSP               & 0.57 ± 0.05  & 0.05 ± 0.01  & 0.58 ± 0.06  & 0.05 ± 0.01  & 0.56 ± 0.05  & 0.07 ± 0.03  \\ \hline
Entropy           & 0.58 ± 0.07  & 0.08 ± 0.04  & 0.58 ± 0.08  & 0.07 ± 0.03  & 0.56 ± 0.07  & 0.10 ± 0.06  \\ \hline
\multicolumn{7}{|c|}{Without Shift}                                                                          \\ \hline
MSP               & 0.59 ± 0.05  & 0.04 ± 0.00  & 0.59 ± 0.05  & 0.05 ± 0.01  & 0.58 ± 0.07  & 0.06 ± 0.03  \\ \hline
Entropy           & 0.59 ± 0.07  & 0.08 ± 0.02  & 0.59 ± 0.08  & 0.08 ± 0.04  & 0.59 ± 0.10  & 0.09 ± 0.03  \\ \hline
\end{tabular}
\end{table}

\begin{table}[h!]
\centering
\caption{MSP/Entropy based OOD detection on SUN397 with high $\alpha=0.5\pm0.10$ on three types of novel categories}
\label{table:MSP_sun397_high_ratio}
\begin{tabular}{|c|c|c|c|c|c|c|}
\hline
\multirow{2}{*}{} & \multicolumn{2}{c|}{[water, ice, snow, etc.]} & \multicolumn{2}{c|}{[mountains, hills, desert, sky, etc.]} & \multicolumn{2}{c|}{[forest, field, jungle, etc.]} \\ \cline{2-7} 
                  & AUROC        & AUPRC        & AUROC        & AUPRC        & AUROC        & AUPRC        \\ \hline
\multicolumn{7}{|c|}{With Shift}                                                                           \\ \hline
MSP               & 0.52 ± 0.06  & 0.55 ± 0.05  & 0.61 ± 0.06  & 0.28 ± 0.04  & 0.60 ± 0.08  & 0.50 ± 0.09  \\ \hline
Entropy           & 0.52 ± 0.06  & 0.58 ± 0.05  & 0.63 ± 0.06  & 0.45 ± 0.07  & 0.62 ± 0.09  & 0.67 ± 0.04  \\ \hline
\multicolumn{7}{|c|}{Without Shift}                                                                          \\ \hline
MSP               & 0.59 ± 0.06  & 0.47 ± 0.04  & 0.61 ± 0.08  & 0.27 ± 0.04  & 0.58 ± 0.08  & 0.50 ± 0.05  \\ \hline
Entropy           & 0.59 ± 0.08  & 0.59 ± 0.07  & 0.63 ± 0.10  & 0.41 ± 0.10  & 0.59 ± 0.10  & 0.63 ± 0.09  \\ \hline
\end{tabular}
\end{table}

\begin{table}[h!]
\centering
\caption{Effect of background shift on Top-1 accuracies of remaining baselines. We do not expect any performance reduction of \zoc{} as it is a zero-shot method that does not utilize source data at all, rendering any distribution shift between $\Psource$ and $\Ptarget$ irrelevant. Such models are influenced only by target datasets that drift from their pretraining datasets.}
\resizebox{5cm}{!}{%
\begin{tabular}{c || c | c }
\hline
Methods & w/ DS & w/o DS \\
\hline\hline
\BODA{} (\vit{}) & $0.75\pm0.13$ & $0.94\pm0.00$ \\
\arpl{} (custom default) & $0.75\pm0.03$ & $0.91\pm0.01$ \\
\zoc{} (custom default) & $0.62\pm0.05$ &  $0.59\pm0.00$ \\
\anna{} (custom default) & $0.65\pm0.04$ &  $0.87\pm0.01$ \\
\hline
\end{tabular}
}
\label{table:sun397_remaining_top1}
\end{table}

\begin{table}[h!]
\centering
\caption{\ours{} performance for different target recall constraints for the SUN397 dataset with novel identity as outdoor scenes from the group [water, ice, snow, etc.] and novel class size $\alpha = 0.06\pm0.01$ using ResNet50 model. Note that out of $\|\boldsymbol{\alpha}\|$ novelty detection heads each corresponding to a candidate value $\hat{\alpha}\in\boldsymbol{\alpha}$, we select the head achieving highest recall in the validation set (argmax$_{w^\alpha_{\hat{\alpha}}:\alpha \in \boldsymbol{\alpha}, \hat{\beta}(w^\alpha_{\hat{\alpha}})<\beta} \hat{\alpha}(w^\alpha_{\hat{\alpha}})$). Consequently, the reported AUROC, AUPRC and OSCR performance correspond to that selected head. Scores of the selected novelty detection head for each seed are highlighted in bold below whereas the highest scores amongst all the novelty detection heads for each seed have been underlined. These heads having scores underlined were not selected besides having highest scores for a metric, because they simply did not satisfy the selection criteria, i.e. their validation recall values were not the highest for the corresponding seed among all the novelty detection heads.}
\resizebox{\textwidth}{!}{%
\begin{tabular}{c || c | c | c | c | c || c | c | c | c | c || c | c | c | c | c || c | c | c | c | c }
\hline
\textbf{Target Recall} & \multicolumn{5}{c||}{\textbf{Validation Recall across 5 seeds}} & \multicolumn{5}{c||}{\textbf{AUROC across 5 seeds}} & \multicolumn{5}{c||}{\textbf{AUPRC across 5 seeds}} & \multicolumn{5}{c}{\textbf{OSCR across 5 seeds}} \\ 
\cline{2-21}
 $\hat{\alpha}$ & 0 & 8 & 1057 & 103 & 573 & 0 & 8 & 1057 & 103 & 573 & 0 & 8 & 1057 & 103 & 573 & 0 & 8 & 1057 & 103 & 573 \\
\hline\hline
0.02 & $\boldsymbol{0.18}$ & $0.06$ & $0.10$ & $\boldsymbol{0.08}$ & $0.08$ & $\boldsymbol{0.98}$ & $\underline{0.97}$ & $\underline{0.95}$ & $\boldsymbol{0.99}$ & $0.94$ & $\boldsymbol{0.93}$ & $0.87$ & $\underline{0.87}$ & $\boldsymbol{0.94}$ & $0.43$ & $\boldsymbol{0.82}$ & $0.82$ & $0.75$ & $\boldsymbol{0.84}$ & $0.67$ \\
0.05 & $0.18$ & $0.06$ & $0.00$ & $0.08$ & $0.15$ & $\underline{0.99}$ & $0.96$ & $0.48$ & $\underline{1.00}$ & $\underline{0.99}$ & $\underline{0.94}$ & $0.82$ & $0.05$ & $\underline{0.95}$ & $0.95$ & $\underline{0.83}$ & $0.81$ & $0.25$ & $\underline{0.85}$ & $0.78$ \\
0.10 & $0.17$ & $0.02$ & $\boldsymbol{0.14}$ & $0.06$ & $0.09$ & $0.98$ & $0.82$ & $\boldsymbol{0.95}$ & $\underline{1.00}$ & $0.93$ & $\underline{0.93}$ & $0.13$ & $\boldsymbol{0.85}$ & $\underline{0.96}$ & $0.38$ & $0.81$ & $0.66$ & $\boldsymbol{0.76}$ & $\underline{0.85}$ & $0.65$ \\
0.15 & $0.16$ & $0.10$ & $0.01$ & $0.06$ & $0.04$ & $\underline{0.99}$ & $0.95$ & $0.42$ & $\underline{0.99}$ & $0.95$ & $\underline{0.95}$ & $0.80$ & $0.05$ & $0.92$ & $0.64$ & $\underline{0.84}$ & $0.80$ & $0.23$ & $\underline{0.84}$ & $0.65$ \\
0.20 & $0.12$ & $\boldsymbol{0.12}$ & $0.14$ & $0.05$ & $0.03$ & $0.97$ & $\boldsymbol{0.97}$ & $\underline{0.95}$ & $\underline{0.99}$ & $0.93$ & $0.79$ & $\boldsymbol{0.90}$ & $\underline{0.86}$ & $\underline{0.94}$ & $0.42$ & $0.76$ & $\boldsymbol{0.85}$ & $\underline{0.78}$ & $\underline{0.84}$ & $0.64$ \\
0.25 & $0.07$ & $0.00$ & $0.07$ & $0.03$ & $\boldsymbol{0.18}$ & $\underline{0.99}$ & $0.68$ & $0.47$ & $0.98$ & $\boldsymbol{0.99}$ & $0.83$ & $0.08$ & $0.05$ & $0.80$ & $\boldsymbol{0.97}$ & $0.77$ & $0.52$ & $0.27$ & $0.81$ & $\boldsymbol{0.82}$ \\
0.30 & $0.12$ & $0.01$ & $0.00$ & $0.07$ & $0.00$ & $\underline{0.99}$ & $0.81$ & $0.44$ & $\underline{1.00}$ & $0.86$ & $0.92$ & $0.13$ & $0.05$ & $\underline{0.95}$ & $0.29$ & $0.77$ & $0.65$ & $0.25$ & $\underline{0.85}$ & $0.61$ \\
0.35 & $0.11$ & $0.04$ & $0.02$ & $0.01$ & $0.04$ & $0.97$ & $0.81$ & $0.47$ & $0.84$ & $0.83$ & $0.55$ & $0.13$ & $0.05$ & $0.21$ & $0.25$ & $0.74$ & $0.65$ & $0.26$ & $0.69$ & $0.59$ \\
0.40 & $0.06$ & $0.00$ & $0.09$ & $0.01$ & $0.00$ & $0.94$ & $0.84$ & $0.43$ & $0.87$ & $0.78$ & $0.41$ & $0.16$ & $0.05$ & $0.25$ & $0.18$ & $0.73$ & $0.69$ & $0.23$ & $0.71$ & $0.57$ \\
0.45 & $0.04$ & $0.00$ & $0.00$ & $0.07$ & $0.00$ & $\underline{0.99}$ & $0.74$ & $0.54$ & $0.97$ & $0.79$ & $0.89$ & $0.11$ & $0.06$ & $0.82$ & $0.22$ & $0.77$ & $0.58$ & $0.34$ & $0.82$ & $0.57$ \\
\hline
\end{tabular}%
}
\label{table:varying_target_recall}
\end{table}

\begin{figure*}[t]
    \centering
    \caption{
    Effect of density of search grid $\boldsymbol{\alpha}$ on the performance of \ours{}. We consider the range [0.02, 0.45] to search the candidate $\alpha$ and vary the density (i.e. number of candidate $\alpha$s) in the same interval). We report AUROC, AUPRC, OSCR, and Top-1 Accuracy for different novel class groupings. We use \resnet{} features here that are pretrained on ImageNet1K.}
    \label{fig:alpha_density_ablation}

    \begin{subfigure}[t]{0.48\textwidth}
        \centering
        \includegraphics[width=\linewidth]{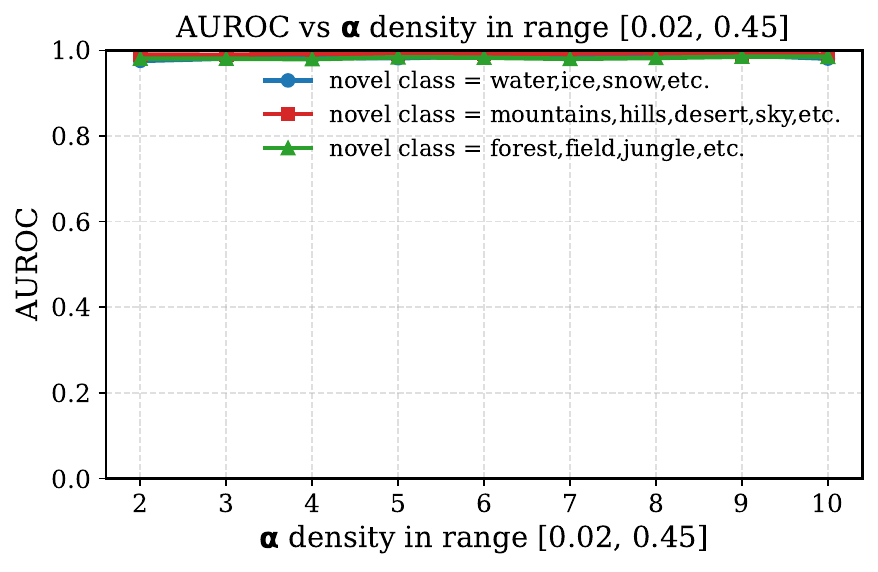}
        \label{fig:alpha_auroc}
    \end{subfigure}
    \hfill
    \begin{subfigure}[t]{0.48\textwidth}
        \centering
        \includegraphics[width=\linewidth]{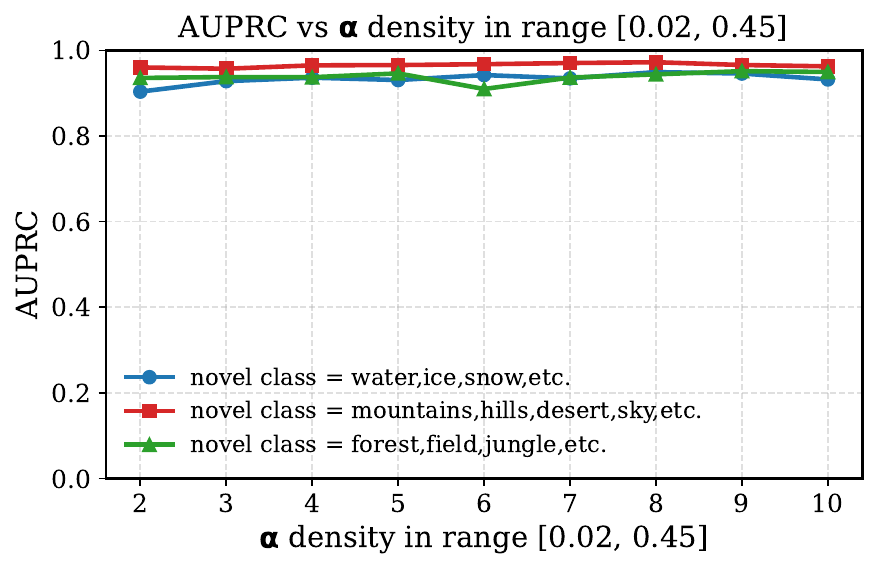}
        \label{fig:alpha_auprc}
    \end{subfigure}

    \vspace{0.5em}

    \begin{subfigure}[t]{0.48\textwidth}
        \centering
        \includegraphics[width=\linewidth]{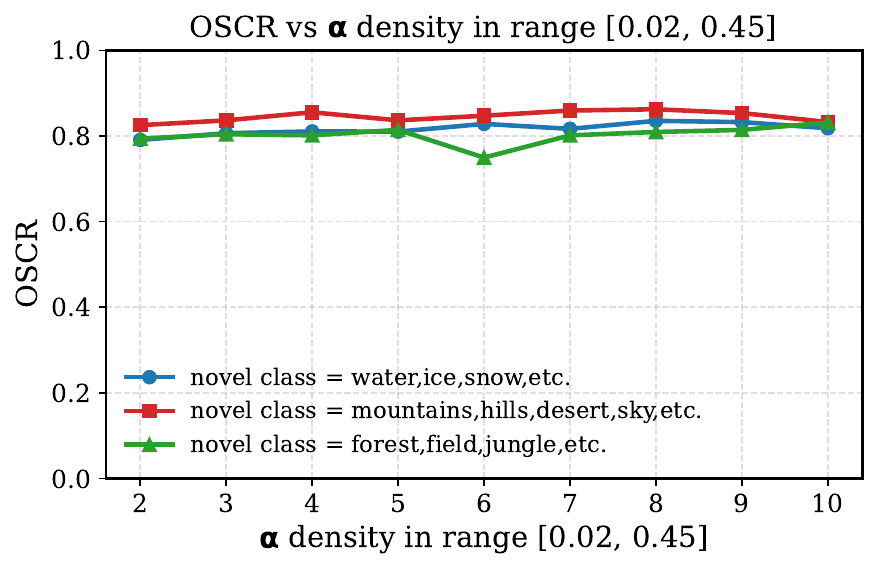}
        \label{fig:alpha_oscr}
    \end{subfigure}
    \hfill
    \begin{subfigure}[t]{0.48\textwidth}
        \centering
        \includegraphics[width=\linewidth]{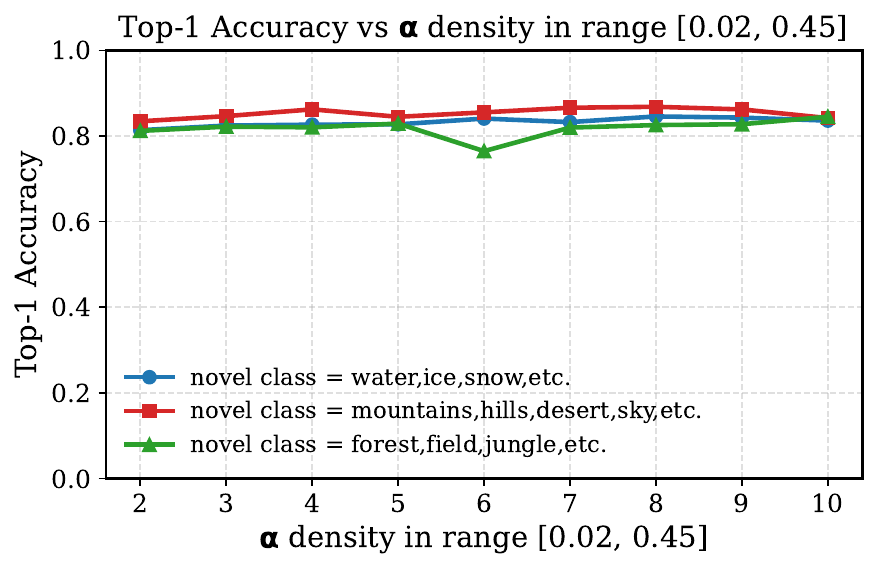}
        \label{fig:alpha_accuracy}
    \end{subfigure}

\end{figure*}

\begin{table*}[h!]
\centering
\caption{Effect of density of search grid $\boldsymbol{\alpha}$ on the performance of \ours{}. We consider the range [0.02, 0.45] to search the candidate $\alpha$ and vary the density (i.e. number of candidate $\alpha$s) in the same interval. 
We report AUROC, AUPRC, OSCR, and Top-1 Accuracy for different novel class groupings: (Group 0: [water, ice, snow, etc.]; Group 1: [mountains, hills, desert, sky, etc.]; Group 2:[forest, field, jungle, etc.]). We use \resnet{} features here that are pretrained on ImageNet1K.}
\label{tab:alpha_density_ablation}
\setlength{\tabcolsep}{6pt}
\renewcommand{\arraystretch}{1.15}
\begin{tabular}{c||c|c|c||c|c|c||c|c|c||c|c|c}
\hline
& \multicolumn{12}{c}{\textbf{Performance Metrics across 3 novel class groups.}} \\
\cline{2-13}
$\boldsymbol{\alpha}$ &
\multicolumn{3}{c||}{AUROC} &
\multicolumn{3}{c||}{AUPRC} &
\multicolumn{3}{c||}{OSCR} &
\multicolumn{3}{c}{Top-1 Acc.} \\
\cline{2-13}
density & 0 & 1 & 2 & 0 & 1 & 2 & 0 & 1 & 2 & 0 & 1 & 2 \\
\hline\hline
2  & 0.98 & 0.99 & 0.98 & 0.90 & 0.96 & 0.94 & 0.79 & 0.83 & 0.79 & 0.81 & 0.83 & 0.81 \\
3  & 0.98 & 0.99 & 0.98 & 0.93 & 0.96 & 0.94 & 0.81 & 0.84 & 0.80 & 0.82 & 0.85 & 0.82 \\
4  & 0.98 & 0.99 & 0.98 & 0.94 & 0.97 & 0.94 & 0.81 & 0.86 & 0.80 & 0.83 & 0.86 & 0.82 \\
5  & 0.98 & 0.99 & 0.98 & 0.93 & 0.97 & 0.95 & 0.81 & 0.84 & 0.81 & 0.83 & 0.84 & 0.83 \\
6  & 0.99 & 0.99 & 0.98 & 0.94 & 0.97 & 0.91 & 0.83 & 0.85 & 0.75 & 0.84 & 0.86 & 0.76 \\
7  & 0.98 & 0.99 & 0.98 & 0.93 & 0.97 & 0.94 & 0.82 & 0.86 & 0.80 & 0.83 & 0.87 & 0.82 \\
8  & 0.99 & 0.99 & 0.98 & 0.95 & 0.97 & 0.94 & 0.84 & 0.86 & 0.81 & 0.85 & 0.87 & 0.83 \\
9  & 0.99 & 0.99 & 0.99 & 0.95 & 0.97 & 0.95 & 0.83 & 0.85 & 0.81 & 0.84 & 0.86 & 0.83 \\
10 & 0.98 & 0.99 & 0.99 & 0.93 & 0.96 & 0.95 & 0.82 & 0.83 & 0.83 & 0.84 & 0.84 & 0.85 \\
\hline
\end{tabular}
\end{table*}

\begin{table*}[h!]
\centering
\caption{Effect of range of search grid $\boldsymbol{\alpha}$ on the performance of \ours{}. We keep the density (i.e. number of candidate $\alpha$s in the search interval) constant to 10 and vary the range. 
We report AUROC, AUPRC, OSCR, and Top-1 Accuracy for different novel class groupings: (Group 0: [water, ice, snow, etc.]; Group 1: [mountains, hills, desert, sky, etc.]; Group 2:[forest, field, jungle, etc.]). We use \resnet{} features here that are pretrained on ImageNet1K.}
\label{tab:alpha_range_ablation}
\begin{tabular}{c||c|c|c||c|c|c||c|c|c||c|c|c}
\hline
& \multicolumn{12}{c}{\textbf{Performance Metrics across 3 novel class groups.}} \\
\cline{2-13}
$\boldsymbol{\alpha}$ &
\multicolumn{3}{c||}{AUROC} &
\multicolumn{3}{c||}{AUPRC} &
\multicolumn{3}{c||}{OSCR} &
\multicolumn{3}{c}{Top-1 Acc.} \\
\cline{2-13}
range & 0 & 1 & 2 & 0 & 1 & 2 & 0 & 1 & 2 & 0 & 1 & 2 \\
\hline\hline
$\left[0.001, 0.01\right]$ & 0.98 & 0.99 & 0.98 & 0.90 & 0.96 & 0.94 & 0.79 & 0.83 & 0.79 & 0.81 & 0.83 & 0.81 \\
$\left[0.01, 0.1\right]$ & 0.98 & 0.99 & 0.98 & 0.93 & 0.96 & 0.94 & 0.81 & 0.84 & 0.80 & 0.82 & 0.85 & 0.82 \\
$\left[0.1, 1.0\right]$ & 0.98 & 0.99 & 0.98 & 0.94 & 0.97 & 0.94 & 0.81 & 0.86 & 0.80 & 0.83 & 0.86 & 0.82 \\
$\left[0.2, 0.45\right]$ & 0.97 & 0.98 & 0.99 & 0.59 & 0.87 & 0.95 & 0.72 & 0.77 & 0.83 & 0.75 & 0.79 & 0.85 \\
$\left[0.3, 0.45\right]$ & 0.98 & 0.99 & 0.99 & 0.70 & 0.95 & 0.94 & 0.75 & 0.78 & 0.76 & 0.77 & 0.78 & 0.77 \\
$\left[0.4, 0.45\right]$ & 0.95 & 0.99 & 0.98 & 0.58 & 0.95 & 0.87 & 0.71 & 0.78 & 0.75 & 0.75 & 0.79 & 0.77 \\
$\left[0.5, 1.0\right]$ & 0.96 & 0.99 & 0.98 & 0.69 & 0.96 & 0.90 & 0.74 & 0.77 & 0.76 & 0.77 & 0.78 & 0.77 \\
$\left[0.6, 1.0\right]$ & 0.97 & 0.99 & 0.99 & 0.61 & 0.96 & 0.97 & 0.75 & 0.77 & 0.76 & 0.78 & 0.78 & 0.76 \\
$\left[0.7, 1.0\right]$ & 0.97 & 0.99 & 0.99 & 0.61 & 0.92 & 0.91 & 0.74 & 0.78 & 0.72 & 0.77 & 0.78 & 0.73 \\
$\left[0.8, 1.0\right]$ & 0.97 & 0.99 & 0.99 & 0.63 & 0.92 & 0.96 & 0.74 & 0.78 & 0.73 & 0.76 & 0.78 & 0.73 \\
$\left[0.9, 1.0\right]$ & 0.97 & 0.99 & 0.99 & 0.74 & 0.99 & 0.94 & 0.73 & 0.77 & 0.75 & 0.76 & 0.78 & 0.76 \\
\hdashline
$\left[0.02, 0.45\right]$ (ours) & 0.98 & 0.0.99 & 0.0.99 & 0.93 & 0.96 & 0.95 & 0.82 & 0.83 & 0.83 & 0.84 & 0.84 & 0.85 \\
\hline
\end{tabular}
\end{table*}

\begin{table*}[h!]
\centering
\caption{Impact of joint learning on closed-set classification and novelty detection on SUN397 across three novel groups. We compare three model variants: (i) \source{}, using only closed-set classification heads ($w^c$); (ii) \conoc{}, using only novelty detection heads ($w^\alpha$) learned via constrained optimization; and (iii) the full \ours{} model trained with a multitask objective for both classification and novelty detection. All methods are evaluated on $\datatarget$ using ImageNet-pretrained \resnet{} features.}
\label{table:joint_learning_ablation_resnet}
\resizebox{\textwidth}{!}{%
\centering
\begin{tabular}{c || c || c | c | c || c }
\hline
Metric & Method & \multicolumn{3}{c||}{Novel Classes (natural outdoor scenes/places)} &  \\
\cline{3-5}
 & & $\alpha$ $=0.06\pm0.01$ & $\alpha$ $=0.06\pm0.01$ & $\alpha$ $=0.08\pm0.04$ & \\
 \cline{3-5}
 & & [water, ice, snow, etc.] & [mountains, hills, desert, sky, etc.] & [forest, field, jungle, etc.] & Summary \\
\hline\hline
\multirow{2}{*}{AUROC} & \conoc{} & $0.96\pm0.02$ & $0.97\pm0.04$ & $0.97\pm0.02$ &                                  $0.96\pm0.03$ \\
                       & \ours{} & $\boldsymbol{0.98\pm0.02}$ & $\boldsymbol{0.98\pm0.01}$ & $\boldsymbol{0.98\pm0.02}$ & $\boldsymbol{0.98\pm0.02}$ \\
\hline
\multirow{2}{*}{AUPRC} & \conoc{} & $0.77\pm0.11$ & $0.78\pm0.21$ & $0.87\pm0.08$ &                                  $0.80\pm0.14$ \\
                       & \ours{} & $\boldsymbol{0.92\pm0.04}$ & $\boldsymbol{0.93\pm0.05}$ & $\boldsymbol{0.89\pm0.15}$ & $\boldsymbol{0.91\pm0.09}$ \\
\hline
\multirow{2}{*}{Top-1 Acc.} & \source{} & $0.71\pm0.03$ & $0.73\pm0.05$ & $0.72\pm0.03$ &                                $0.72\pm0.03$ \\
                            & \ours{} & $\boldsymbol{0.77\pm0.11}$ & $\boldsymbol{0.78\pm0.21}$ & $\boldsymbol{0.87\pm0.08}$ & $\boldsymbol{0.80\pm0.14}$ \\
\hline
OSCR & \ours{} & $\boldsymbol{0.82\pm0.03}$ & $\boldsymbol{0.81\pm0.03}$ & $\boldsymbol{0.81\pm0.05}$ & $\boldsymbol{0.81\pm0.04}$\\
\hline
\end{tabular}
}

\end{table*}

\begin{table*}[h!]
\centering
\caption{Impact of joint learning on closed-set classification and novelty detection on SUN397 across three novel groups. We compare three model variants: (i) \source{}, using only closed-set classification heads ($w^c$); (ii) \conoc{}, using only novelty detection heads ($w^\alpha$) learned via constrained optimization; and (iii) the full \ours{} model trained with a multitask objective for both classification and novelty detection. All the methods are evaluated on $\datatarget$ and using pretrained CLIP \vit{} features.}
\label{table:joint_learning_ablation_vit}
\resizebox{\textwidth}{!}{%
\centering
\begin{tabular}{c || c || c | c | c || c }
\hline
Metric & Method & \multicolumn{3}{c||}{Novel Classes (natural outdoor scenes/places)} &  \\
\cline{3-5}
 & & $\alpha$ $=0.06\pm0.01$ & $\alpha$ $=0.06\pm0.01$ & $\alpha$ $=0.08\pm0.04$ & \\
 \cline{3-5}
 & & [water, ice, snow, etc.] & [mountains, hills, desert, sky, etc.] & [forest, field, jungle, etc.] & Summary \\
\hline\hline
\multirow{2}{*}{AUROC} & \conoc{} & $\boldsymbol{0.99\pm0.01}$ & $\boldsymbol{0.99\pm0.00}$ &                        $\boldsymbol{0.99\pm0.00}$ & $\boldsymbol{0.99\pm0.01}$ \\
                       & \ours{} & $\boldsymbol{0.99\pm0.02}$ & $\boldsymbol{0.99\pm0.01}$ & $\boldsymbol{0.99\pm0.01}$ & $\boldsymbol{0.99\pm0.01}$ \\
\hline
\multirow{2}{*}{AUPRC} & \conoc{} & $\boldsymbol{0.97\pm0.02}$ & $\boldsymbol{0.96\pm0.03}$ &                       $\boldsymbol{0.97\pm0.02}$ & $\boldsymbol{0.97\pm0.02}$ \\
                       & \ours{} & $0.95\pm0.04$ & $0.95\pm0.01$ & $0.96\pm0.03$ & $0.95\pm0.03$ \\
\hline
\multirow{2}{*}{Top-1 Acc.} & \source{} & $0.97\pm0.01$ & $0.97\pm0.01$ & $0.96\pm0.01$ &                                $0.97\pm0.01$ \\
                            & \ours{} & $\boldsymbol{0.98\pm0.01}$ & $\boldsymbol{0.98\pm0.00}$ & $\boldsymbol{0.97\pm0.01}$ & $\boldsymbol{0.98\pm0.01}$ \\
\hline
OSCR & \ours{} & $\boldsymbol{0.96\pm0.02}$ & $\boldsymbol{0.97\pm0.01}$ & $\boldsymbol{0.96\pm0.01}$ & $\boldsymbol{0.96\pm0.01}$\\
\hline
\end{tabular}
}

\end{table*}

\begin{figure}[t]
    \centering
    \caption{
        Synthetic experiment setup following the \cref{def:overparam_setting} to measure novelty detection performance as a function of angular separation between $\mu$ and $\eta$ vectors, $\theta$, (Top) and $r_\eta$ (Bottom). We measure AUROC (Left) and AUPRC (Right). Here, $\alpha=0.15$, $d=3000$, $\sigma=1/d$, $r_\mu=0.1$. We train on 2000 samples and test on 3000 samples.
    }
    \label{fig:theta_r_eta_sweeps}

    \begin{subfigure}[t]{0.48\linewidth}
        \centering
        \includegraphics[width=\linewidth]{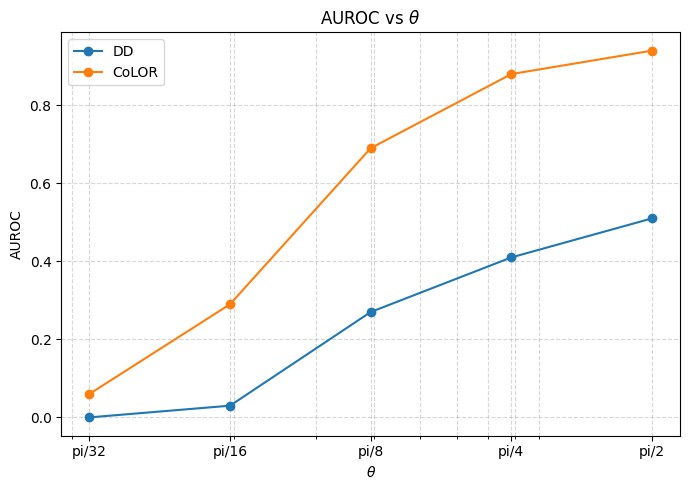}
        \caption{AUROC vs $\theta$}
        \label{fig:auroc_theta}
    \end{subfigure}
    \hfill
    \begin{subfigure}[t]{0.48\linewidth}
        \centering
        \includegraphics[width=\linewidth]{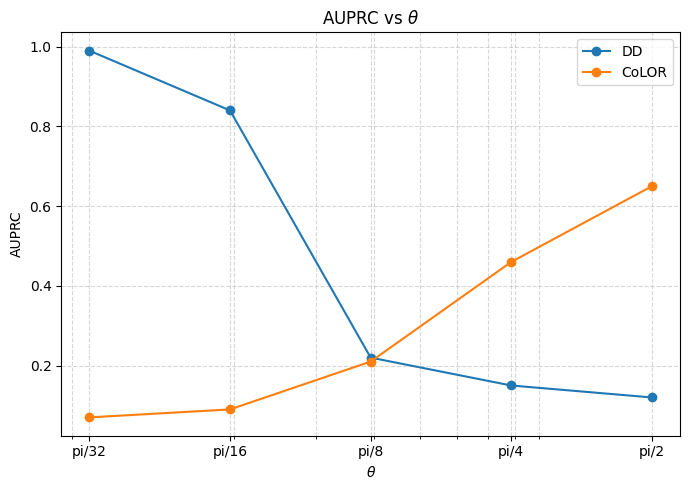}
        \caption{AUPRC vs $\theta$}
        \label{fig:auprc_theta}
    \end{subfigure}

    \vspace{0.8em}

    \begin{subfigure}[t]{0.48\linewidth}
        \centering
        \includegraphics[width=\linewidth]{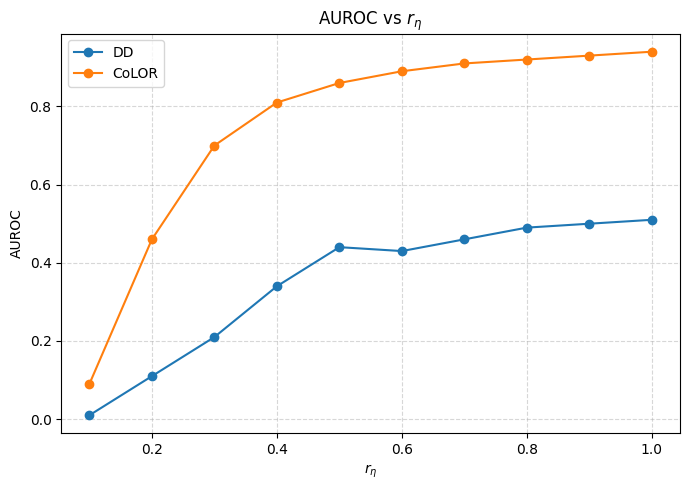}
        \caption{AUROC vs $r_\eta$}
        \label{fig:auroc_r_eta}
    \end{subfigure}
    \hfill
    \begin{subfigure}[t]{0.48\linewidth}
        \centering
        \includegraphics[width=\linewidth]{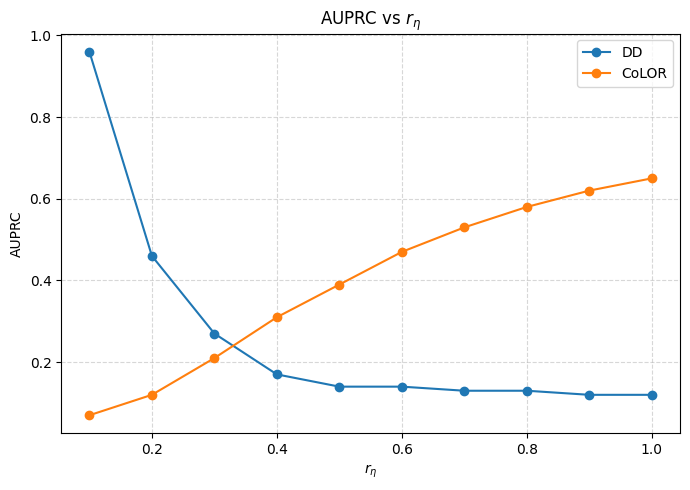}
        \caption{AUPRC vs $r_\eta$}
        \label{fig:auprc_r_eta}
    \end{subfigure}

\end{figure}

\begin{table}[t]
\centering
\caption{Novelty detection performance as a function of angular separation between $\mu$ and $\eta$ vectors, $\theta$, in synthetic experiment setup following the \cref{def:overparam_setting}. Here, $\alpha=0.15$, $d=3000$, $\sigma=1/d$, $r_\mu=0.1$, $r_\eta=1.0$. We train on 2000 samples and test on 3000 samples.}
\label{tab:theta_sweep}
\begin{tabular}{c||cc|cc}
\hline
\multirow{2}{*}{$\theta$}
& \multicolumn{2}{c|}{DD}
& \multicolumn{2}{c}{CoLOR} \\
\cline{2-5}
& AUROC & AUPRC & AUROC & AUPRC \\
\hline
$\pi/2$   & 0.51 & 0.12 & 0.94 & 0.65 \\
$\pi/4$   & 0.41 & 0.15 & 0.88 & 0.46 \\
$\pi/8$   & 0.27 & 0.22 & 0.69 & 0.21 \\
$\pi/16$  & 0.03 & 0.84 & 0.29 & 0.09 \\
$\pi/32$  & 0.00 & 0.99 & 0.06 & 0.07 \\
\hline
\end{tabular}
\end{table}

\begin{table}[t]
\centering
\caption{Novelty detection performance as a function of $r_\eta$ in synthetic experiment setup following the \cref{def:overparam_setting}. $\theta=\frac{\pi}{2}$ is the angular separation between $\mu$ and $\eta$ vectors. Here, $\alpha=0.15$, $d=3000$, $\sigma=1/d$, $r_\mu=0.1$. We train on 2000 samples and test on 3000 samples.}
\label{tab:rz_sweep}
\begin{tabular}{c||cc|cc}
\hline
\multirow{2}{*}{$r_\eta$}
& \multicolumn{2}{c|}{DD}
& \multicolumn{2}{c}{CoLOR} \\
\cline{2-5}
& AUROC & AUPRC & AUROC & AUPRC \\
\hline
1.0 & 0.51 & 0.12 & 0.94 & 0.65 \\
0.9 & 0.50 & 0.12 & 0.93 & 0.62 \\
0.8 & 0.49 & 0.13 & 0.92 & 0.58 \\
0.7 & 0.46 & 0.13 & 0.91 & 0.53 \\
0.6 & 0.43 & 0.14 & 0.89 & 0.47 \\
0.5 & 0.44 & 0.14 & 0.86 & 0.39 \\
0.4 & 0.34 & 0.17 & 0.81 & 0.31 \\
0.3 & 0.21 & 0.27 & 0.70 & 0.21 \\
0.2 & 0.11 & 0.46 & 0.46 & 0.12 \\
0.1 & 0.01 & 0.96 & 0.09 & 0.07 \\
\hline
\end{tabular}
\end{table}

\subsection{Analysis of toy example}
We have $N_{\mathcal{S}}=|\datasource|$, $N_{\mathcal{T},0} = (1-\alpha) |\datatarget|$, $N_{\mathcal{T},1} = \alpha |\datatarget|$ for examples from $\Psource = \N(\mu, \sigma (I_d-r_{\eta}^{-2}\eta\eta^\top))$, $P_{\mathcal{T}, 0} = \N(-\mu, \sigma(I_d-r_{\eta}^{-2}\eta\eta^\top)$ and $P_{\mathcal{T}, 1} =\N(\eta, \sigma (I_d - r^{-2}_{\mu}\mu\mu^\top))$ respectively. Here, $r_{\mu} = \| \mu \|$ and $r_{\eta}$ respectively, we set $\eta^\top \mu = 0$ and let the novelties and non-novel data lie in orthogonal subspaces to each other. This is for simplicity of analysis, and to mimic the case where at optimality we can perfectly detect the novelties.

Consider a linear model $\w \in{\reals^d}$, i.e. its bianry predictions are $\mathbf{1}_{\w^\top \rvx > 0}$ and the scores are $\w^\top \rvx$. We are interested in the AU-ROC obtained w.r.t novelty detection, i.e. distinguishing $P_{\mathcal{T}, 0}$ and  $P_{\mathcal{T}, 1}$. This is measured by $Q(\frac{\langle \w, 2\mu - \eta \rangle}{\| \w \|})$ where $Q$ is the Gaussian tail function.

We will compare two learning rules. Taking $\ell$ as the logistic loss, we are interested in:
\begin{align*}
\min_{w\in{\mathbb{R}^d}}{\sum_{\x_i\in \datasource}} \ell (w^\top x_i, 0) + \sum_{\x_i\in{\datatarget}} \ell (w^\top \tilde{x}_i, 1),
\end{align*}
and
\begin{align*}
\min_{w\in{\mathbb{R}^d}}{\sum_{\x_i\in{\datatarget}} \ell (w^\top \tilde{x}_i, 1)} ~\text{s.t. }~ w^\top x_i < 0  ~ \forall \x_i\in{\datasource}
\end{align*}

\subsection{Synthetic Experiment}
To study the impact of separability between novel class and known classes, we conduct a synthetic experiment following the theoretical setup defined in definition 2 of the main draft as it allows us to explicitly control separability. In this study, we consider a simple Gaussian mixture setup and vary the angular separation between known and novel class means as well as the absolute distance of the novel class mean ($r_\eta$) to gradually vary the novel class separability. This enables us to directly examine how CoLOR’s performance changes as the separability assumption is increasingly violated. Please refer to \cref{fig:theta_r_eta_sweeps}, \cref{tab:theta_sweep} and \cref{tab:rz_sweep} for results and further details.

\begin{center}  
  {\tt {\textbackslash}usepackage[accepted]\{tmlr\}}. 
\end{center}

\end{document}